\tikzset{
  on layer/.code={
    \pgfonlayer{#1}\begingroup
    \aftergroup\endpgfonlayer
    \aftergroup\endgroup
  },
  box/.style={draw, rectangle, minimum size=3.5mm, inner sep=0pt, fill=white},
  arc/.style={
    {Circle[open, length=2mm, width=2mm, sep=-1mm]}-,
    on layer=background,
    every node/.style={on layer=main} 
  },
  double arc/.style={
    {Circle[open, length=2mm, width=2mm, sep=-1mm]}-{Circle[open, length=2mm, width=2mm, sep=-1mm]},
    on layer=background,
    every node/.style={on layer=main}
  }
}
\makeatother \usepackage{aliascnt}
    \newtheorem{theorem}{Theorem}
    \newaliascnt{definition}{theorem}
    \newtheorem{definition}[definition]{Definition}
    \newaliascnt{lemma}{theorem}
    \newtheorem{lemma}[lemma]{Lemma}
    \newaliascnt{remark}{theorem}
\title{Dialectics for Artificial Intelligence}
\author{Zhengmian Hu\\Adobe Research}
\date{December 2025}
\begin{document}

\maketitle

\begin{abstract}
Can artificial intelligence discover, from raw experience and without human supervision, concepts that humans have discovered? 
One challenge is that human concepts themselves are \emph{fluid}: conceptual boundaries can shift, split, and merge as inquiry progresses (e.g., Pluto is no longer considered a planet). To make progress, we need a definition of ``concept'' that is not merely a dictionary label, but a structure that can be revised, compared, and aligned across agents.

We propose an algorithmic-information viewpoint that treats a concept as an \emph{information object} defined only through its structural relation to an agent's total experience. The core constraint is \emph{determination}: a set of parts forms a reversible consistency relation if any missing part is recoverable from the others (up to the standard logarithmic slack in Kolmogorov-style identities). This reversibility prevents ``concepts'' from floating free of experience and turns concept existence into a checkable structural claim. To judge whether a decomposition is \emph{natural}, we define \emph{excess information}, measuring the redundancy overhead introduced by splitting experience into multiple separately described parts.

On top of these definitions, we formulate \emph{dialectics} as an optimization dynamics: as new patches of information appear (or become contested), competing concepts bid to explain them via shorter conditional descriptions, driving systematic expansion, contraction, splitting, and merging. Finally, we formalize low-cost concept transmission and multi-agent alignment using small \emph{grounds}/seeds that allow another agent to reconstruct the same concept under a shared protocol, making communication a concrete compute-bits trade-off.
\end{abstract}
 \section{Introduction}

To build artificial intelligence, it is useful to look at human intelligence as a reference point: not as an ideal to imitate in every detail, but as evidence that certain mechanisms are possible. A human individual, and still more a human society, learns from finite and partial experience. Yet from that limited stream it gradually constructs a reusable system of concepts (objects such as ``water'', ``hand'', ``animal'', ``cause'', ``price'') that can be applied again and again to explain what is observed, to predict what will happen next, and to guide action.

We still lack satisfactory answers to several basic questions about this process.

Why do concepts arise at all, rather than remaining a list of disconnected episodes? Why, without any central coordinator, do independent people often end up with highly similar concepts that they can mutually recognize? And when experience accumulates and explanations are revised, why do concepts sometimes split, merge, and get renamed?

The first phenomenon is not a rare philosophical curiosity. It is a common empirical fact. Human groups on different continents, with different languages and customs, spontaneously form concepts corresponding to stable regularities in the world. Many cultures, for example, converge on something like ``water'': a kind of stuff that can be drunk, flows, freezes, boils, and so on. Even without a shared education system, people tend to aggregate a family of similar experiences into the same concept and then treat that concept as something that exists.

At the same time, it is worth resisting the opposite extreme: that concepts are simply ``out there'' with fixed boundaries. In the history of inquiry, high-level concepts repeatedly change their boundaries and even their identities. Ancient Greek astronomy, for a time, used different names for the ``morning star'' (Phosphorus / Eosphorus) and the ``evening star'' (Hesperus), and only later unified them as one object (nowadays Venus). Similar episodes of merging and boundary revision recur throughout science: electric versus magnetic, wave versus particle, and many others. Concepts are not static dictionary entries, they are reshaped by competing explanations and by the pressure to predict more. If concepts are both spontaneously generated and continually revised, then we must ask what makes a concept good (new, stable, and reusable) and what makes a concept bad (fragile, misleading, or eventually discarded).

A particularly revealing constraint comes from communication. Good concepts are not only useful for one mind, they are also easy to align across minds. In some everyday cases, alignment is almost costless.

Consider one minimal cooperative scene. Agent $A$ tells agent $B$ that the avocado is edible, intending to change $B$'s behavior (perhaps later $B$ will buy them or pick them without $A$'s guidance). For $A$ to succeed, $A$ must ensure that $B$'s concept for ``avocado'' matches $A$'s. Yet even if $B$ has never seen an avocado, $A$ typically does not need to transmit the full extension (the set of all avocados in the world). A few examples, typical features, or a short description usually suffice for $B$ to reconstruct a concept boundary close enough to $A$'s for practical purposes. This suggests that concept alignment is not the transmission of a huge set, but the transmission of a small seed that grows into the intended concept when placed in a shared world with shared phenomenon.

Cross-lingual transfer makes the same point more sharply. English and Chinese assign different symbols---\textit{horse} and ``\begin{CJK}{UTF8}{gbsn}马\end{CJK}''---to a shared concept, in many contexts a one-line dictionary mapping is enough for alignment. But if one tries to introduce ``horse'' to a culture with no horses in its experience, alignment becomes expensive: one must describe anatomy, behavior, and contrasts with other animals. The bits of information needed to communicate a concept drop dramatically when both sides already possess similar experience and have already abstracted it in similar ways.

These observations push the problem beneath words. We should not ask only how labels correspond across languages, but what kinds of structure can be reproduced between agents with minimal communication when their experiences overlap. What properties make a concept simultaneously (i) useful for prediction and action and (ii) cheap to transmit and re-identify?

This paper proposes a formal viewpoint based on algorithmic information theory. The central move is to treat a ``concept'' as an information object: a finite description that is meaningful only through its structural relation to the agent's total experience. Concretely, we model experience (or a chosen experience record) as a string $I$, and we model candidate concepts as strings that must fit together with $I$ under a strong reversibility requirement. We capture this requirement by a constraint we call a \emph{determination}: a collection of parts forms a determination if each part is effectively recoverable from the others (up to the standard logarithmic slack of Kolmogorov-style identities).

This reversibility condition does two jobs at once. First, it prevents ``concepts'' from floating free: a concept must be tied to the whole by explicit mutual recoverability, so the claim that a concept exists becomes a checkable structural statement rather than a mere naming convention. Second, it gives us a clean place to measure quality. Any lossless split of experience into parts can be judged by how much redundant description it introduces. We formalize this redundancy as \emph{excess information}: roughly, how many extra bits are paid when representing the whole by multiple separately described parts, instead of by the whole itself. Low excess corresponds to a ``natural'' articulation: the parts do not duplicate each other and do not introduce artifacts beyond what was already present.

With these pieces, concept formation, evolution, and alignment can be treated within one computational frame: optimization. We call the resulting optimization dynamics \emph{dialectics}. At any time we have a feasible determination-based decomposition of experience into concepts, as new experience arrives (or as attention shifts to a previously neglected patch), concepts compete to explain the contested information. The concept that can provide the shorter conditional description gains the right to absorb that patch, boundaries move, and over time concepts grow, shrink, split, or merge as a consequence of repeated local improvements to a shared description-length objective.

Finally, the same framework gives a precise reading of ``near-costless alignment''. If two agents share the same public reconstruction protocol, then to communicate a concept it can be enough to transmit a small \emph{ground} (a seed or anchor) that pins down the intended side of a decomposition. The receiver can then run the same dialectical growth procedure to reconstruct the concept's extension inside their own experience record. In this way, alignment becomes a concrete communication-computation trade: fewer transmitted bits require more reconstruction work, but both are governed by the same reversible constraints and the same code-length criterion.

\paragraph{Contributions.}
The paper develops this program in four steps:
\begin{enumerate}
\item We give a structural definition of concepts as parts in a reversible consistency constraint (\emph{determination}), turning ``concept existence'' into a testable information-theoretic condition.
\item We introduce a unified criterion for evaluating concept decompositions via \emph{excess information}, measuring the redundancy cost of splitting in description-length terms.
\item We formulate \emph{dialectics} as an optimization dynamics: concepts compete to explain new experience by shortening conditional descriptions, yielding an operational mechanism for concept growth and boundary revision.
\item We formalize low-cost concept transmission and multi-agent alignment using small \emph{grounds}/seeds that allow another agent to reconstruct (and therefore point to) the same concept under a shared protocol.
\end{enumerate}

The guiding idea throughout is simple: concepts should be understood not as fixed labels, but as compact, reproducible descriptions that remain tied to experience by reversible constraints, and whose boundaries are stabilized by the pressure to compress and predict.
 \section{Background on Kolmogorov Complexity}
\label{sec:kolmogorov-background}

Kolmogorov complexity formalizes the intuition that an object is \emph{simple} if it admits a short effective description and \emph{complex} if it does not. Formally, $K(x)$ is the length of the shortest program that outputs $x$. The notion is computational (descriptions are effective programs) and simultaneously a compression measure (it is an idealized minimal code length).  This section summarizes the definitions and basic properties we will use \citep{Kolmogorov1965,Solomonoff1964,LiVitanyi2019}.

\subsection{Notation and Conventions}

We work with finite binary strings $x \in \{0,1\}^\ast$. Let $|x|$ denote the bit-length of $x$, and $\log$ denote $\log_2$.
We fix a standard computable pairing function $\langle x,y\rangle \in \{0,1\}^\ast$ that is bijective and efficiently decodable, define $(x,y)$ as shorthand for $\langle x,y\rangle$.

Algorithmic-information identities often hold up to small additive terms. We write
\[
a \stackrel{+}{=} b
\quad\Longleftrightarrow\quad
a = b \pm O(\log (n+1)),
\]
where $n$ is any upper bound on the relevant complexities in the expression (e.g., $n = \max\{K(x),K(y)\}$). Similarly, $a \stackrel{+}{\le} b$ means $a \le b + O(\log (n+1))$. When a stronger $O(1)$ slack holds, we will state it explicitly.
The $O(\log(n+1))$ slack should be understood asymptotically, uniformly over a family of instances for which the relevant complexities grow. Concretely, when we use a bound such as $K(x) \le f(x) + O(\log K(x))$, we implicitly fix a (usually infinite) family of strings $\{x\}$ and require that there exists a family of effective description methods whose additional overhead, beyond $f(x)$, is bounded by $c_1\log K(x)+c_2$ for some constants $c_1, c_2$ and for all $x$ in the family.

\subsection{Prefix-Free Kolmogorov Complexity}

Let $U$ be a fixed \emph{universal prefix-free Turing machine} (a universal machine whose set of halting programs is prefix-free).
The \emph{(prefix-free) Kolmogorov complexity} of $x$ is
\[
K(x) \;:=\; \min\{\,|p| \;:\; U(p)=x\,\}.
\]
The \emph{conditional} complexity of $x$ given $y$ is
\[
K(x\mid y) \;:=\; \min\{\,|p| \;:\; U(p,y)=x\,\},
\]
where $y$ is provided to $U$ as an auxiliary input (``side information'').
We define the \emph{joint} complexity
\[
K(x,y) \;:=\; K(\langle x,y\rangle),\qquad
K(x,y\mid z) \;:=\; K(\langle x,y\rangle \mid z).
\]
Let $x^\ast$ denote a shortest (prefix-free) program for $x$, i.e., $|x^\ast|=K(x)$ and $U(x^\ast)=x$.

\paragraph{Invariance theorem (machine-independence).}
If $U$ and $U'$ are two universal prefix-free machines, then there exists a constant $c$ (depending only on $U,U'$) such that for all $x$,
\[
|K_U(x)-K_{U'}(x)| \le c,
\qquad
|K_U(x\mid y)-K_{U'}(x\mid y)| \le c.
\]
Thus, after fixing a reference universal machine, $K(\cdot)$ is well-defined up to an additive constant \citep{LiVitanyi2019}.

\subsection{Basic Bounds and Incompressibility}

A string can always be described by itself plus a constant-size decoder, hence
\[
K(x) \stackrel+\le |x|.
\]
A counting argument shows that most strings are incompressible:
for any $n$ and any $c\ge 1$, at least a $(1-2^{-c})$ fraction of strings $x\in\{0,1\}^n$ satisfy
\[
K(x) \ge n-c.
\]
Such strings have no description significantly shorter than listing all their bits \citep{LiVitanyi2019}.

\subsection{Uncomputability and Upper Semicomputability}

A central fact is that $K(\cdot)$ is not computable. If we could compute any $K(x)$ exactly, we could decide nontrivial halting properties by searching for minimal programs, contradicting the undecidability of the halting problem \citep{LiVitanyi2019}.

What \emph{is} possible is approximation from above:
$K(x)$ is \emph{upper semicomputable} (a.k.a.\ computably enumerable from above).
There exists an effective procedure that enumerates programs $p$ in increasing length, runs them dovetailed, and whenever it finds a program producing $x$ it updates an upper bound on $K(x)$. Hence we can always obtain \emph{constructive upper bounds} on complexity by exhibiting an explicit description/program, but we cannot in general certify optimality (i.e., prove matching lower bounds).

\subsection{Chain Rules and Symmetry of Information}

Kolmogorov complexity obeys ``chain rules'' analogous to Shannon information, with logarithmic slack \citep{Shannon1948,CoverThomas2006}:
\begin{alignat*}{2}
    K(x,y) &= K(x) + K(y\mid x^\ast)+O(1) &{}&= K(y) + K(x\mid y^\ast)+O(1)\\
           &\stackrel{+}{=} K(x) + K(y\mid x) &{}&\stackrel{+}{=} K(y) + K(x\mid y).
\end{alignat*}
A useful corollary is a form of \emph{symmetry of information}:
\begin{alignat*}{2}
    K(x) + K(y) - K(x,y) &= K(x) - K(x\mid y^\ast)+O(1) &{}&= K(y) - K(y\mid x^\ast)+O(1)\\
    &\stackrel{+}{=} K(x) - K(x\mid y) &{}&\stackrel{+}{=} K(y) - K(y\mid x).
\end{alignat*}
These identities formalize the idea that the ``overlap'' between $x$ and $y$ is (approximately) symmetric \citep{LiVitanyi2019}.

\subsection{Algorithmic Mutual Information and Conditional Mutual Information}

The \emph{algorithmic mutual information} between $x$ and $y$ is defined by
\[
I(x;y) \;:=\; K(x) + K(y) - K(x,y).
\]
Equivalently (by symmetry of information),
\[
I(x;y) \stackrel{+}{=} K(x) - K(x\mid y) \stackrel{+}{=} K(y) - K(y\mid x).
\]
Heuristically, $I(x;y)$ measures how many bits of description can be shared between $x$ and $y$ when they are described jointly. When $I(x;y)\approx 0$ (up to logarithmic terms), $x$ and $y$ are algorithmically close to independent.

The \emph{conditional mutual information} given $z$ is
\[
I(x;y\mid z) \;:=\; K(x\mid z) + K(y\mid z) - K(x,y\mid z).
\]
Conditional mutual information is nonnegative up to logarithmic slack:
\[
I(x;y\mid z) \stackrel{+}{\ge} 0.
\]

\paragraph{Data processing (algorithmic form).}
If $f$ is a total computable function, then processing cannot create algorithmic information:
\[
I(f(x);y \mid f) \stackrel{+}{\le} I(x;y \mid f),
\qquad
I(f(x);y\mid z, f) \stackrel{+}{\le} I(x;y\mid z, f).
\]
This is the algorithmic analogue of Shannon's data processing inequality \citep{LiVitanyi2019,Shannon1948,CoverThomas2006}.

\subsection{Connection to Probability, Compression, and MDL}

Although $K(x)$ is defined via programs, it is tightly connected to coding and probabilistic modeling.

\paragraph{Universal a priori probability and the coding theorem.}
Define the universal semimeasure
\[
m(x) \;:=\; \sum_{p:\,U(p)=x} 2^{-|p|}.
\]
Because the domain of $U$ is prefix-free, Kraft's inequality ensures $\sum_x m(x)\le 1$~\citep{CoverThomas2006}.
The \emph{coding theorem} states
\[
K(x) = -\log m(x)+O(1).
\]
Thus, short descriptions correspond to high universal probability mass, making $K(x)$ an (idealized) optimal code length \citep{Levin1974,LiVitanyi2019}.

\paragraph{Relating $K$ to any computable model.}
Let $P$ be a computable probability mass function over $\{0,1\}^\ast$ (or over $\{0,1\}^n$ for fixed $n$). Then for all $x$,
\[
K(x) \stackrel{+}{\le} -\log P(x) + K(P).
\]
Interpretation: to describe $x$, one may first describe the model $P$ and then encode $x$ using an optimal Shannon code for $P$ \citep{Shannon1948,CoverThomas2006}. This inequality is the algorithmic-information backbone of the Minimum Description Length (MDL) principle: choose the model that minimizes
\[
K(P) - \log P(x),
\]
or its prequential variant when $x$ is a sequence \citep{Rissanen1978,Grunwald2007,Dawid1984}.

\subsection{How We Work with Upper Bounds in Practice}

Because $K(\cdot)$ is uncomputable, most algorithmic-information arguments proceed by:
(i) fixing a reference description language (a universal machine),
(ii) constructing explicit programs/codes to obtain upper bounds on $K(\cdot)$ and $K(\cdot\mid\cdot)$,
and (iii) using general inequalities (chain rules, data processing, incompressibility) to relate these bounds.
In applications, concrete compressors, probabilistic models, or learning systems act as effective description methods, providing computable upper bounds that stand in for the ideal $K$. \section{Algorithmic Parity Structures (Determinations)}
\label{sec:determination}

Many settings can be viewed as a form of \emph{general information segmentation}: we take some information and represent it by several ``parts'' (or ``components'') without committing to any particular modality or geometry.
For example, a ``segmentation'' may mean selecting a subset from a collection, decomposing a vector into multiple summands, partitioning a 2D/3D signal into regions, carving out text spans inside a long document, or separating a mixed audio recording into sources.
What matters here is not the domain-specific meaning of a part, but a shared structural requirement: there is a short piece of \emph{glue} program that makes the parts fit together in a reversible, consistency-preserving way.

Concretely, the glue should support two operations.
First, given the glue together with \emph{all} parts, one can recover the whole joint information they represent (equivalently, the representation is lossless).
Second, given the glue and \emph{all but one} part, one can recover the missing part.
In the special case of bits, XOR parity is exactly such a glue: it is a constant-size rule that lets any missing bit be reconstructed from the others.

We formulate an algorithmic-information analogue of this pattern for arbitrary binary strings, measuring the intrinsic cost of the glue by its Kolmogorov description length.
We call the resulting object an \emph{algorithmic parity structure}, or simply a \emph{determination}.

\subsection{From XOR to Algorithmic Parity}

For bits, the $(n+1)$-th \emph{parity bit} $x_{n+1}=x_1\oplus\cdots\oplus x_n$ has the property that for every $i\in\{1,\dots,n+1\}$ the missing bit $x_i$ is determined by the remaining $n$:
\[
x_i \;=\; x_1\oplus\cdots\oplus x_{i-1}\oplus x_{i+1}\oplus\cdots\oplus x_{n+1}.
\]
Equivalently, there is a single very short program $p_{\oplus}$ such that for every $i$, given $i$ and all bits except $x_i$, the universal machine can recover $x_i$:
\[
U(p_{\oplus}, i, x_1,\dots,x_{i-1},x_{i+1},\dots,x_{n+1}) \;=\; x_i.
\]
An algorithmic parity structure is the direct generalization where the bits are replaced by arbitrary strings and the XOR law is replaced by an arbitrary computable relation, but we ask that a \emph{single} program works uniformly for all missing positions~\citep{LiVitanyi2019}.

\subsection{Definition}
\label{sec:determination-def}

Let $x_1,\dots,x_n$ be finite binary strings.  For each $i$, write $x_{-i}$ for the $(n-1)$-tuple obtained by deleting $x_i$:
\[
x_{-i}\;:=\;(x_1,\dots,x_{i-1},x_{i+1},\dots,x_n).
\]
In the following definition, we will also optionally allow an auxiliary side-information string $z$ (``context'').

\begin{definition}[Algorithmic parity structure / determination]
\label{def:determination}
Fix $n\ge 2$.
An $n$-tuple $x_{1:n}$ is called an \emph{algorithmic parity structure} (a.k.a.\ a \emph{determination}) \emph{relative to} context $z$
if each component is algorithmically recoverable from the remaining components:
\begin{equation}
\forall i\in[n],\qquad
K(x_i \mid x_{-i}, z)\;\stackrel{+}{=}\;0.
\label{eq:det-mutual-recovery}
\end{equation}

We refer to each component $x_i$ as a \emph{concept} (a ``piece'' of information) participating in this determination.
\end{definition}

A pair $(A,B)$ forms a determination iff
\[
K(A\mid B)\stackrel{+}{=}0
\quad\text{and}\quad
K(B\mid A)\stackrel{+}{=}0.
\]
In this case $A$ and $B$ are algorithmically equivalent (they encode the same information up to $O(\log)$ terms).

We visualize a determination by a small square, called an \emph{algorithmic parity node} (or \emph{determination node}).
Each incident edge is labeled by a binary string, representing a participating \emph{concept}.
For $n=2$, the node has two incident edges.

\begin{figure}[h]
\centering
\begin{tikzpicture}
    \node[box] (N) at (0,0) {};
    \draw (N.west) -- ++(-0.8, 0) node[left] {$A$};
    \draw (N.east) -- ++(0.8, 0) node[right] {$B$};
\end{tikzpicture}
\caption{A 2-way determination between concepts $(A,B)$: each concept is recoverable from the other. The square denotes an algorithmic parity node / determination node, each edge carries a binary string (a concept).}
\label{fig:determination-2}
\end{figure}
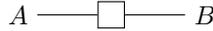

A triple $(A,B,C)$ forms a determination relative to $z$ iff knowing any two concepts (together with $z$)
allows reconstructing the third. This can be visualized as a 3-way relation.

\begin{figure}[h]
\centering
\begin{tikzpicture}
    \node[box] (N) at (0,0) {};
    \draw (N.north) -- ++(0, 0.6)   node[above] {$A$};
    \draw (N.south west) -- ++(-0.6, -0.6) node[below left] {$B$};
    \draw (N.south east) -- ++(0.6, -0.6)  node[below right] {$C$};
\end{tikzpicture}
\caption{A 3-way determination among concepts $(A,B,C)$: any two concepts determine the remaining one.}
\label{fig:determination}
\end{figure}
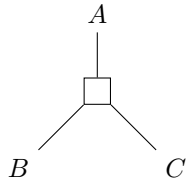

In case recoverability only holds given some shared side information $z$, we visualize such context by attaching a small arc labeled $z$ to the node, indicating that the same $z$ is provided to every reconstruction procedure associated with this node.

\begin{figure}[h]
\centering
\begin{tikzpicture}
    \node[box] (A) at (0,0) {};

    \draw[arc] (A.north) -- ++(0, 0.6)   node[above] {$z$};
    \draw (A.west) -- ++(-0.6, 0) node[left] {$A$};
    \draw (A.east) -- ++(0.6, 0) node[right] {$B$};
\end{tikzpicture}
\caption{A determination between $(A,B)$ given a shared context $z$ (attached to the determination node with an additional arc).}
\label{fig:determination-2-context}
\end{figure}
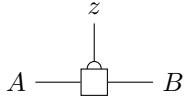

This arc notation reflects the fact that ``conditioning on $z$'' can be \emph{materialized} by duplicating $z$ as extra components. For example, the statement that $(A,B)$ forms a determination relative to $z$,
\[
K(A\mid B,z)\stackrel{+}{=}0
\quad\text{and}\quad
K(B\mid A,z)\stackrel{+}{=}0,
\]
is equivalent to the statement that there is unconditional determination on the 4-tuple $(A,B,z,z)$, since each of $A$ and $B$ can be recovered given the other and $z$, while each copy of $z$ is trivially recoverable from the remaining copy. More generally, an $n$-way determination relative to $z$ can be viewed as an unconditional determination on $(x_1,\dots,x_n, z, z)$, so that the context is effectively ``copied'' into two incident edges.

\subsection{Determinations as Constraints: One-Variable Completion}
\label{sec:det-one-var-opt}

In this paper we will repeatedly treat an algorithmic parity structure not only as a property to check, but as a \emph{constraint} that needs to be \emph{enforced} in optimization. A basic instance is \emph{one-variable completion}: we fix all but one component and optimize the remaining component's description length subject to the determination constraints.

Concretely, fix an integer $n\ge 2$, and fixed strings $x_{1:n}=(x_1,\dots,x_n)$. We introduce a single decision variable $x_{n+1}\in\{0,1\}^\ast$ and consider the constrained optimization problem
\begin{equation}
\min_{x_{n+1}\in\{0,1\}^\ast}\; K(x_{n+1})
\quad\text{s.t.}\quad
\begin{cases}
K(x_{n+1}\mid x_{1:n})\stackrel{+}{=}0,\\[2pt]
\forall i\in[n],\; K(x_i\mid x_{-i}, x_{n+1})\stackrel{+}{=}0.
\end{cases}
\label{eq:one-var-completion-opt}
\end{equation}
The constraint system in \eqref{eq:one-var-completion-opt} is exactly the requirement that
$(x_1,\dots,x_n,x_{n+1})$ forms a determination. We denote an optimal completion (i.e., a minimizer) of \eqref{eq:one-var-completion-opt} by $x_{n+1}^{\star}$, and the optimal value by $K(x_{n+1}^{\star})$.

\paragraph{Trivial feasibility.}
The feasible set of \eqref{eq:one-var-completion-opt} is never empty. For example, setting $x_{n+1}:=(x_1,\dots,x_n)$ satisfies the constraints, hence
\[
K(x_{n+1}^{\star})\stackrel{+}{\le}K(x_{1:n}).
\]
The role of the objective in \eqref{eq:one-var-completion-opt} is precisely to select, among all feasible completions, one whose description length $K(x_{n+1})$ is as small as possible.

\paragraph{Trivial lower bound.}
For any feasible choice of $x_{n+1}$ in~\eqref{eq:one-var-completion-opt}, each original component $x_i$ must be recoverable from the remaining components and $x_{n+1}$. By the standard subadditivity of conditional complexity,
\[
K(x_i \mid x_{-i})
\stackrel{+}{\le}
K\bigl(x_i \mid x_{-i}, x_{n+1}\bigr)
+ K\bigl(x_{n+1} \mid x_{-i}\bigr)
\stackrel{+}{=}
K\bigl(x_{n+1} \mid x_{-i}\bigr)
\stackrel{+}{\leq}
K\bigl(x_{n+1}\bigr),
\]
where the second step uses the feasibility constraint
$K(x_i \mid x_{-i}, x_{n+1})\stackrel{+}{=}0$. Taking the maximum over $i$ yields the necessary condition
\begin{equation}
K(x_{n+1})
\stackrel{+}{\ge}
\max_{i\in[n]} K(x_i \mid x_{-i}).
\label{eq:one-var-lb}
\end{equation}

This provides an information-theoretic lower bound on the objective in~\eqref{eq:one-var-completion-opt}.

\paragraph{Tight upper bound.}

The remaining question is whether the lower bound $\max_{i\in[n]} K(x_i \mid x_{-i})$ is effectively attainable. We now state that, it is.

\begin{theorem}[One-variable completion cost]
\label{prop:one-var-completion}
For any fixed $n\ge 2$ and strings $x_{1:n}$, the optimal value
of the constrained problem~\eqref{eq:one-var-completion-opt} is $\max_{i\in[n]} K(x_i \mid x_{-i})$, up to logarithmic slack. Equivalently, there exists a string $x_{n+1}$ such that
\begin{align}
K(x_{n+1}) &\;\stackrel{+}{\le} \max_{i\in[n]} K(x_i \mid x_{-i}),
\label{eq:one-var-ub}\\[3pt]
K(x_{n+1}\mid x_{1:n}) &\stackrel{+}{=} 0,
\qquad
\forall i\in[n],\;
K(x_i\mid x_{-i}, x_{n+1})\stackrel{+}{=}0.
\label{eq:one-var-feasibility}
\end{align}
\end{theorem}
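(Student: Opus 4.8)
The lower bound is already in hand: \eqref{eq:one-var-lb} shows $K(x_{n+1})\stackrel{+}{\ge}m$ for \emph{every} feasible completion, where $m:=\max_{i\in[n]}K(x_i\mid x_{-i})$, and the trivial completion $x_{n+1}=x_{1:n}$ shows the feasible set is nonempty. So the entire task is to exhibit \emph{one} feasible $x_{n+1}$ with $K(x_{n+1})\stackrel{+}{\le}m$; combined with \eqref{eq:one-var-lb} this pins the optimal value of \eqref{eq:one-var-completion-opt} at $m$ up to logarithmic slack and yields \eqref{eq:one-var-ub}--\eqref{eq:one-var-feasibility}.

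The plan is to reduce this to a multi-source form of Muchnik's conditional-description theorem. Put $w:=\langle x_{1:n}\rangle$. Since each $x_i$ is a fixed projection of $w$ and $w$ is assembled from $x_{-i}$ and $x_i$ in $O(1)$ bits, we have $K(w\mid x_{-i})\stackrel{+}{=}K(x_i\mid x_{-i})\le m$ for all $i$. So it suffices to produce a single string $p$ with: (i) $|p|\stackrel{+}{\le}\max_i K(w\mid x_{-i})$; (ii) $K(p\mid w)\stackrel{+}{=}0$; (iii) $K(w\mid x_{-i},p)\stackrel{+}{=}0$ for every $i$. Taking $x_{n+1}:=p$, constraint (ii) gives $K(x_{n+1}\mid x_{1:n})\stackrel{+}{=}0$, constraint (iii) with subadditivity gives $K(x_i\mid x_{-i},x_{n+1})\stackrel{+}{\le}K(x_i\mid w)+K(w\mid x_{-i},p)\stackrel{+}{=}0$, and (i) gives $K(x_{n+1})\le|p|+O(1)\stackrel{+}{\le}m$.

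The real work, and the step I expect to be the main obstacle, is establishing the existence of such a $p$: this is the multi-target strengthening of Muchnik's theorem, that for a target $a$ and any fixed number of side-information strings $b_1,\dots,b_n$ there is $p$ with $|p|\stackrel{+}{\le}\max_j K(a\mid b_j)$, $K(p\mid a)\stackrel{+}{=}0$, and $K(a\mid b_j,p)\stackrel{+}{=}0$ for all $j$ \citep{LiVitanyi2019}. Since we only need it for the fixed $n$ of the theorem, I would either cite it or reprove it by the standard route: fix a bound $N$ on all complexities in play, build a bipartite graph between candidate targets $\{a':K(a')\le N\}$ and the $2^{m+O(\log)}$ potential codewords with polylogarithmic left-degree and strong expansion, and assign codewords \emph{online}, scanning candidates in increasing order of the appropriate conditional complexity so that a given $a'$ is only matched against the $\le 2^{K(a'\mid b_j)+1}$ candidates that are ``at least as simple relative to $b_j$''. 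Expansion guarantees an unsaturated codeword is always free as long as the codeword length exceeds $\max_j K(a\mid b_j)$ by the (constant, since $n$ is fixed) degree overhead; the codeword assigned to $w$ is then computable from $w$ by simulating the assignment, which is (ii), and the bounded load on each codeword lets $(b_j,p)$ isolate $w$ among the candidates mapped to it, which is (iii). The only subtlety beyond the single-source case is arranging the online priority and the load bounds so they hold for all $n$ sources at once; the extra union bound costs $O(1)$ in code length and $O(\log)$ in the reconstruction programs, within the stated slack.

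Finally I would record the verification sketched above, so that \eqref{eq:one-var-ub}--\eqref{eq:one-var-feasibility} hold and, with \eqref{eq:one-var-lb}, the optimal value of \eqref{eq:one-var-completion-opt} equals $\max_{i\in[n]}K(x_i\mid x_{-i})$ up to logarithmic slack. As a sanity check on the bound, when the $x_i$ have equal length and are algorithmically independent one may simply take $x_{n+1}=x_1\oplus\cdots\oplus x_n$, which meets all three conditions with $|x_{n+1}|=\max_i K(x_i\mid x_{-i})$; this is exactly the ``parity bit'' picture the definition of a determination was built to generalize.
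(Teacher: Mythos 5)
Your reduction is sound and the overall plan works, but it takes a genuinely different route from the paper. You translate the problem (via $K(x_{1:n}\mid x_{-i})\stackrel{+}{=}K(x_i\mid x_{-i})$, which is the paper's own observation \eqref{eq:xi_vs_joint_given_view}) into the existence of a single fingerprint $p$ that is short, simple given the joint tuple, and suffices to reconstruct the joint from every view $x_{-i}$ --- i.e.\ a multi-conditional Muchnik/Vit\'anyi-type theorem --- and then propose to cite it or reprove it by the expander/online-matching machinery. The paper instead proves the statement directly and self-containedly: it enumerates all ``$l$-consistent'' tuples (those where each coordinate is computable from the rest by some program of length $\le l$, with $l=\max_i K(x_i\mid x_{-i})$), views them as hyperedges of an $n$-partite hypergraph whose vertices are the projections $x_{-i}$, notes the degree bound $2^l$, greedily edge-colors with a palette of size $n(2^l-1)+1$, and takes $x_{n+1}=\langle l,\text{color}\rangle$; uniqueness of colors at each vertex gives the decoding, and $\log$ of the palette size gives the length bound. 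This is essentially the classical conversion-theorem (edge-coloring) argument generalized to $n$ parts, and it needs no expanders and no external theorem. What your route buys is modularity and a clear link to the literature the paper itself gestures at (Muchnik 2002, Vit\'anyi 2011); what it costs is that the entire content of the theorem is delegated to that multi-conditional statement, so you must be careful which version you invoke: the plain $E_{\max}$ characterization quoted in the paper does not by itself assert your clause (ii), $K(p\mid x_{1:n})\stackrel{+}{=}0$, which is exactly the feasibility constraint $K(x_{n+1}\mid x_{1:n})\stackrel{+}{=}0$, so you need the Muchnik-style form with the ``simple given the target'' condition for several conditions (known, e.g., via Muchnik's two-condition theorem and its extensions), and your expander sketch of that fact is still a sketch rather than a proof. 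Two small points: for fixed $n$ a constant-size palette blowup (as in the paper) already suffices, so the expander machinery is heavier than needed; and in your closing sanity check, independence alone does not give $|x_1\oplus\cdots\oplus x_n|=\max_i K(x_i\mid x_{-i})$ --- you also need the $x_i$ to be incompressible, otherwise the XOR string is longer than the optimum.
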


In words, the minimal additional description length required to ``close'' $x_{1:n}$ into a determination is exactly the largest conditional complexity $K(x_i\mid x_{-i})$ among the components.

\paragraph{Intuition: ``a minimal diff'' that closes the parity node.}
It is helpful to first inspect the simplest nontrivial case $n=2$.
Fix two strings $(A,B)$. The one-variable completion problem asks for the shortest
$x_3$ such that $(A,B,x_3)$ is a determination, i.e.
\[
K(x_3\mid A,B)\stackrel{+}{=}0,\qquad
K(A\mid B,x_3)\stackrel{+}{=}0,\qquad
K(B\mid A,x_3)\stackrel{+}{=}0.
\]
In words, $x_3$ is a ``difference'' that is computable from the pair $(A,B)$
and makes $A$ and $B$ mutually recoverable given that difference. Thus, \Cref{prop:one-var-completion} can be viewed as the $n$-way generalization:
we are constructing a single short ``difference'' string that can complete \emph{any} missing position.

\paragraph{A sharp savings over a naive ``store all recovery programs'' encoding.}
A naive way to enforce determination on $(x_1,\dots,x_n,x_{n+1})$ is to store,
for each $i\in[n]$, a separate program $p_i$ witnessing $K(x_i\mid x_{-i})\le l$.
This would cost about $\sum_i |p_i|\le nl$ bits (ignoring logs), i.e.\ linear in $n$.

By contrast, \Cref{prop:one-var-completion} shows that a \emph{single} string of length
$l$ (up to logarithmic slack) suffices for \emph{all} $i$ simultaneously.

\paragraph{Contextual variant.}
Everything above relativizes to shared side information $z$.
Replacing each conditional complexity $K(\cdot\mid\cdot)$ by $K(\cdot\mid\cdot,z)$,
the same argument yields the existence of a completion $x_{n+1}$ with
\[
K(x_{n+1})\;\stackrel{+}{\le}\;\max_{i\in[n]}K(x_i\mid x_{-i},z),
\]
such that $(x_1,\dots,x_n,x_{n+1})$ forms a determination relative to $z$.

\paragraph{Connection to Muchnik-style multi-conditional descriptions.}
At first glance, \Cref{prop:one-var-completion} seems to involve ``many targets'': for each $i$ we require
$K(x_i\mid x_{-i},x_{n+1})\stackrel{+}{=}0$.
However, this distinction is largely superficial, because for every $i$ we have (up to logarithmic slack)
\begin{equation}
K(x_i\mid x_{-i})\;\stackrel{+}{=}\;K(x_{1:n}\mid x_{-i}).
\label{eq:xi_vs_joint_given_view}
\end{equation}

Thus, the completion string $x_{n+1}$ can be viewed as a \emph{single} auxiliary description that,
when paired with \emph{any} ``side information'' string $x_{-i}$, allows reconstructing the same
underlying object $x_{1:n}$ (and hence also the missing coordinate).
In this sense, \Cref{prop:one-var-completion} fits the template of Muchnik-style multi-conditional description:
a single short string simultaneously supports reconstruction under multiple conditions, with optimal length
governed by the worst-case conditional complexity $\max_i K(x_{1:n}\mid x_{-i})$~\citep{Muchnik2002,LiVitanyi2019}.

\paragraph{Connection to information distance in multiples.}
V{\'i}t{\'a}nyi's information distance for multiples characterizes the optimal length of a \emph{single} auxiliary program that can translate \emph{any} ``view'' $X_i$ into \emph{any} other $X_j$~\citep{Vitanyi2011,LiVitanyi2019}:
\[
E_{\max}(X_{1:m})
\;:=\;
\min\bigl\{|p|:\ \forall i,j,\ U(p,X_i,j)=X_j\bigr\}
\;\stackrel{+}{=}\;
\max_{i\in[m]} K(X_{1:m}\mid X_i).
\]
In our setting, take the $n$ views to be the $(n\!-\!1)$-projections $X_i:=x_{-i}$.
Although the list of all views $X_{1:n}=(x_{-1},\dots,x_{-n})$ is much longer as a raw string, it is
algorithmically equivalent to the joint tuple $x_{1:n}$ (up to $O(\log n)$ bookkeeping for indexing/formatting),
since $x_{1:n}$ trivially generates all $x_{-i}$, and conversely all $x_{-i}$ together determine $x_{1:n}$.
Moreover, for any fixed $i$ and any $j\neq i$, the task of reconstructing another view $X_j=x_{-j}$ from $X_i=x_{-i}$
is (up to $O(1)$) equivalent to reconstructing the missing component $x_i$ from $x_{-i}$.
Therefore our completion string $x_{n+1}$ can be interpreted as precisely such a ``universal translator'':
together with any view $X_i=x_{-i}$, it lets us recover the missing piece $x_i$ and hence the entire joint object,
which in turn yields any other view $X_j$.
Under this identification, \Cref{prop:one-var-completion} matches the multiple-information-distance principle,
and can be viewed as a specialization of V{\'i}t{\'a}nyi's characterization to the view family
$X_i=x_{-i}$~\citep{Vitanyi2011,Bennett1998,LiVitanyi2019}.

\paragraph{Beyond one-variable completion.}
One-variable completion is the cleanest case: the optimal glue cost is exactly a max of conditionals.
If instead we keep \emph{two} components free (e.g.\ fix $x_{1:n-1}$ and optimize a \emph{pair}
$(x_n,x_{n+1})$ jointly under determination constraints), the situation becomes substantially more intricate:
the two variables can trade off information in nontrivial ways.
Such interactions between concepts allow genuinely ``structural'' (and not merely ``diff-like'') determinations to appear during the optimization process.

\subsection{Concept Expansion}
\label{sec:concept-expansion}

Determinations are most useful when they are composed into \emph{networks} rather than studied in isolation. In a network, each determination node expresses a local ``any $n-1$ determine the last'' consistency constraint, and the overall graph encodes how multiple concepts cohere globally. This subsection introduces a basic \emph{local rewrite} of such networks, which we will later interpret as a primitive form of ``concept growth'' (and, in applications, as a mechanism for reallocating contested information).

The smallest nontrivial network is a chain of two 3-way determinations that share a single concept. Concretely, consider two diagrams in \Cref{fig:pivot-move}.

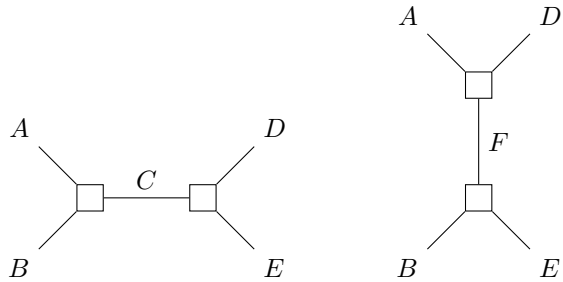
\begin{figure}[h]
\centering
\begin{tikzpicture}
    \node[box] (L) at (0,-2.5) {};
    \node[box] (R) at (1.5,-2.5) {};

    \draw (L.east) -- (R.west) node[midway, above] {$C$};
   
    \draw (L.north west) -- ++(-0.5, 0.5) node[above left] {$A$};
    \draw (L.south west) -- ++(-0.5, -0.5) node[below left] {$B$};
   
    \draw (R.north east) -- ++(0.5, 0.5) node[above right] {$D$};
    \draw (R.south east) -- ++(0.5, -0.5) node[below right] {$E$};
\end{tikzpicture}
\hspace{1.0cm}
\begin{tikzpicture}
    \node[box] (TOP) at (0, 0) {};
    \node[box] (BOT) at (0, -1.5) {};

    \draw (TOP.south) -- (BOT.north) node[midway, right] {$F$};

    \draw (TOP.north west) -- ++(-0.5, 0.5) node[above left] {$A$};
    \draw (TOP.north east) -- ++(0.5, 0.5) node[above right] {$D$};

    \draw (BOT.south west) -- ++(-0.5, -0.5) node[below left] {$B$};
    \draw (BOT.south east) -- ++(0.5, -0.5) node[below right] {$E$};
\end{tikzpicture}
\caption{A \emph{pivot move} for two adjacent 3-way determinations. Left: $(A,B,D,E)$ form a network with two determination nodes that share $C$ as a bridge. Right: same strings $(A,B,D,E)$ form a different network with (usually a different string) $F$ as a bridge.}
\label{fig:pivot-move}
\end{figure}

The right-hand picture is \emph{not} implied by the left-hand one in general. It is an additional property that may or may not hold.

When both diagrams hold, we say that the two-node chain \emph{admits a pivot}, and (as an \emph{interpretive reading}) that \emph{inside $A$, the concept $E$ expands beyond $C$ to become $F$}. By symmetry of determination nodes, the very same configuration can also be read as \emph{inside $B$, the concept $D$ expands beyond $C$ to become $F$}, and likewise for the other symmetric readings. We will freely use whichever phrasing best matches the surrounding interpretation.

\paragraph{Why we call it ``expansion''.}
Suppose the left-hand network in \Cref{fig:expansion-nested} holds with $(A,X,P_1,P_2)$ as external concepts and $I$ as bridge \footnote{Note that $I$ here denotes an information object (a binary string). When $I(\cdot;\cdot)$ or $I(\cdot;\cdot|\cdot)$  appears as a function, it always denotes (algorithmic) mutual information, and we never treat an information object $I$ as a function, so the two uses are unambiguous and do not conflict.}. 
The same network can be redrawn (purely graphically) in \Cref{fig:expansion-nested} as ``$A$ decomposes into $(X,I)$, and then $I$ decomposes into $(P_1,P_2)$''.
If in addition a pivot exists producing a new bridge $P_1'$, the pivoted network can be redrawn in \Cref{fig:expansion-absorbing} as
``$A$ decomposes into $(P_1',P_2)$, and then $P_1'$ decomposes into $(X,P_1)$''.
In this view, $P_1'$ has \emph{absorbed} $X$ together with the old $P_1$.
This is exactly the picture that ``inside $A$, $P_1$ expands beyond $I$ to become $P_1'$''.

\begin{figure}[h]
\centering
\begin{tikzpicture}
    \node[box] (L) at (0,-2.5) {};
    \node[box] (R) at (1.5,-2.5) {};

    \draw (L.east) -- (R.west) node[midway, above] {$I$};
   
    \draw (L.north west) -- ++(-0.5, 0.5) node[above left] {$A$};
    \draw (L.south west) -- ++(-0.5, -0.5) node[below left] {$X$};
   
    \draw (R.north east) -- ++(0.5, 0.5) node[above right] {$P_2$};
    \draw (R.south east) -- ++(0.5, -0.5) node[below right] {$P_1$};
\end{tikzpicture}
\hspace{1.2cm}
\begin{tikzpicture}
    \node[box] (L) at (0,-2.5) {};
    \node[box] (R) at (0.9,-3.4) {};

    \draw (L.south east) -- (R.north west) node[midway, above right] {$I$};
   
    \draw (L.north) -- ++(0, 0.5) node[above] {$A$};
    \draw (L.south west) -- ++(-0.5, -0.5) node[below left] {$X$};
   
    \draw (R.south west) -- ++(-0.5, -0.5) node[below left] {$P_1$};
    \draw (R.south east) -- ++(0.5, -0.5) node[below right] {$P_2$};
\end{tikzpicture}
\caption{Left: a two-node chain where $A$ is linked to $(X,I)$ and $I$ is linked to $(P_1,P_2)$. Right: the same network redrawn to emphasize a two-stage ``decomposition'' reading: $A$ relates to $(X,I)$, and $I$ relates to $(P_1,P_2)$. The two drawings are equivalent.}
\label{fig:expansion-nested}
\end{figure}
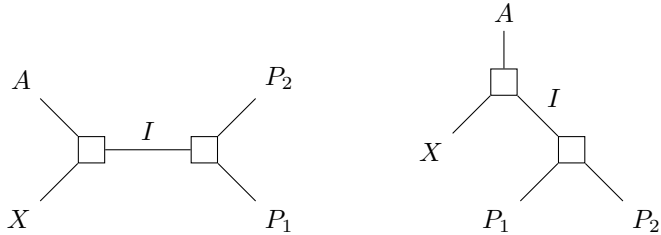
\begin{figure}[h]
\centering
\begin{tikzpicture}
    \node[box] (TOP) at (0, 0) {};
    \node[box] (BOT) at (0, -1.5) {};

    \draw (TOP.south) -- (BOT.north) node[midway, right] {$P_1'$};

    \draw (TOP.north west) -- ++(-0.5, 0.5) node[above left] {$A$};
    \draw (TOP.north east) -- ++(0.5, 0.5) node[above right] {$P_2$};

    \draw (BOT.south west) -- ++(-0.5, -0.5) node[below left] {$X$};
    \draw (BOT.south east) -- ++(0.5, -0.5) node[below right] {$P_1$};
\end{tikzpicture}
\hspace{1.2cm}
\begin{tikzpicture}
    \node[box] (L) at (0,-2.5) {};
    \node[box] (R) at (-0.9,-3.4) {};
    
    \draw (L.south west) -- (R.north east) node[midway, above left] {$P_1'$};
    
    \draw (L.north) -- ++(0, 0.5) node[above] {$A$};
    \draw (L.south east) -- ++(0.5, -0.5) node[below right] {$P_2$};
    
    \draw (R.south west) -- ++(-0.5, -0.5) node[below left] {$X$};
    \draw (R.south east) -- ++(0.5, -0.5) node[below right] {$P_1$};
\end{tikzpicture}
\caption{A pivoted version of \Cref{fig:expansion-nested}. Left: $A$ is now linked to $(P_1',P_2)$, and $P_1'$ is linked to $(X,P_1)$. Right: the same network redrawn to highlight the ``absorption'' interpretation: $P_1$ grows into $P_1'$ by incorporating $X$. The two drawings are equivalent.}
\label{fig:expansion-absorbing}
\end{figure}
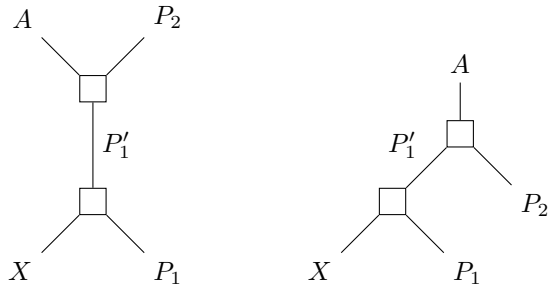

\paragraph{Pivoting is not guaranteed.}
Crucially, the right-hand configuration in \Cref{fig:pivot-move} is \emph{not} implied by the left-hand one in general.
One can build adjacent determinations that share a bridge $C$ but for which \emph{no} pivot concept $F$ exists.

\begin{theorem}[Non-universality of pivot moves]
\label{prop:pivot-not-always}
There exist adjacent $3$-way determinations sharing a bridge concept for which no pivot exists.

More concretely, fix $n$ sufficiently large and let $p=p_n$ be the \emph{first} prime in $[2^n,2^{n+1})$.\footnote{Any choice of a prime $p\in[2^n,2^{n+1})$ with $K(p)=O(\log n)$ would suffice. Choosing the first prime makes $p$ computable from $n$, hence $K(p)=O(\log n)$.}
Represent elements of $\mathbb{F}_p$ as binary strings of length $\lceil\log p\rceil+O(1)$, so that $n$ (hence $p_n$) is computable from the encoding length.

Then there exist
\[
A\in\mathbb{F}_p^\times,\qquad C\in\mathbb{F}_p,\qquad Z\in\mathbb{F}_p^\times
\]
such that
\[
K(A)\stackrel{+}{=}K(C)\stackrel{+}{=}K(Z)\stackrel{+}{=}n,
\qquad
K(A,C,Z)\;\ge\;3n-O(\log n),
\]
and if we define
\[
B:=ZA\in\mathbb{F}_p^\times,
\qquad
D:=Z+C\in\mathbb{F}_p,
\]
then:

\smallskip
\noindent\textbf{(i) (Two-node chain holds.)}
Both triples $(A,B,Z)$ and $(Z,C,D)$ are determinations, i.e.
\[
\forall i\in\{A,B,Z\},\qquad K(i\mid\{A,B,Z\}\setminus\{i\})\stackrel{+}{=}0,
\]
\[
\forall i\in\{Z,C,D\},\qquad K(i\mid\{Z,C,D\}\setminus\{i\})\stackrel{+}{=}0.
\]

\smallskip
\noindent\textbf{(ii) (No pivot.)}
There is \emph{no} binary string $X$ such that \emph{both} $(A,C,X)$ and $(B,D,X)$ are determinations.
Equivalently, there is no $X$ simultaneously satisfying
\[
K(X\mid A,C)\stackrel{+}{=}0,\qquad K(X\mid B,D)\stackrel{+}{=}0,\qquad K(B\mid D,X)\stackrel{+}{=}0,
\]
(and in particular, no $X$ satisfying the full six cross-recovery equalities).
\end{theorem}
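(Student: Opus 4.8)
The plan is to settle (i) by a one-line field computation and then to prove (ii) by an incompressibility/incidence argument that exploits the fact that the additive and multiplicative structures of $\mathbb{F}_p$ are ``unrelated''. For (i): since $A,Z\in\mathbb{F}_p^\times$ we have $B=ZA\in\mathbb{F}_p^\times$, and from any two of $A,B,Z$ the third is one field operation away ($B=ZA$, $A=Z^{-1}B$, $Z=A^{-1}B$); similarly $D=Z+C$, $Z=D-C$, $C=D-Z$. Each such computation needs only $p$, which by construction is recoverable from the encoding length, so all six conditional complexities are $O(\log n)$, i.e.\ $\stackrel{+}{=}0$. The incompressible triple exists by counting, $|\mathbb{F}_p^\times\times\mathbb{F}_p\times\mathbb{F}_p|\ge 2^{3n-2}$, so some triple has $K(A,C,Z)\ge 3n-O(1)$, which together with $K(A),K(C),K(Z)\stackrel{+}{\le}n$ forces $K(A)\stackrel{+}{=}K(C)\stackrel{+}{=}K(Z)\stackrel{+}{=}n$ and joint independence, in particular $K(Z\mid A,C)\stackrel{+}{=}n$.

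For (ii) I would argue by contradiction: assume some $X$ satisfies $K(X\mid A,C)\stackrel{+}{=}0$, $K(X\mid B,D)\stackrel{+}{=}0$, and $K(B\mid D,X)\stackrel{+}{=}0$. The first task is to pin down the ``bit budget'' of $X$. From the independence of $(A,C,Z)$ and the algorithmic equivalence of $(A,C,B,D)$ with $(A,C,Z)$ (legitimate since $A,Z\neq 0$) one computes $K(B,D)\stackrel{+}{=}2n$, $K(B\mid D)\stackrel{+}{=}n$, and $I(A,C;B,D)\stackrel{+}{=}2n+2n-3n=n$. Since $X$ is computable from $(A,C)$, the algorithmic data-processing inequality gives $I(X;B,D)\stackrel{+}{\le}I(A,C;B,D)\stackrel{+}{=}n$; since $X$ is also computable from $(B,D)$, $K(X)\stackrel{+}{=}I(X;B,D)$, so $K(X)\stackrel{+}{\le}n$. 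Using $K(B\mid D,X)\stackrel{+}{=}0$ and $K(B\mid D)\stackrel{+}{=}n$ one gets $K(X)\stackrel{+}{\ge}n$, hence $K(X)\stackrel{+}{=}n$. (All three hypotheses are used here: the upper bound needs the two ``$X$ from $\ldots$'' conditions, the lower bound needs ``$B$ from $(D,X)$''.)

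The heart of the argument is then a level-set/incidence step. Fix $O(\log n)$-bit programs $f$ computing $X$ from $(A,C)$ and $\phi$ computing $B$ from $(D,X)$, and consider $G:=\{z\in\mathbb{F}_p^\times: U(\phi,\langle z+C,\,f(\langle A,C\rangle)\rangle)=zA\}$. It is enumerable from $(A,C)$ alone and contains $Z$, so $K(Z\mid A,C)\stackrel{+}{\le}\log|G|+O(\log n)$, whence $|G|\ge 2^{\,n-O(\log n)}$. Re-indexing by $w=z+C$, the one-argument function $w\mapsto U(\phi,\langle w,X\rangle)$ then agrees with the affine map $L\colon w\mapsto Aw-AC$ on a set $W$ of size $\ge 2^{\,n-O(\log n)}$, i.e.\ on a $1/\mathrm{poly}(n)$ fraction of $\mathbb{F}_p$. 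Let $\mathcal{L}$ be the set of affine maps agreeing with $w\mapsto U(\phi,\langle w,X\rangle)$ on at least $t:=\lfloor 2^n/n^{O(1)}\rfloor\le|W|$ points (a threshold computable from $n$). Since two distinct affine maps agree in $\le 1$ point, double-counting pairs of agreement points gives $|\mathcal{L}|\cdot\binom{t}{2}\le\binom{p}{2}$, hence $|\mathcal{L}|\le\mathrm{poly}(n)$; yet $\mathcal{L}$ is computably enumerable given $X$ and $\phi$ (agreement counts are enumerable from below, which suffices) and $L\in\mathcal{L}$, so $K(L\mid X)=O(\log n)$ and $K(L)\stackrel{+}{\le}K(X)+O(\log n)\stackrel{+}{\le}n+O(\log n)$. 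But $L$ is encoded by $(A,AC)$, which since $A\neq 0$ is algorithmically equivalent to $(A,C)$, so $K(L)\stackrel{+}{=}K(A,C)\stackrel{+}{=}2n$; therefore $2n\stackrel{+}{\le}n+O(\log n)$, impossible for large $n$.

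I expect the incidence step to be the main obstacle: converting ``$\phi$ is a short program that nevertheless mimics a $2n$-bit affine map on a polynomial-density set'' into an actual compression of $(A,C)$ hinges on the combinatorial bound $|\mathcal{L}|\le\mathrm{poly}(n)$, and it is exactly here that the construction's \emph{mixing} of multiplication (in $B=ZA$) and addition (in $D=Z+C$) is used — the fiber of $(B,D)$ becomes the graph of $w\mapsto Aw-AC$, whose slope and intercept jointly determine all of $(A,C)$, whereas for a purely additive pair $(Z+A,Z+C)$ or a purely multiplicative pair $(ZA,ZC)$ the analogous object degenerates and a genuine pivot $X$ does exist. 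A secondary nuisance is the upper-semicomputability bookkeeping: because $U$ may fail to halt, $G$ and $\mathcal{L}$ are only computably enumerable (not decidable) from the relevant side information, but ``the true object lies in the set'' together with the cardinality bounds is all the argument requires.
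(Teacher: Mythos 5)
Your proposal is correct, but for part (ii) it takes a genuinely different route than the paper. The paper follows the well-mixing-graph paradigm: it proves a spectral mixing lemma for the bipartite ``line'' graph via Gauss sums, converts it into a sampler bound, and takes a union bound over all $O(\log n)$-bit programs to pre-select $(A,C)$ as an incompressible line lying \emph{outside} an exceptional set of ``bad lines''; then, for a putative pivot $X$, heaviness of the fiber of $X$ along the true line (obtained exactly as in your argument, by enumerating $Z$'s index in the set you call $G$) forces \emph{global} heaviness of that fiber, so $X$ is one of at most $\mathrm{poly}(n)$ effectively enumerable heavy outputs, giving $K(X)=O(\log n)$ against the lower bound $K(X)\ge n-O(\log n)$. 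You keep the same line-heaviness step but bypass the mixing machinery entirely: along the line, $w\mapsto U(\phi,\langle w,X\rangle)$ coincides with the affine map $w\mapsto Aw-AC$ on $p/\mathrm{poly}(n)$ points, and since two distinct affine maps agree in at most one point, your double count shows only $\mathrm{poly}(n)$ affine maps can be that heavy; they are enumerable given $(X,\phi)$, so $K(A,C\mid X)=O(\log n)$, which together with $K(X)\stackrel{+}{\le}n$ (your data-processing step, which is genuinely needed, since the trivial bound $K(X)\stackrel{+}{\le}2n$ would not suffice) contradicts $K(A,C)\stackrel{+}{=}2n$. What each approach buys: yours is more elementary and self-contained (no character sums, no exceptional-set pre-selection---any incompressible triple works, and the contradiction lands on $K(A,C)$ rather than $K(X)$), but it is tailored to the fact that the fibers here are graphs of affine maps whose two coefficients jointly encode $(A,C)$; the paper's spectral route is heavier but more generic, transferring to other view-graphs with good mixing even when no such algebraic parametrization of the fiber exists. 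Two cosmetic points, neither a gap: the counting space for the incompressible triple should be $\mathbb{F}_p^\times\times\mathbb{F}_p\times\mathbb{F}_p^\times$ (you wrote $\mathbb{F}_p$ for $Z$'s factor), and your threshold $t$ must be fixed using the constants hidden in the $\stackrel{+}{=}$ hypotheses---the same bookkeeping the paper performs when it fixes the constant $k$.
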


\paragraph{Connection to common information.}
In \Cref{fig:pivot-move}, whenever the two adjacent determination nodes hold, the bridge $C$ is
computable from either side: $K(C\mid A,B)\stackrel{+}{=}0$ and $K(C\mid D,E)\stackrel{+}{=}0$.
By data processing, this implies $K(C)\stackrel{+}{\le} I((A,B);(D,E))$, so $C$ can be read as an
\emph{extractable} (materialized) piece of the mutual dependence between $(A,B)$ and $(D,E)$.
This matches the classical separation between common information and mutual information
\citep{gacs-korner-1973-commoninfo} and its algorithmic analogue of \emph{nonmaterializable}
mutual information \citep{romashchenko-2000-nonmaterializable}.
\Cref{prop:pivot-not-always} can be viewed as the same obstruction expressed at the level of pivot moves: a two-node chain may encode substantial dependence,
yet admit no pivot that would re-express it via a new bridge serving as an extractable common
information for the crossed pairs. Our proof follows the well-mixing-graph paradigm, using
spectral mixing to rule out all low-complexity candidate bridges, in the same spirit
as \citet{caillat-grenier-2024-well-mixing}.

Pivoting is best viewed as an \emph{opportunity} rather than a universal identity.
When a pivot exists, it enables a meaningful re-allocation of information between concepts (e.g.\ the ``$P_1$ absorbs $X$'' reading in \Cref{fig:expansion-absorbing}).
When it does \emph{not} exist, the failure itself is informative: it indicates that the two-node chain encodes a kind of dependency that cannot be re-expressed as two determinations sharing a different bridge without injecting substantial new information.

\subsection{Low-Complexity Determinations as Objects of Study}
\label{sec:low-complexity-det}

Previously we treated determination constraints as a \emph{structural} notion (any $n\!-\!1$ determine the last),
and studied how determinations can be completed and sometimes re-wired (pivot moves).
We now specify the main \emph{research object} we care about: determinations whose induced redundancy is small. 

To this end, we consider the multi-variable completion problem introduced above. For concreteness we use the case $n=3$ as a running example: we fix one component $A$ and optimize over the remaining two $(B,C)$ subject to forming a determination, although the definitions and results extend verbatim to general $n$.

\paragraph{A 3-way determination as a ``decomposition'' of information.}
Consider a 3-way determination $\pi=(A,B,C)$,
meaning
\[
K(A\mid B,C)\stackrel{+}{=}0,\qquad
K(B\mid A,C)\stackrel{+}{=}0,\qquad
K(C\mid A,B)\stackrel{+}{=}0.
\]
We read this as: the pair $(B,C)$ is a \emph{lossless representation} of $A$, while $B$ and $C$ are two ``parts'' that are consistent with $A$.

At the level of description lengths, any such representation must pay at least the intrinsic cost of $A$.
Indeed, since $A$ is computable from $(B,C)$,
\begin{equation}
K(A)\ \stackrel{+}{\le}\ K(B,C) \stackrel{+}{\le}\ K(B)+K(C).
\label{eq:BC-lb-A}
\end{equation}

\paragraph{Total complexity and excess information.}
We measure the ``size'' of a triple by the sum of its marginal complexities
\[
\Sigma(\pi)\ :=\ K(A)+K(B)+K(C).
\]
When $A$ is fixed and $\pi$ varies over determinations involving $A$, the term $K(A)$ is a constant,
so the meaningful quantity is the overhead incurred by storing $B$ and $C$ as \emph{separate} concepts.
We define the \emph{excess information} of $\pi$ as
\begin{equation}
\Delta(\pi)\ :=\ \Sigma(\pi)-2K(A)\ =\ K(B)+K(C)-K(A).
\label{eq:excess-information-def}
\end{equation}
By \eqref{eq:BC-lb-A}, $\Delta(\pi)\stackrel{+}{\ge}0$ for every determination $\pi=(A,B,C)$.

\paragraph{Excess information $\Delta$ captures the cost of an ``artificial split representation''.}
If one encodes the pair $(B,C)$ jointly, the right cost is $K(B,C)$.
Encoding them separately costs $K(B)+K(C)$, which wastes (algorithmic) mutual information:
\[
K(B)+K(C)-K(B,C)\ \stackrel{+}{=}\ I(B;C).
\]
Our excess \eqref{eq:excess-information-def} can be decomposed as
\[
\Delta(\pi)
\ =\ \underbrace{\bigl[K(B)+K(C)-K(B,C)\bigr]}_{\stackrel+=I(B;C)}
\ +\ \underbrace{\bigl[K(B,C)-K(A)\bigr]}_{\stackrel+= K(B,C|A)\ \text{since }K(A\mid B,C)\stackrel{+}{=}0}.
\]
Thus, $\Delta(\pi)$ upper-bounds the mutual information wasted by splitting, and additional information in $(B,C)$ beyond $A$, making precise the intuition: a ``good'' decomposition should \emph{avoid separating} two parts that share
substantial mutual information and avoid introducing new artifacts that were not in the original information.

\paragraph{There are only finitely many low-excess decompositions.}
Fix a string $A$ and an excess budget $L\ge 0$, and consider the family
\[
\mathcal{D}_A(L)\;:=\;\{(B,C):\ (A,B,C)\ \text{is a determination and}\ \Delta(A,B,C)\le L\}.
\]
Although $(B,C)$ range over infinitely many binary strings, the low-excess subset
$\mathcal{D}_A(L)$ is necessarily finite, and in fact its size grows at most
exponentially in $L$.

The key point is that low excess forces $(B,C)$ to be almost computable from $A$.
Indeed, every $\pi=(A,B,C)$ with $\Delta(\pi)\le L$ satisfies
\[
K(B,C\mid A)\ \stackrel{+}{\le}\ L.
\]

Now use the standard Kraft-counting fact for prefix complexity:
for any fixed condition $A$ and any integer $t$, the number of objects $u$ with
$K(u\mid A)\le t$ is at most $2^{t+1}-1$~\citep{LiVitanyi2019}.
Applying this to $u=\langle B,C\rangle$ and $t\approx L$ yields the crude but useful bound
\begin{equation}
|\mathcal{D}_A(L)|\ \le\ 2^{L+O(\log K(A))}.
\label{eq:finite-low-excess}
\end{equation}
In particular, for every fixed $A$ and $L$, there are only finitely many determinations
$\pi=(A,B,C)$ with excess information at most $L$, and each additional allowed bit of excess information can at most multiply the number of feasible decompositions by a constant factor.

This section specifies the \emph{object of study}: we care about determinations that realize a ``split representation'' of a given information object $A$ while incurring only small excess information $\Delta$ in~\eqref{eq:excess-information-def}. However, some \emph{methodological} questions remain open: how do we effectively \emph{find} such low-excess determinations, and how do we \emph{maintain or update} them when the underlying information $A$ changes? We address these questions in the next section.

 \section{Dialectics as an Optimization Problem}
\label{sec:dialectics-as-optimization}

Optimization is the common language of contemporary artificial intelligence and machine learning.
The typical methodological moves for training a classifier, fitting a generative model, or aligning a large language model are all describing the task as an optimization problem,
\[
\min_{\theta}\; L(\theta),
\]
and then searching effectively for low-loss solutions (often approximately) using available computation.

We will use the same framing for \emph{dialectics}, but with a different notion of ``loss'' and a different kind of decision variable.

\paragraph{Decision variables: \emph{free parts} of a determination diagram.}
Here the variables are not numeric parameters but \emph{concept strings} living on the edges of a determination network, and feasibility is enforced by determination constraints.

The simplest nontrivial dialectical situation is a single $3$-way determination $\pi=(I,P_1,P_2)$:
\begin{center}
\begin{tikzpicture}
    \node[box] (A) at (0,0) {};

    \draw (A.north) -- ++(0, 0.6)   node[above] {$I$};
    \draw (A.south west) -- ++(-0.6, -0.6) node[below left] {$P_1$};
    \draw (A.south east) -- ++(0.6, -0.6) node[below right] {$P_2$};
\end{tikzpicture}
\end{center}
Fixing $I$, any feasible solution $(P_1,P_2)$ is a \emph{lossless} split representation of $I$ in the sense that any missing component is recoverable from the other two.

\paragraph{Loss: excess information of a determination.}
Following \Cref{sec:low-complexity-det}, we quantify the redundancy induced by splitting via the excess information:
\begin{equation}
\Delta(\pi)\ :=\ K(P_1)+K(P_2)-K(I).
\label{eq:dialectic-excess}
\end{equation}

Therefore dialectics is just a particular optimization problem:
\begin{equation}
\label{eq:dialectic-core-opt}
\begin{aligned}
\min_{P_1,P_2\in\{0,1\}^\ast}\quad& K(P_1)+K(P_2)-K(I)
\\
\text{s.t.}\quad&
K(I\mid P_1,P_2)\stackrel{+}{=}0, K(P_1\mid I,P_2)\stackrel{+}{=}0, K(P_2\mid I,P_1)\stackrel{+}{=}0.
\end{aligned}
\end{equation}

Since each term $K(\cdot)$ in the loss is uncomputable but upper semicomputable, we will treat \eqref{eq:dialectic-core-opt} as an \emph{effective} optimization problem: for different feasible solutions, we compare \emph{constructive upper bounds} on the relevant complexities produced by concrete description methods (compression schemes, entropy encoding, probabilistic models, MDL-style two-part codes, prequential codes, etc.), and accept a feasible solution that decreases the current loss upper bound~\citep{Rissanen1978,Grunwald2007,Dawid1984}.

More generally, for a network of many determination nodes, or a determination node with many components, the decision variables are the strings on all the free edges, and global loss can be taken as the sum of complexities for each free concepts.

During optimization, multiple concepts coexist inside one network, and \emph{compete} for the right to explain different pieces of information by driving down the \emph{global loss}. The rest of this section develops practical local moves and comparison criteria for effectively solving this optimization. 

\subsection{From Global Objectives to Local Moves}
\label{sec:zeroth-order-dialectics}

Even though our decision variables are discrete strings and the objective involves the uncomputable quantity $K(\cdot)$, we can still optimize in a purely \emph{zeroth-order} fashion: we compare candidate \emph{feasible} solutions by their (constructive) description-length upper bounds and keep whichever is better. Concretely, a basic loop is:
\begin{enumerate}
\item generate a small set of candidate feasible solutions (via explicit constructions, heuristics, or exploration),
\item evaluate their losses (or the corresponding computable upper bounds),
\item retain the candidate with smaller loss.
\end{enumerate}
This style of search only needs \emph{relative improvement signals}: given two feasible candidates, decide which one decreases the objective.

In practice, feasible candidates produced by realistic construction methods are rarely ``independent samples'' from a vast space. They typically share most of their structure and differ by an interpretable modification, for instance, shifting one piece of information from one concept to another, or letting different concepts to absorb the same new information. We refer to such structured modifications as \emph{local moves}.

Local moves matter because they may admit \emph{local comparison rules}: instead of recomputing a global loss (or its upper bound) for all concepts, we can sometimes decide between two feasible solutions by only using Kolmogorov-complexity upper bounds for a much smaller set of strings.

\paragraph{The local moves.}
A very common local move is to \emph{relocate} a small information \emph{patch} $X$ from one concept to another.
Concretely, suppose the following two diagrams hold:
\begin{center}
\begin{tikzpicture}
    \node[box] (L) at (0,-2.5) {};
    \node[box] (R) at (-0.9,-3.4) {};
   
    \draw (L.south west) -- (R.north east) node[midway, above left] {$P_1'$};
   
    \draw (L.north) -- ++(0, 0.5) node[above] {$I$};
    \draw (L.south east) -- ++(0.5, -0.5) node[below right] {$P_2$};
   
    \draw (R.south west) -- ++(-0.5, -0.5) node[below left] {$P_1$};
    \draw (R.south east) -- ++(0.5, -0.5) node[below right] {$X$};
\end{tikzpicture}
\qquad
\begin{tikzpicture}
    \node[box] (L) at (0,-2.5) {};
    \node[box] (R) at (0.9,-3.4) {};

    \draw (L.south east) -- (R.north west) node[midway, above right] {$P_2'$};
  
    \draw (L.north) -- ++(0, 0.5) node[above] {$I$};
    \draw (L.south west) -- ++(-0.5, -0.5) node[below left] {$P_1$};
  
    \draw (R.south west) -- ++(-0.5, -0.5) node[below left] {$X$};
    \draw (R.south east) -- ++(0.5, -0.5) node[below right] {$P_2$};
\end{tikzpicture}
\end{center}
Intuitively, the left candidate asserts that $P_1'$ is a
larger version of $P_1$ that also includes $X$.
The right candidate is the symmetric alternative, where $P_2'$ includes $P_2$ and $X$. Informally, the pivot lets us \emph{slide} the
information in $X$ across the boundary between $P_1$ and $P_2$ without breaking feasibility, turning
$(P_1',P_2)$ and $(P_1,P_2')$ into two competing feasible solutions.

A natural situation where this pattern arises is when $I$ itself evolves. Suppose new
information arrives as a patch $X$, and the updated $I$ admits a lossless refinement into an
intermediate representation $I'$ which is then split into $(P_1,P_2)$:
\begin{center}
\begin{tikzpicture}
    \node[box] (L) at (0,-2.5) {};
    \node[box] (R) at (0.9,-3.4) {};

    \draw (L.south east) -- (R.north west) node[midway, above right] {$I'$};
  
    \draw (L.north) -- ++(0, 0.5) node[above] {$I$};
    \draw (L.south west) -- ++(-0.5, -0.5) node[below left] {$X$};
  
    \draw (R.south west) -- ++(-0.5, -0.5) node[below left] {$P_1$};
    \draw (R.south east) -- ++(0.5, -0.5) node[below right] {$P_2$};
\end{tikzpicture}
\end{center}
Here $X$ is initially \emph{outside} the current split of $I'$ into $(P_1,P_2)$. If both sides can
expand to incorporate $X$ via the pivots above, then we recover exactly the local-move shown earlier: absorb $X$ into $P_1$ (producing $P_1'$) or absorb $X$ into $P_2$ (producing $P_2'$).

\paragraph{Deriving local relative comparison criterion via excess information.}
If we hold $I$ fixed and measure redundancy by the excess information objective
$\Delta = K(P_1)+K(P_2)-K(I)$ as in \eqref{eq:dialectic-excess}, then comparing the two candidates reduces to
comparing
\begin{equation}
K(P_1')+K(P_2)
\qquad\text{vs.}\qquad
K(P_1)+K(P_2'),
\label{eq:patch-direct-compare}
\end{equation}
since the $-K(I)$ term cancels.

The comparison \eqref{eq:patch-direct-compare} is conceptually clean, but it involves the marginal complexities
of the (potentially large) concepts $P_1,P_2,P_1',P_2'$.
A useful reduction is that the \emph{difference} between these two objective values can be rewritten using only
conditional complexities of the patch $X$.

\begin{lemma}[Complexity-difference identity in a determination]
\label{lem:det-difference-identity}
If $(A,B,C)$ is a $3$-way determination, then
\[
K(A)-K(B)\ \stackrel{+}{=}\ K(C\mid B)-K(C\mid A).
\]
\end{lemma}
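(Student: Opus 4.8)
The plan is to reduce the identity to two applications of the chain rule, using only two of the three determination constraints. First I would exploit the hypotheses $K(B\mid A,C)\stackrel{+}{=}0$ and $K(A\mid B,C)\stackrel{+}{=}0$ to show that the two pairs $(A,C)$ and $(B,C)$ carry the same joint information. Concretely, the chain rule $K(A,B,C)\stackrel{+}{=}K(A,C)+K(B\mid A,C)$ together with $K(B\mid A,C)\stackrel{+}{=}0$ gives $K(A,C)\stackrel{+}{=}K(A,B,C)$; symmetrically, $K(A\mid B,C)\stackrel{+}{=}0$ gives $K(B,C)\stackrel{+}{=}K(A,B,C)$. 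Combining, $K(A,C)\stackrel{+}{=}K(B,C)$. (Note that the third constraint $K(C\mid A,B)\stackrel{+}{=}0$ is not even needed.)

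Next I would expand each of these joint complexities ``in the other order'': $K(A,C)\stackrel{+}{=}K(A)+K(C\mid A)$ and $K(B,C)\stackrel{+}{=}K(B)+K(C\mid B)$, passing from the $O(1)$ chain rule $K(x,y)=K(x)+K(y\mid x^\ast)+O(1)$ to the $\stackrel{+}{=}$ form via the standard fact $K(C\mid A^\ast)\stackrel{+}{=}K(C\mid A)$. Equating the two expansions through $K(A,C)\stackrel{+}{=}K(B,C)$ yields $K(A)+K(C\mid A)\stackrel{+}{=}K(B)+K(C\mid B)$, and rearranging gives the claimed $K(A)-K(B)\stackrel{+}{=}K(C\mid B)-K(C\mid A)$.

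The only point requiring care is the bookkeeping of the additive terms: each use of the chain rule and each $A^\ast\leftrightarrow A$ substitution contributes its own $O(\log n)$ slack (with $n$ an upper bound on $K(A),K(B),K(C)$), but since the argument chains together only a constant number of such steps, the cumulative error stays $O(\log n)$, matching the $\stackrel{+}{=}$ convention. I expect no genuine obstacle here — the statement is essentially symmetry of information specialized to a determination, and the whole proof collapses to a few lines once the observation $K(A,C)\stackrel{+}{=}K(A,B,C)\stackrel{+}{=}K(B,C)$ is in hand.
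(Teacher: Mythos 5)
Your proof is correct and follows essentially the same route as the paper's: both reduce the identity to $K(A)+K(C\mid A)\stackrel{+}{=}K(A,C)\stackrel{+}{=}K(A,B,C)\stackrel{+}{=}K(B,C)\stackrel{+}{=}K(B)+K(C\mid B)$, using only the two constraints $K(B\mid A,C)\stackrel{+}{=}0$ and $K(A\mid B,C)\stackrel{+}{=}0$ together with the chain rule. Your added remarks (that the third constraint is unused and that the constant number of chain-rule steps keeps the slack at $O(\log n)$) are accurate and consistent with the paper's conventions.
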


Apply \Cref{lem:det-difference-identity} to the determination $(P_1',P_1,X)$:
\begin{equation}
K(P_1')-K(P_1)\ \stackrel{+}{=}\ K(X\mid P_1)-K(X\mid P_1').
\label{eq:patch-identity-1}
\end{equation}
Similarly, apply it to $(P_2',P_2,X)$:
\begin{equation}
K(P_2')-K(P_2)\ \stackrel{+}{=}\ K(X\mid P_2)-K(X\mid P_2').
\label{eq:patch-identity-2}
\end{equation}
Subtracting \eqref{eq:patch-identity-2} from \eqref{eq:patch-identity-1} yields
\begin{align}
&\bigl[K(P_1')+K(P_2)\bigr]-\bigl[K(P_1)+K(P_2')\bigr]\nonumber\\
=&\bigl[K(P_1')-K(P_1)\bigr]-\bigl[K(P_2')-K(P_2)\bigr]\nonumber\\
\stackrel{+}{=}&\bigl[K(X\mid P_1)+K(X\mid P_2')\bigr]-\bigl[K(X\mid P_1')+K(X\mid P_2)\bigr].
\label{eq:patch-local-criterion}
\end{align}
Consequently, deciding whether the patch should be absorbed on the $P_1$-side (producing $P_1'$) or on the
$P_2$-side (producing $P_2'$) is equivalent (up to logarithmic slack) to comparing
\begin{equation}
K(X\mid P_1)+K(X\mid P_2')
\qquad\text{vs.}\qquad
K(X\mid P_1')+K(X\mid P_2).
\label{eq:criterion-only-X}
\end{equation}

The point of \eqref{eq:criterion-only-X} is structural: although the global objective involves the marginal
complexities of the full concepts, the \emph{relative} preference between these two local rewires can be
expressed purely through how well the patch $X$ can be described under different side information.

\paragraph{Intuition: every concept ``yearns'' to expand by explaining new experience.}
\label{sec:dialectics-intuition}

Equation \eqref{eq:criterion-only-X} has a simple interpretation in the typical regime where the pivoted concepts $P_1',P_2'$
already ``contain'' $X$.

Indeed, in the absorption view, $P_1'$ is explicitly constructed to encode $X$ together with $P_1$.
Therefore one expects $K(X\mid P_1')$ to be small (often $\stackrel{+}{=}0$ in the idealized setting),
and likewise $K(X\mid P_2')$ small.
In contrast, $K(X\mid P_1)$ and $K(X\mid P_2)$ reflect how well the \emph{current} concepts can explain/compress the new patch.

Thus the decision is dominated by the ``explanatory power'' terms $K(X\mid P_1)$ versus $K(X\mid P_2)$
(up to the bookkeeping terms involving $P_1',P_2'$).
The concept that offers the shorter conditional description of $X$ is the concept that \emph{best assimilates} $X$.

This motivates the dialectical metaphor:
treat $P_1$ and $P_2$ as two ``living'' concepts that compete to incorporate (interpret, explain, compress) new experience $X$.
The winner expands, the loser retracts (or, more precisely, it does not get to claim $X$ as part of its internal structure).

\paragraph{An effective dialectical step: isolate a patch and let concepts compete.}
The discussion above suggests a generic computational template:
\begin{enumerate}
\item \emph{Isolate a patch.} Isolate a \emph{patch} $X$ that is up for reinterpretation: it may be genuinely new information (a new image region, a newly read sentence), or it may be carved out from an existing concept representation (a span in a document, a subset of pixels, a residual in a model).
\item \emph{Generate candidate feasible solutions.} Starting from a feasible split $(P_1,P_2)$ for a fixed $I$, generate two feasible alternatives: absorb $X$ into $P_1$ (yielding $(P_1',P_2)$) or absorb $X$ into $P_2$ (yielding $(P_1,P_2')$). When the corresponding pivot constructions exist, these are two competing feasible solutions.
\item \emph{Evaluate and retain the better solution.} Decide ``who gets $X$'' by their (constructive) conditional description lengths for $X$ (cf. \eqref{eq:criterion-only-X}), rather than recomputing global complexities of the full concepts, and retain the feasible solution that yields the smaller bound.
\end{enumerate}
Operationally, this is a ``dialectical'' competition: two concepts are allowed to spend computation to produce a shorter description of the same $X$ given their respective side information, and whichever side achieves the smaller bound wins the right to incorporate $X$.

\paragraph{Approximate monotonicity: a Lyapunov view for the \emph{known} objective.}
If we always accept only moves that strictly decrease our \emph{current} computable upper bound on the excess information objective $\Delta=K(P_1)+K(P_2)-K(I)$ (cf.\ \eqref{eq:dialectic-excess}), then that tracked upper bound is monotone decreasing and plays the role of a Lyapunov function for the optimization dynamics. However, this does \emph{not} imply that the unknown true $\Delta$ is monotone, nor that assignments of $X$ are irreversible: because our bounds can improve later possibly due to more computation, a patch that ``belongs to'' $P_1$ under the current coding method may later be re-assigned when a better explanation is discovered. Also, considering that all equalities above are only up to logarithmic slack, a simple heuristic could improve robustness with a margin: only commit the move when the code-length advantage is larger than the expected slack/estimation noise, so that dialectical steps become stable when the evidence is strong and remain revisable only in ambiguous/near-ties regimes.

\paragraph{Why realizability/effectivity matters.}  Pure existence results are not enough for epistemic or algorithmic purposes. For instance, the constraint system always admits the trivial feasible split $P_1=I,\,P_2=\epsilon$ (``the empty string''), which drives $\Delta$ near zero \emph{in principle} but does not yield a useful articulation of $I$ into independently meaningful parts. In our \emph{effective} (constructive) setup: the ``concepts'' must come with concrete description methods that actually achieve the claimed upper bounds. Otherwise, a nominal assignment such as ``$P_1$ is the whole world'' is not stable under competition: another concept $P_2$ can partially compress the data and thereby win an explanatory territory from $P_1$.
Therefore, dialectics is resource-bounded: concepts remain distinct only if, under available computation, neither side can produce a decisively shorter code for the other side, and the boundaries of one concept become stable precisely when they are backed by real compression advantages that survive adversarial re-coding attempts by competing concepts.

\paragraph{Limited compressors already yield rich structure.}
Finally, the framework does not require near-optimal Kolmogorov codes to be useful.

Even very limited code families already induce quite useful and human-recognizable structure. For instance, coding with a simple Gaussian model on vectors already leads to local relative comparison criterion corresponding to familiar mechanisms such as $k$-means/EM-style assignments: a cluster ``wins'' a vector precisely when it can describe it with fewer bits under its current statistics~\citep{MacQueen1967,Lloyd1982,Dempster1977}. This is the general phenomenon: crude models are still strong enough to expose large, stable determinations (textures vs.\ boundaries, topics vs.\ background, common patterns vs.\ outliers), because those determinations create large code-length gaps that dominate the $O(\log)$ slack and estimation noise. As we tighten the available upper bounds (by enlarging the model class, improving predictors, or spending more compute) the same competition gives concepts more expressive ``arguments'' to refine the boundary or reveal finer determinations.

Seen this way, the dialectical framing does not replace existing practice; it reinterprets much of it as a family of practical approximations to the same underlying objective. Different communities have implicitly instantiated different ``glue'' constraints and different code families, but many successful algorithms can be read as implementing local competitive moves of the form ``who gets to explain this piece?'' under a tractable description-length surrogate. More examples are discussed in \Cref{sec:related_work}. This perspective suggests a systematic program: (i) choose a realizable code family, (ii) define feasible pivot that preserve determination constraints to construct different feasible solutions, and (iii) allocate computation to let competing concepts compress certain information, keeping only moves that decrease the tracked code length.

\subsection{Dialectics with Grounds}
\label{sec:grounded-dialectics}
So far the two sides of a split were perfectly symmetric: swapping the labels of $P_1$ and $P_2$
does not change either feasibility (determination constraints) or the excess-information objective.
In many settings, however, we need to \emph{refer to} (and later \emph{re-identify}) a particular side of the split (e.g.\ a system that must keep concept identities consistent across time).
To break this exchangeability we introduce \emph{grounds}, which are asymmetric side information that anchors each side.

\paragraph{A grounded split diagram.}
We attach two auxiliary strings $\hat A,\hat B$ (the grounds) to the two sides:
\begin{center}
\begin{tikzpicture}
    \node[box] (L) at (0,0) {};
    \node[box] (M) at (1.8,0) {};
    \node[box] (R) at (3.6,0) {};

    \draw[arc] (L.north) -- ++(0, 0.8) node[above] {$\hat{A}$};
    \draw[arc] (R.north) -- ++(0, 0.8) node[above] {$\hat{B}$};
    \draw (M.north) -- ++(0, 0.8) node[above] {$I$};

    \draw (L.east) -- (M.west) node[midway, above] {$A$};
    \draw (M.east) -- (R.west) node[midway, above] {$B$};

    \draw (L.south) -- ++(0, -0.8);
    \draw (R.south) -- ++(0, -0.8);
\end{tikzpicture}
\end{center}
The intended reading is: $(A,B)$ is a lossless split representation of $I$ (a 3-way determination),
but each side is additionally encouraged to stay close to its ground via a conditional description penalty.

\paragraph{Grounded dialectics as an optimization problem.}
Fix the underlying information object $I$ and grounds $(\hat A,\hat B)$.
We define the grounded split by the constrained optimization
\begin{equation}
\begin{aligned}
\min_{A,B\in\{0,1\}^\ast}\quad& K(A \mid \hat A) + K(B \mid \hat B) \\
\text{s.t.}\quad&
K(I \mid A,B) \stackrel{+}{=}0,\quad K(B \mid A,I) \stackrel{+}{=}0,\quad K(A \mid B,I) \stackrel{+}{=}0.
\end{aligned}
\label{eq:grounded-split-opt}
\end{equation}
Write $(A^{\star},B^{\star})$ for an optimal solution.

\paragraph{What ``pin down'' means.}
Given $(I,\hat A,\hat B)$, the grounds \emph{pin down} a split in the sense that they specify the designated output of
\eqref{eq:grounded-split-opt} (the Kolmogorov-optimal split, up to $\stackrel{+}{=}$); in practice, one can also
take ``pin down'' to mean the output of a \emph{publicly agreed} (deterministic) optimization protocol and compute budget,
so that the same inputs yield the same (approximate) split with no ambiguity.

\paragraph{Function of the grounds.}
The grounds $\hat A,\hat B$ serve two roles:
\begin{enumerate}
\item \textbf{Disambiguation:} they make the two concepts non-exchangeable, so ``which side is which'' becomes meaningful
      (i.e. the split is pinned down under the chosen optimality notion / protocol).
\item \textbf{Identification:} they provide a compact handle (e.g. a small seed set, a few prototypical examples, a short specification) by which one agent can recover one concept without storing whole concept.
\end{enumerate}

\paragraph{There always exists (at least trivial) anchoring that pins down any target split.}
For any target feasible pair $(A',B')$, one may take $\hat A=A'$, $\hat B=B'$.
Then $(A',B')$ achieves objective $K(A'\mid \hat A)+K(B'\mid \hat B)\stackrel{+}{=}0$ and hence is optimal. Under a fixed tie-breaking rule / common protocol, this pins down a unique reproducible output.

\paragraph{Ground complexity upper-bounds attainable excess information.}
A useful qualitative principle is that small grounds can only identify low-excess determinations.
Formally:

\begin{theorem}[Ground complexity controls excess]
\label{thm:ground-bounds-excess}
Let $(A^{\star},B^{\star})$ be optimal for \eqref{eq:grounded-split-opt}. Then
\[
K(\hat{A})+K(\hat{B})\ \stackrel{+}{\ge}\ K(A^{\star})+K(B^{\star})-K(I).
\]
\end{theorem}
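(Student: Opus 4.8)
The plan is to sandwich the optimal value of the grounded-split problem \eqref{eq:grounded-split-opt} between a cheap upper bound --- obtained by plugging in one convenient feasible split --- and a lower bound coming from the chain rule applied termwise, and then to rearrange. Write $\mathrm{OPT}:=K(A^\star\mid\hat A)+K(B^\star\mid\hat B)$ for the optimal objective value.

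For the upper bound I would exhibit a feasible split with tiny objective: the degenerate split $A=I$, $B=\epsilon$ (the empty string). Its three components $(A,B,I)=(I,\epsilon,I)$ satisfy $K(I\mid A,B)\stackrel{+}{=}0$, $K(B\mid A,I)\stackrel{+}{=}0$, and $K(A\mid B,I)\stackrel{+}{=}0$ (in fact with $O(1)$ slack), so it lies in the feasible set of \eqref{eq:grounded-split-opt}, and its objective value is $K(I\mid\hat A)+K(\epsilon\mid\hat B)\stackrel{+}{=}K(I\mid\hat A)\stackrel{+}{\le}K(I)$. Optimality of $(A^\star,B^\star)$ then forces $\mathrm{OPT}\stackrel{+}{\le}K(I)$. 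For the lower bound I would invoke subadditivity of prefix complexity, $K(u)\stackrel{+}{\le}K(v)+K(u\mid v)$, applied to $(u,v)=(A^\star,\hat A)$ and $(u,v)=(B^\star,\hat B)$; summing the two instances and rearranging gives $\mathrm{OPT}\stackrel{+}{\ge}K(A^\star)+K(B^\star)-K(\hat A)-K(\hat B)$. Chaining the two bounds yields $K(A^\star)+K(B^\star)-K(\hat A)-K(\hat B)\stackrel{+}{\le}K(I)$, which is precisely the claimed inequality once the ground complexities are moved to the other side.

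I do not expect a genuine obstacle in the argument; the only delicate point --- and hence the ``hard part'' here --- is controlling the accumulated logarithmic slack so that it remains a single admissible $O(\log)$ term. The observation that makes this clean is that the upper bound $\mathrm{OPT}\stackrel{+}{\le}K(I)$ combined with $K(A^\star)\stackrel{+}{\le}K(\hat A)+K(A^\star\mid\hat A)$ already forces $K(A^\star)\stackrel{+}{\le}K(\hat A)+K(I)$, and symmetrically $K(B^\star)\stackrel{+}{\le}K(\hat B)+K(I)$, so every complexity entering the derivation is bounded in terms of $K(I)$, $K(\hat A)$, $K(\hat B)$; consequently all the error terms collapse into $O(\log(K(I)+K(\hat A)+K(\hat B)+1))$, matching the $\stackrel{+}{\ge}$ in the statement. (A slightly sharper version with $K(I\mid\hat A)$ in place of $K(I)$ is available from the same computation, but the stated form is all we need.)
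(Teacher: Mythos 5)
Your proposal is correct and follows essentially the same route as the paper's proof: the same degenerate feasible split $(A,B)=(I,\epsilon)$ to bound the optimal objective by $K(I)$, and the same termwise subadditivity $K(u)\stackrel{+}{\le}K(v)+K(u\mid v)$ applied to $(A^{\star},\hat A)$ and $(B^{\star},\hat B)$, combined and rearranged. The extra remarks on slack control are fine but not needed beyond what the paper's $\stackrel{+}{=}$ conventions already absorb.
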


Intuitively, if a concept can be specified by a very small seed (e.g. one human can quickly learn concepts like ``guacamole''
vs.\ ``non-guacamole'' via a handful of examples), then the corresponding grounded split is forced to have low excess
information. We will exploit this idea for communication and alignment of concepts between agents.

\subsection{Communication: Delivering Concepts via Small Seed Sets}
\label{sec:communication-seeds}
A final motivation for grounded dialectics is communication.
If two agents share the same dialectical procedure (the same ``rules of competition''),
then to communicate a concept it can be enough to transmit a \emph{small seed set} rather than the full extension.

\paragraph{Example (cat video).}
Suppose agents $A$ and $B$ watch the same cat video.
Agent $A$ can indicate the cat in the first frame (a small seed: a boundary, a pointer or a short description).
Agent $B$, using the shared dialectical mechanism, can then ``grow'' the cat concept across the entire video, tracking it through pose changes and occlusions in other frames.
The seed is vastly smaller than the total set of cat pixels across all frames.

\paragraph{Sender and receiver roles.}
\begin{itemize}
\item The \textbf{sender} selects representative seeds (positive/negative examples, or anchors $\hat A,\hat B$)
so that the intended concept becomes the stable outcome of dialectical optimization under the shared protocol.
\item The \textbf{receiver} runs dialectical growth starting from the seed constraints and reconstructs the concept's extension.
\end{itemize}

\subsection{The Explicit-Boundary Regime}
\label{sec:boundary-regime}
A large class of ``segmentation-like'' problems comes with an \emph{explicit boundary object}:
a curve separating foreground/background, a binary mask, a set of polygon vertices, a span $(\texttt{start},\texttt{end})$ inside a document, a time-frequency mask in audio, etc.
In these cases the split is not only an abstract factorization of information but also a \emph{boundary-mediated decomposition}.

We model this regime by four strings:
\[
I \quad\text{(the whole information object)},\qquad
P_1,\ P_2 \quad\text{(two parts)},\qquad
B \quad\text{(a boundary)}.
\]
The intended semantics is that $B$ specifies \emph{where} the split happens, while $P_1,P_2$ carry the corresponding content.\footnote{In applications, $P_i$ often includes both \emph{content} and \emph{addressing} (coordinates / indices). This is exactly why $B$ may be recoverable from a single part.}

\paragraph{Boundary information assumptions.}
We isolate three boundary information assumptions:
\begin{enumerate}
\item[\textbf{A1.}] (Boundary visible from either side.)
\begin{equation}
K(B\mid P_i)\stackrel{+}{=}0,\qquad i\in\{1,2\}.
\label{eq:A1}
\end{equation}
\item[\textbf{A2.}] (Parts are pure ``restrictions'' of the whole given the boundary.)
\begin{equation}
K(P_i\mid I,B)\stackrel{+}{=}0,\qquad i\in\{1,2\}.
\label{eq:A2}
\end{equation}
\item[\textbf{A3.}] (The two parts cover the whole.)
\begin{equation}
K(I\mid P_1,P_2)\stackrel{+}{=}0.
\label{eq:A3}
\end{equation}
\end{enumerate}
When \eqref{eq:A1}--\eqref{eq:A3} hold, the triple $(I,P_1,P_2)$ is a determination (cf.\ \Cref{def:determination}):
indeed, from $I$ and (say) $P_2$ we recover $B$ by \eqref{eq:A1} and then recover $P_1$ by \eqref{eq:A2},
so $K(P_1\mid I,P_2)\stackrel{+}{=}0$, and symmetrically for $P_2$.

\paragraph{Why this regime is useful.}
Dialectics, as formulated in \eqref{eq:dialectic-core-opt}, cares about the redundancy cost
\[
\Delta \;=\; K(P_1)+K(P_2)-K(I),
\]
but in the explicit-boundary regime this quantity admits multiple equivalent decompositions
that separate ``\emph{boundary ambiguity}'' from ``\emph{content entanglement}''.
This is exactly the situation where local ``boundary competition'' rules become natural and interpretable.

\paragraph{Seven equivalent decompositions of the excess information.}
\label{sec:delta-seven}

We collect a family of ``excess information'' that all reduce (under boundary information assumptions)
to the same objective, but highlight different mechanisms by which a split can be wasteful.

Define the following seven expressions:
\begin{align}
\Delta^{(1)} &:= I(P_1;P_2)+K(B\mid I),
\label{eq:Delta1}\\
\Delta^{(2)} &:= K(B)+I(P_1;P_2\mid B)+K(B\mid I),
\label{eq:Delta2}\\
\Delta^{(3)} &:= K(B)+K(P_1\mid B)+K(P_2\mid B)+K(B\mid I)-I\big((P_1,P_2);I\mid B\big),
\label{eq:Delta3}\\
\Delta^{(4)} &:= K(P_1, B)+K(P_2, B)-K(I),
\label{eq:Delta4}\\
\Delta^{(5)} &:= K(P_1)+K(P_2)-K(I),
\label{eq:Delta5}\\
\Delta^{(6)} &:= I(P_1;P_2\mid B)+I(P_1;P_2\mid I)+K(B),
\label{eq:Delta6}\\
\Delta^{(7)} &:= K(I\mid B)+K(B\mid I)-K(P_1\mid P_2)-K(P_2\mid P_1)+K(B).
\label{eq:Delta7}
\end{align}

Some equivalent forms of excess information highlight different ways a split can go wrong.
\begin{itemize}
\item $\Delta^{(1)}$ grows when $P_1$ and $P_2$ share a lot of algorithmic content and/or when $B$ is not (nearly) determined by $I$.

\item $\Delta^{(2)}$ grows when the boundary is complex, fails to decouple the parts, or is underdetermined by the whole.

\item $\Delta^{(3)}$ grows when the sides remain complex even after fixing $B$, and/or when $(P_1,P_2)$ carries little information about $I$ once $B$ is known.

\item $\Delta^{(4)}$ grows when boundary/interface structure is effectively present on both sides.

\item $\Delta^{(6)}$ grows when $B$ does not disentangle the parts (large $I(P_1;P_2\mid B)$) and/or when the split introduces shared degrees of freedom not fixed by $I$ (large $I(P_1;P_2\mid I)$).

\item $\Delta^{(7)}$ becomes large when each part is easy to reconstruct from the other (small $K(P_1\mid P_2)$ and $K(P_2\mid P_1)$), especially if the boundary is costly.
\end{itemize}

Under the explicit-boundary assumptions \eqref{eq:A1}-\eqref{eq:A3}, these objectives coincide.
More precisely, we show the following implication graph in \Cref{fig:delta-relations} (labels indicate which assumption is used).

\begin{figure}[h]
\centering
\begin{tikzpicture}[
    x=2.5cm, y=2.2cm, 
    box/.style={draw, thin, rounded corners, minimum height=1.2cm, minimum width=1.6cm, font=\large, fill=white, align=center},
    sym/.style={fill=white, inner sep=1pt, font=\normalsize},
    lbl/.style={font=\scriptsize\sffamily, text=gray} 
]

\def\pe{\stackrel{\scriptscriptstyle+}{=}}
    \def\pg{\stackrel{\scriptscriptstyle+}{\ge}}
    \def\pl{\stackrel{\scriptscriptstyle+}{\le}}

\node[box] (d1) at (0,0) {$\Delta^{(1)}$};
    \node[box] (d2) at (0,1) {$\Delta^{(2)}$};
    \node[box] (d3) at (1,1) {$\Delta^{(3)}$};
    \node[box] (d4) at (2,1) {$\Delta^{(4)}$};
    \node[box] (d5) at (2,0) {$\Delta^{(5)}$};
    \node[box] (d6) at (-1.5,1) {$\Delta^{(6)}$};
    \node[box, minimum width=3.2cm] (d7) at (-1.5,0) {$\Delta^{(7)} \pe 2\Delta^{(1)} - \Delta^{(5)}$};

\draw (d1) -- node[sym]{$\pe$} node[lbl, left=4pt] {A1} (d2);
    \draw (d2) -- node[sym]{$\pe$} node[lbl, above=4pt]{A2} (d3);
    \draw (d3) -- node[sym]{$\pe$} node[lbl, above=4pt]{A3} (d4);
    \draw (d5) -- node[sym]{$\pe$} node[lbl, right=4pt]{A1} (d4);

\draw ([yshift=3mm]d6.east) -- node[sym]{$\pg$} node[lbl, above=4pt]{A1} ([yshift=3mm]d2.west);
    \draw ([yshift=-3mm]d6.east) -- node[sym]{$\pl$} node[lbl, below=6pt]{A2} ([yshift=-3mm]d2.west);

\draw[dashed, gray] (d5.south) -- ++(0, -0.5cm) -| (d7.south);
    \draw[dashed, gray] (d1) -- (d7);

\end{tikzpicture}
\caption{Relations among the seven split-overhead expressions. Under the explicit-boundary assumptions A1--A3, all $\Delta^{(k)}$ coincide, under partial assumptions we might obtain one-sided comparisons. Proofs are in \Cref{sec:proofs}.}
\label{fig:delta-relations}
\end{figure}
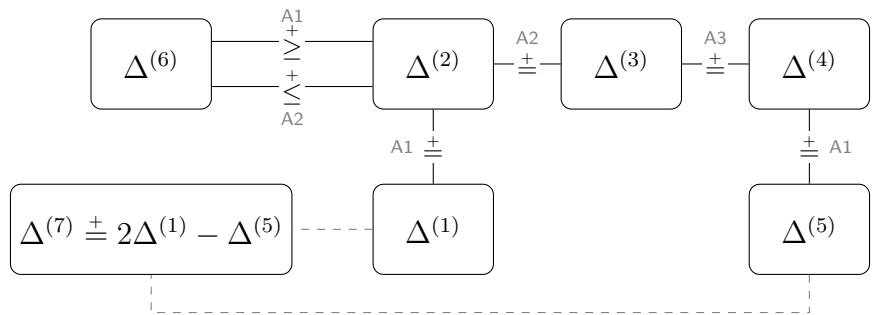

\paragraph{Information accounting diagrams.}
\label{sec:info-accounting-diagrams}

The algebraic relations behind \Cref{fig:delta-relations} can be summarized by a compact ``information accounting'' picture.
\Cref{fig:info-decomposition} is an illustration of how boundary-related and content-related information distribute across a two-part split.
\Cref{fig:info-components} provides a complementary view to highlight which regions of the accounting diagram correspond to which \emph{components} ($P_1,P_2,B,I$).
These diagrams provide a compact way to reason about which terms would be expected to move when we change (i) the boundary $B$ or (ii) a small patch of information is assigned to $P_1$ versus $P_2$.

\begin{figure}[h]
\centering
\begin{tikzpicture}

\coordinate (Top)   at (0, 4.5);
\coordinate (Left)  at (-5, 0);
\coordinate (Right) at (5, 0);
\coordinate (BottomCenter) at (0, 0);

\coordinate (MidLeft)  at ($(Left)!.5!(Top)$);
\coordinate (MidRight) at ($(Right)!.5!(Top)$);

\coordinate (SplitL) at ($(BottomCenter)!.6!(MidLeft)$);
\coordinate (SplitR) at ($(BottomCenter)!.6!(MidRight)$);

\draw (Left) -- (Right) -- (Top) -- cycle;
\draw (MidLeft) -- (MidRight) -- (BottomCenter) -- cycle;
\draw (SplitL) -- (SplitR);

\node at (barycentric cs:Top=1,MidLeft=1,MidRight=1) {$K(B\mid I)$};
\node at (barycentric cs:Left=1,BottomCenter=1,MidLeft=1) {$K(P_1\mid P_2)$};
\node at (barycentric cs:Right=1,BottomCenter=1,MidRight=1) {$K(P_2\mid P_1)$};
\node at (barycentric cs:SplitL=1,SplitR=1,BottomCenter=1) {$I(P_1; P_2\mid B)$};

\coordinate (TrapTopMid) at ($(MidLeft)!.5!(MidRight)$);
\coordinate (TrapBotMid) at ($(SplitL)!.5!(SplitR)$);
\node at ($(TrapTopMid)!.5!(TrapBotMid)$) {$I(B; I)$};

\end{tikzpicture}
\caption{An information-accounting decomposition for the explicit-boundary regime. The labels indicate terms that repeatedly appear in the equivalent overhead objectives $\Delta^{(k)}$.}
\label{fig:info-decomposition}
\end{figure}
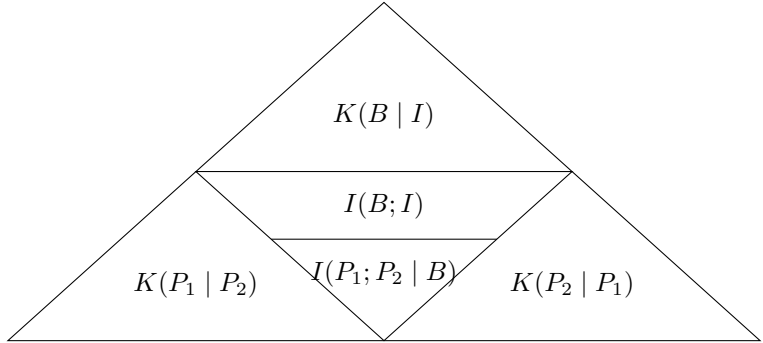

\begin{figure}[h]
\centering
\begin{tikzpicture}[
    >=latex,
    semithick,
    font=\scriptsize,
    shaded region/.style={fill=gray!30},
    subcaption/.style={text width=4cm, align=center, yshift=-0.5cm, font=\small}
]
    \newcommand{\DrawDiagram}[1]{
        \coordinate (Top) at (0, 2);
        \coordinate (Left) at (-2, 0);
        \coordinate (Right) at (2, 0);

        \coordinate (MidLeft) at ($(Top)!0.5!(Left)$);
        \coordinate (MidRight) at ($(Top)!0.5!(Right)$);
        \coordinate (MidBottom) at (0, 0);

        \coordinate (SplitL) at ($(MidBottom)!0.6!(MidLeft)$);
        \coordinate (SplitR) at ($(MidBottom)!0.6!(MidRight)$);

        #1

        \draw (Left) -- (Right) -- (Top) -- cycle;
        \draw (MidLeft) -- (MidRight) -- (MidBottom) -- cycle;
        \draw (SplitL) -- (SplitR);
    }

\begin{scope}[shift={(0, 0)}]
        \DrawDiagram{
            \fill[shaded region] (Left) -- (Top) -- (MidRight) -- (MidBottom) -- cycle;
        }
        \node[subcaption] at (0, -0.2) {(a) Component $P_1$};
    \end{scope}

\begin{scope}[shift={(5, 0)}]
        \DrawDiagram{
            \fill[shaded region] (Right) -- (Top) -- (MidLeft) -- (MidBottom) -- cycle;
        }
        \node[subcaption] at (0, -0.2) {(b) Component $P_2$};
    \end{scope}

\begin{scope}[shift={(0, -3.8)}]
        \DrawDiagram{
            \fill[shaded region] (Top) -- (MidLeft) -- (MidRight) -- cycle;
            \fill[shaded region] (MidLeft) -- (MidRight) -- (SplitR) -- (SplitL) -- cycle;
        }
        \node[subcaption] at (0, -0.2) {(c) Component $B$};
    \end{scope}

\begin{scope}[shift={(5, -3.8)}]
        \DrawDiagram{
            \fill[shaded region] (Left) -- (MidLeft) -- (MidRight) -- (Right) -- cycle;
        }
        \node[subcaption] at (0, -0.2) {(d) Component $I$};
    \end{scope}

\end{tikzpicture}
\caption{A shading convention for the accounting diagram: each component can be viewed as a particular ``region'' of the shared information geometry. }
\label{fig:info-components}
\end{figure}
\subsubsection{Dialectical Criteria as Boundary Competition}
\label{sec:dialectics-criterion-boundary}

A key point in the explicit-boundary regime is that the \emph{decision variable is the boundary} $B$:
once $B$ is chosen, the parts $P_1,P_2$ are induced as (algorithmic) restrictions of $I$ to the two sides,
so $P_i$ should be viewed as functions of $B$ rather than independent variables.

\paragraph{A local ``contested block'' move.}
Consider a small information block $X$ near the current boundary.
Assume the total information is partitioned into three disjoint pieces $P_1,P_2,X$ (so $X$ is not already inside $P_1$ or $P_2$) and there are two competing ways to allocate $X$:
\begin{gather*}
\text{(1) assign $X$ to side 1:}\quad (P_1',P_2)\ \text{with boundary }B_1,
\\
\text{(2) assign $X$ to side 2:}\quad (P_1,P_2')\ \text{with boundary }B_2,
\end{gather*}
where $P_1'$ denotes $P_1$ expanded by including $X$, and $P_2'$ denotes $P_2$ expanded by including $X$.
We evaluate the split overhead using the base objective
\[
\Delta^{(5)}\ :=\ K(P_1)+K(P_2)-K(I),
\]
so the two candidate overheads are
\[
\Delta_1\stackrel{+}{=}K(P_1')+K(P_2)-K(I),
\qquad
\Delta_2\stackrel{+}{=}K(P_1)+K(P_2')-K(I).
\]
Therefore the \emph{dialectical local preference} is determined by
\begin{equation}
\Delta_1-\Delta_2
\ \stackrel{+}{=}\ 
\bigl[K(P_1')-K(P_1)\bigr]\;-\;\bigl[K(P_2')-K(P_2)\bigr].
\label{eq:delta-diff-basic}
\end{equation}
This already has a direct interpretation: each side ``bids'' the marginal description-length increase needed to absorb $X$.

\paragraph{From marginal costs to conditional predictability.}
Using standard symmetry-of-information manipulations (applied to $(P_1,X,P_1')$ and $(P_2,X,P_2')$),
\eqref{eq:delta-diff-basic} can be rewritten as
\begin{equation}
\Delta_1-\Delta_2
\ \stackrel{+}{=}\ 
\bigl[K(X\mid P_1)-K(X\mid P_2)\bigr]
\;+\;
\bigl[K(X\mid P_2')-K(X\mid P_1')\bigr].
\label{eq:delta-diff-X}
\end{equation}
The first bracket is the intuitive ``content term'': $X$ should go to the side that predicts/compresses it better.
The second bracket is a \emph{correction} about how hard it is to \emph{locate/extract} $X$ inside the enlarged parts, because $P_1',P_2'$ themselves already contain $X$.

\paragraph{Turning the correction into a boundary term.}
To connect this correction to the boundary, we use the following local version of boundary information assumptions:
both candidate decompositions $(P_1',P_2,B_1)$ and $(P_1,P_2',B_2)$ satisfy the boundary regime constraints
(A1-A3) with their respective boundaries, and moreover the boundary inside the enlarged patch is
recoverable once we know \emph{which sub-block} is $X$ (and conversely, $X$ is recoverable once we know the boundary cut).
Formally, it suffices to assume (and symmetrically for the other side) the two short recoverability statements
\[
K(P_2,X\mid P_2',B_1)\stackrel{+}{=}0,
\qquad
K(B_1\mid P_2',X)\stackrel{+}{=}0,
\]
which imply the equivalence
\[
K(X\mid P_2')\stackrel{+}{=}K(B_1\mid P_2'),
\qquad
K(X\mid P_1')\stackrel{+}{=}K(B_2\mid P_1').
\]
Substituting these into \eqref{eq:delta-diff-X} yields the advertised ``boundary competition'' form:
\begin{equation}
\Delta_1-\Delta_2
\ \stackrel{+}{=}\ 
\bigl[K(X\mid P_1)-K(X\mid P_2)\bigr]
\;+\;
\bigl[K(B_1\mid P_2')-K(B_2\mid P_1')\bigr].
\label{eq:delta-diff-final}
\end{equation}

\paragraph{Reading \eqref{eq:delta-diff-final}.}
The dialectical criterion is now explicit:
\begin{itemize}
\item the \emph{content bid} $\;K(X\mid P_1)-K(X\mid P_2)\;$ prefers assigning $X$ to the side that makes it more compressible;
\item the \emph{boundary bid} $\;K(B_1\mid P_2')-K(B_2\mid P_1')\;$ penalizes the assignment that forces a more complex (or less ``visible'') boundary within the enlarged parts.
\end{itemize}
In one simplification where boundary bookkeeping is negligible (or comparable across the two options),
\eqref{eq:delta-diff-final} reduces to the clean rule:
\[
\Delta_1\le \Delta_2\quad\Longleftrightarrow\quad K(X\mid P_1)\ \stackrel{+}{\le}\ K(X\mid P_2),
\]
i.e. $X$ is claimed by the side that best explains it. \section{Related Work}
\label{sec:related_work}

\paragraph{Looking back: compression as inference and as a learning objective.}
The idea that \emph{good explanations compress} has a long history across information theory, statistics, and learning theory.
Minimum Description Length (MDL) formalizes model selection as minimizing a two-part (model + data-given-model) code length
\citep{Rissanen1978,Grunwald2007}. Closely related, Minimum Message Length (MML) derives a similar objective from Bayesian
coding arguments \citep{WallaceBoulton1968}, and the \emph{prequential} view evaluates predictors by sequential predictive
log-loss (equivalently, a sequential code length) \citep{Dawid1984}.
These traditions already contain many proto-versions of what we call ``dialectics'':
a competition between alternative explanations, adjudicated by their achieved description lengths under a shared coding protocol.

\paragraph{Clustering by compression (distance-based) and how we differ.}
A prominent compression-based approach to unsupervised grouping is to define a \emph{pairwise} similarity via
information/compression distances. The pioneering line commonly known as \emph{Clustering by Compression}
introduces the (ideal) normalized information distance (NID) and its practical surrogate, the normalized compression distance
(NCD), and then performs clustering from the induced pairwise distance matrix \citep{Bennett1998,CilibrasiVitanyi2005}.
This is closely related in spirit to our goal, that is discovering ``natural'' groupings via compression, but it differs in two
structural ways.

First, NCD-style methods are fundamentally \emph{pairwise}: one computes a distance between every pair of objects (or uses a
subset thereof), and then runs a downstream clustering procedure. In contrast, dialectics is formulated as a \emph{global}
coding problem in which multiple concepts/segments compete to absorb contested information so as to reduce a shared
description-length objective. Second, NCD is \emph{normalized} (e.g., by $\max\{C(x),C(y)\}$), which makes it a robust
similarity measure across scales but also decouples it from the ``who gets which pieces'' allocation dynamics that our
framework emphasizes. In our setting, what gets compressed is not merely the concatenation of two objects, but the
\emph{contested region/patch} (or a newly arriving sample) under competing side information, the comparison signal is a
\emph{conditional} code-length (an operational surrogate for $K(X\mid P_i)$), not a symmetric normalized distance.

\paragraph{$k$-means as the crudest Euclidean dialectics.}
A useful sanity check is that, under a very restricted family of codes, dialectical competition reduces to classical
assignment rules.
If each cluster is represented by a simple spherical Gaussian code (or equivalently, a squared-error distortion model plus a
shared variance/constant), then deciding which cluster should absorb a point is equivalent to choosing the cluster that yields
the smallest incremental code length, which, after dropping constants, becomes the nearest-centroid rule of $k$-means
\citep{MacQueen1967,Lloyd1982}. In this view, MacQueen's \emph{online} $k$-means is literally ``$K$ segments competing for the
next sample'': each incoming point is claimed by the cluster that explains it best under the current sufficient statistics.
Our contribution is \emph{not} to re-derive $k$-means, but to allow using the same competitive-coding template with much richer code
families and (importantly) with more general \emph{reallocation moves} (patch/boundary competition) that go beyond pointwise
assignment in a fixed geometry.

\paragraph{More general clustering on collections.}
When the input is a data \emph{collection} $\{x_i\}_{i=1}^N$, our ``segmentation'' viewpoint reduces to a clustering problem:
we seek an assignment of items into groups (and, implicitly or explicitly, a within-group code/model) that minimizes a total
description-length objective. 
In this regime, a large family of existing methods can be read as \emph{special cases} of our
dialectical formulation once we (i) choose a concrete, computable code family to upper-bound Kolmogorov terms and (ii) choose
a restricted set of feasible \emph{local moves} for searching over partitions.

A representative early example is \citet{MaDerksenHongWright2007}, who study segmentation of multivariate mixed data via lossy
data coding. Their setting treats the data essentially as an \emph{unordered point set} (without additional geometry or
sequence structure) and optimizes a coding-length surrogate (e.g., Gaussian rate-distortion approximations) over
partitions. Algorithmically, the search is carried out by local merge/split or pairwise descent-style updates, which also change cluster number, rather than explicit ``two-sided debates''.

At the other end of the spectrum, recent deep generative clustering methods instantiate the idea that \emph{each cluster is a
model} at neural scale. For example, \citet{AdipoetraMartin2025} attach a generative model (e.g., a VAE) to each cluster and
alternate between updating assignments (by comparing per-cluster likelihood/ELBO scores for each sample) and updating model
parameters to maximize the corresponding ELBO given the current assignments. Although written in the standard
EM/coordinate-ascent template, this is naturally interpretable as a \emph{multi-party competition between explanations}: for
each sample, clusters ``bid'' according to their current code lengths (negative ELBO / negative log-likelihood), and the
winner claims the sample; then each cluster improves its compressor on what it currently owns.

Seen through this lens, the commonality across these efforts is a single meta-problem: \emph{minimize the code length of a partitioned
representation of a collection}. Our contribution is to make the ``degrees of freedom'' explicit by decoupling (i) the
objective (description length / complexity overhead), (ii) the surrogate code family used to upper-bound it, and (iii) the
search protocol (which local moves are allowed and how competing explanations are compared). The cited methods can then be
understood as particular design points (specific codes + specific move sets), while our ``dialectics'' serves as the
umbrella formulation.

\paragraph{Mixture models and EM as a probabilistic dialectics.}
Mixture models provide another canonical realization of ``segments compete to explain data'' \citep{McLachlanPeel2000}.
With EM \citep{Dempster1977}, each component competes for responsibility mass according to likelihood; then parameters are
updated to better explain the assigned data. Topic models such as LDA follow the same theme at the level of discrete latent
structure: topics compete to explain words under a shared generative story \citep{BleiNgJordan2003}.
These methods are typically presented in probabilistic language, but they can also be interpreted as coding schemes (negative
log-likelihood as code length) and hence as instances of competitive MDL.
Our Kolmogorov-style development abstracts away from committing to a particular probabilistic family: in principle, one can
use \emph{any} agreed-upon code family to upper-bound the relevant conditional complexities, and dialectical steps then compare
those bounds.

\paragraph{Image segmentation on pixels.}
Pixel-wise image segmentation can be viewed as partitioning an image into \emph{patches/regions} (foreground/background, object slots, superpixels, etc.), i.e., assigning each pixel (often together with its coordinate) to one of several parts.  In this ``pixels as a collection'' view, a segmentation is a particular \emph{lossless factorization} of the image: given the whole image $I$ and (say) one part $P_2$, the other part $P_1$ is determined as the complementary restriction once the split/boundary is known (and in many practical encodings the boundary is recoverable from either side via pixel indices).  This aligns naturally with our \emph{determination} perspective in the explicit-boundary regime, where $(I,P_1,P_2)$ forms a reversible consistency constraint and the key degree of freedom is \emph{where the boundary is drawn}.

A classical line makes the ``segmentation $\leftrightarrow$ good compression'' connection explicit by combining a \emph{region model} (how to code pixels inside a region) with a \emph{boundary regularizer/encoding} (how to code the interface).  Region Competition \citep{ZhuYuilleRegionCompetition} unifies snakes and region growing under a Bayes/MDL objective: regions are explained by simple appearance/texture models (often Gaussian-like statistics in an appropriate feature space), while the evolving boundary is regularized and optimized via continuous-time dynamics (PDE/curve evolution).  Although written as an energy-minimization procedure, its structure matches our view: two adjacent regions effectively ``bid'' for boundary moves by how much they reduce total description length (data-fit) versus how much boundary complexity they introduce.

A closely related MDL-style formulation appears in Natural Image Segmentation with Adaptive Texture and Boundary Encoding \citep{MobahiNaturalImageSegmentation}, which explicitly counts code length for (i) within-region texture (e.g., a Gaussian model on local-patch features) and (ii) boundary representation (e.g., chain-code variants with adaptive entropy coding).  The optimization is carried out by greedy region merging on a region adjacency graph (often starting from superpixels), i.e., a restricted family of \emph{local moves}.  In our terminology, this corresponds to searching over a constrained move set (merge operations), with a concrete, computable surrogate for the Kolmogorov objective.

More recent object-centric segmentation work can be interpreted as choosing a different surrogate for conditional description length.  Binding via Reconstruction Clustering \citep{BindingViaReconstructionClustering} tackles the ``binding'' problem (assigning pixels/features to object slots) by iterating between (soft/hard) assignments and slot-wise reconstruction; the implicit ``code'' is a reconstruction loss, and each slot competes to explain pixels by improving its ability to reconstruct them.  Similarly, Contextual Information Separation methods \citep{YangContextualInformationSeparation,SavareseInpaintingErrorMaximization} learn masks by coupling a segmenter with a predictor/inpainter: the segmenter proposes a partition, and the predictor tries to infer one region from the other.  This is often implemented as a GAN-like or adversarial objective where the segmenter aims to \emph{maximize} cross-predictability error (make regions mutually unpredictable) while keeping each region individually predictable under its own model.  Our formulation is close in spirit of reducing mutual information but differs in emphasis: rather than a segmenter-predictor game, we make the \emph{regions themselves} the players in a dialectical game, each trying to absorb contested pixels so as to minimize its \emph{own} conditional description length (an operational surrogate for $K(X\mid P_i)$), with feasibility enforced by the determination constraints.

Taken together, these pixel-level segmentation literatures can be read as different instantiations of the same core: choose a partition/boundary so that the induced parts admit short codes under some family (Gaussian texture models, boundary codes, reconstruction-based models, infilling predictors).  Dialectics shows that image segmentation can be unified with clustering methods by (i) treating reversibility/determination as the shared structural constraint and (ii) treating ``who gets this pixel/element/patch?'' as a competitive coding decision, while allowing the practitioner to swap in different effective compressors and different local-move protocols (PDE evolution, merging, slot competition, adversarial infilling) under one modality-agnostic complexity objective.

\paragraph{Disentangled representation learning and the need for inductive bias.}
The disentanglement literature has made a sobering point precise: without additional inductive
biases, the ``ground-truth'' factorization is in general \emph{not identifiable} from $p(x)$ alone \citep{LocatelloEtAl2019}. We view this not as a defeat of unsupervised concept discovery, but as a pointer to a missing ingredient:
certain objectives treat observationally equivalent explanations as indistinguishable, because they ignore the
\emph{description/implementation complexity} of the explanation itself.
Once we treat low-Kolmogorov-complexity as a universal inductive bias, the non-identifiability result coexists naturally with our
program: among many representations that realize the same $p(x)$, we prefer those admitting shorter effective descriptions.

Concretely, with many observations $X=(x_1,\dots,x_N)$, we seek two latent sequences
$Z^{(1)}=(z^{(1)}_1,\dots,z^{(1)}_N)$ and $Z^{(2)}=(z^{(2)}_1,\dots,z^{(2)}_N)$ such that they form a determination
\emph{elementwise}:
\[
\forall i\in[N],\qquad
K(x_i\mid z^{(1)}_i,z^{(2)}_i)\stackrel{+}{=}0,\quad
K(z^{(1)}_i\mid x_i,z^{(2)}_i)\stackrel{+}{=}0,\quad
K(z^{(2)}_i\mid x_i,z^{(1)}_i)\stackrel{+}{=}0.
\]
In this view, a ``disentangled'' representation corresponds to finding such multi-part determinations where the parts remain
\emph{non-redundant} under an effective coding scheme (low excess information), so that the two sides behave as distinct
concepts rather than an arbitrary entangled reparameterization.
Dialectical optimization is our proposed search protocol for this goal: competing parts bid for explaining contested
information via conditional code lengths, and we retain reallocations that decrease the tracked description length.

Using Kolmogorov complexity as a universal inductive bias is of course not new. It is central to Solomonoff induction
(e.g., \citealp{Solomonoff1964}), but here we emphasize the distinctive structure it induces in \emph{concept discovery}:
the bias manifests operationally through competitive coding (who can explain a contested piece more cheaply), which yields stable, communicable low-complexity
concepts.

\paragraph{LLM prompting for ``dialectical'' reasoning and our distinction.}
A separate, more recent thread uses ``dialectics'' as a prompting or evaluation motif for LLM reasoning, e.g., by eliciting
self-reflection or multi-agent debate and then scoring reasoning chains \citep{SelfReflectingLLM2025,MultiModelDialecticalEval2025}.
We view these as valuable engineering instantiations of debate-style scaffolding, but conceptually orthogonal to our main
claim: dialectics is not merely a prompt pattern; it is the \emph{computational structure} that emerges when multiple
explanations (concepts) compete under a shared description-length objective and constrained reversibility. In particular, our
``players'' are not necessarily LLM agents; they can be regions, clusters, latent components, or any representational parts
that bid for information by compressing it.

\paragraph{Adversarial learning and predictability minimization.}
Our work is thematically a special case of broader adversarial learning paradigms.
GANs frame learning as a two-player game between a generator and a discriminator \citep{GoodfellowEtAl2014}, and earlier
predictability minimization work explicitly trains representations by making certain variables unpredictable from others
\citep{Schmidhuber1992}. These methods optimize objectives that are game-structured, but typically with \emph{opposed} goals
(fool vs.\ detect, predict vs.\ prevent prediction).
Dialectics differs in that both sides are optimizing \emph{the same type of quantity}, that is description length, under a shared
feasibility constraint; the ``opposition'' comes from competition over \emph{allocation} (which concept gets to explain which
piece), not from asymmetric loss functions.

Finally, as our terminology already suggests, this work is consciously inspired by Hegelian dialectics.

\paragraph{Summary and positioning.}
Across these literatures we repeatedly see the same structural pattern: a system proposes a \emph{partition/assignment} of information (points to clusters, pixels to regions, observations to latent components), and then \emph{adjudicates} among competing explanations by some effective code-length surrogate (likelihood, reconstruction loss, MDL/MML, prequential log-loss), iterating local updates until a stable articulation emerges.

We use \emph{AI dialectics} as the most general description of this pattern: an optimization problem whose objective is (a computable upper bound on) a \emph{Kolmogorov-style description length}---hence modality-agnostic---and whose defining feature is a \emph{shared} description-length objective together with an global or local competition protocol among feasible solutions.

Importantly, \emph{AI dialectics} itself is a \emph{concept}: it groups a family of works that instantiate the same competitive-coding core under different code families (Gaussian/MDL, deep generative models, reconstruction-based surrogates), different constraints (e.g., geometric boundaries vs.\ collection structure vs.\ sequence structure), different modalities, and different implementations. This usage also separates ``dialectical'' mechanisms (explicit competition over explanation territory) from abundant other organizing principles in the literature (some examples are supervised learning, standard ELBO optimization). Therefore, at a meta-level, \emph{dialectics and non-dialectics together with the whole literature also form a determination} in our sense. \section{Some Practical Methods in AI Dialectics}
\label{sec:practical-methods}

This section shows a few concrete, implementable procedures of AI dialectics for \emph{text}. Compared to vision/audio, the text domain has an unusually convenient primitive: modern language models already expose a
high-quality, computable surrogate for description length via token log-probabilities.
Our goal here is not to propose one single canonical algorithm, but to provide a small toolbox of design patterns that
instantiate the same core idea:

\begin{quote}
\emph{multiple concepts compete to explain the same piece of information by compressing it, we keep the allocation that yields shorter code.}
\end{quote}

We focus on three increasingly ``active'' instantiations: (i) \emph{logprob clustering} with a frozen model,
(ii) \emph{LoRA clustering} \citep{HuEtAl2022LoRA} where each concept learns a better compressor for what it owns, and
(iii) \emph{competitive encoding for segmentation} inside a long sequence.

\subsection{A Computable Surrogate: LLM Log-Probabilities as Code Lengths}
\label{sec:practical-logprob}

Kolmogorov complexity is uncomputable, but dialectics only needs a \emph{reproducible upper bound} that can be compared across competing explanations.
For text, an autoregressive language model $M$ provides exactly such a bound.

Let $x=(x_1,\dots,x_T)$ be a token sequence and let $c$ be a context string (prompt).
If the API returns token log-probabilities $\log p_M(x_t\mid c,x_{<t})$ (in natural log or base-10),
we define the \emph{code length} (in bits) as
\begin{equation}
\label{eq:llm-codelen}
L_M(x\mid c)\;:=\;-\sum_{t=1}^T \log_2 p_M(x_t\mid c,x_{<t}).
\end{equation}
This quantity is not merely a heuristic score: under a standard entropy-coding view,
an autoregressive model $M$ induces a concrete \emph{sequential code} for $x$ given $c$
(e.g.\ arithmetic/range coding driven by the conditional probabilities $p_M(\cdot\mid c,x_{<t})$).
Such a coder produces a bitstring whose length is approximately \eqref{eq:llm-codelen},
and therefore yields a \emph{computable upper bound} on Kolmogorov complexity relative to a fixed decoding protocol:
for an appropriate reference machine $U$ that implements the agreed tokenizer/model/coder,
\[
K_U(x\mid c)\;\le\;L_M(x\mid c)+O(1).
\]
Crucially, the language model, tokenizer conventions, separators/formatting rules, truncation, and the entropy coder are all part of this public protocol (hence part of the effective context).

\subsection{Logprob Clustering: Competitive Coding on a Collection}
\label{sec:practical-logprob-cluster}

\paragraph{Problem.}
Given a collection of strings $S=\{s_1,\dots,s_N\}$ and an integer $K$, produce a partition into $K$ clusters
$C_1,\dots,C_K$ such that items grouped together can \emph{explain/compress} each other well.

\paragraph{Dialectical viewpoint.}
Each cluster $C_i$ acts as a \emph{concept} that can spend computation to compress a candidate string $s$.
The cluster that achieves the shortest conditional description length for $s$ wins $s$.

\paragraph{A minimal assignment rule.}
Fix a deterministic function $\mathrm{ctx}(\cdot)$ that turns a cluster (a set of strings) into a context prompt
(e.g., concatenate a small representative subset with separators).
Given a candidate $s$, score each cluster $i$ by the conditional code length $L_M\!\bigl(s \mid \mathrm{ctx}(C_i)\bigr)$,
and assign $s$ to the cluster achieving the smallest value (shorter code wins).
If $s\in C_i$, score it \emph{leave-one-out} to avoid trivial self-explanation:
use $\mathrm{ctx}(C_i\setminus\{s\})$.

\paragraph{Algorithm.}
\begin{enumerate}
\item \textbf{Initialize.} Randomly assign each $s\in S$ to one of $K$ clusters (or use any heuristic seed).
\item \textbf{Repeat until stable (or for $T$ rounds):}
      \begin{enumerate}
      \item For each cluster $i$, build a context $\mathrm{ctx}(C_i)$ (deterministically).
      \item For each string $s\in S$, compute $L_M\!\bigl(s \mid \mathrm{ctx}(C_i)\bigr)$ for all $i\in[K]$
            (leave-one-out when $s\in C_i$), and reassign $s$ to the cluster with the smallest code length.
      \end{enumerate}
\end{enumerate}

This is the simplest ``multi-party debate'' you can run with an LLM logprob API:
clusters compete by conditional compression; the partition is the stable outcome.

\paragraph{Practical details that matter.}
\begin{itemize}
\item \textbf{Context budgeting.} Because prompts have a context window, $\mathrm{ctx}(C_i)$ should be a bounded summary.
      A simple choice is: sample a fixed number $m$ of representatives from $C_i$, concatenate with separators.
\item \textbf{Caching.} Many scores repeat across iterations; cache $(\mathrm{ctx}(C_i),s)\mapsto L_M(s\mid \mathrm{ctx}(C_i))$.
\item \textbf{Margin for stability.} Only move $s$ if the best code length improves on the current cluster's code length by a small margin,
      to avoid oscillations due to estimation noise.
\end{itemize}

\subsection{LoRA Clustering: Letting Concepts Learn Their Own Compressors}
\label{sec:practical-lora-cluster}

Logprob clustering treats the model as a fixed compressor.
A natural next step is to let each cluster \emph{improve its compressor} on what it currently owns, while still using the same
dialectical adjudication rule (shorter code wins).

\paragraph{Setup.}
Fix a base language model $M_0$ and attach $K$ lightweight adapters (e.g.\ LoRA) producing $K$ conditional models
$M_1,\dots,M_K$.
Each $M_i$ is a candidate ``conceptual compressor''.

\paragraph{EM-style dialectics.}
\begin{enumerate}
\item \textbf{E-step (competition / assignment).}
For each string $s$, compute its code length under each cluster model:
\[
\ell_i(s)\;:=\;L_{M_i}(s\mid c_i),
\]
where $c_i$ is either empty or a small cluster-specific context (optional).
Assign $s$ to the cluster with smallest $\ell_i(s)$.
\item \textbf{M-step (improve compressors).}
For each cluster $i$, update its adapter parameters by training on the strings currently assigned to it,
minimizing the same loss (negative log-likelihood), possibly with regularization.
\end{enumerate}

\paragraph{Why this fits the dialectical reading.}
The ``players'' (clusters) are not merely labels, they actively \emph{learn better arguments} (shorter codes) for the data they
own.
Competition happens in the E-step, assimilation happens in the M-step.

\paragraph{Collapse as \emph{Aufheben} (and a practical merge-and-restart move).}
In practice, the ``collapse'' phenomenon, that is one adapter wins most assignments and the other adapters become irrelevant, is not merely a training pathology, but an important phenomenon in its own right.
We call such events an \emph{Aufhebung} (sublation): a plurality of competing concepts is lifted into a new unity.
A general theoretical account of Aufhebung is beyond the scope of this paper, but there is a simple operational move that makes collapse useful.

\emph{Merge the winner.} Take the winning adapter (e.g.\ LoRA) and merge its weight update into the base model, producing a new base $M_0'$.
Empirically, $M_0'$ can be a better global compressor for the full dataset than the original $M_0$.
Then restart the procedure from $M_0'$ with freshly initialized adapters, so that the next round of competition factors residual structure rather than re-learning the same global structure that has already been absorbed.

\subsection{Competitive Encoding for Text Segmentation}
\label{sec:practical-competitive-encoding}

So far we treated a piece of text as a single unit.
Here we instead consider its internal division. For a single long sequence (a document, transcript, or code file), the natural ``segmentation'' variable is a boundary.

\paragraph{Motivation: a boundary as ``which compressor explains which span''.}
A long sequence often contains internally heterogeneous structure: topic shifts, speaker changes, style changes, or a transition from narrative to technical details.
A principled way to discover such structure is to let \emph{multiple compressors compete} for different spans and choose a boundary that minimizes the resulting total code length.
This idea has appeared in earlier forms \citep{BaxterOliver1996KindestCut,OliverBaxterWallace1998MMLSegmentation,BohlinEtAl2014MapEquation},
but our setup is different: we emphasize \emph{autoregressive} coding of a \emph{single} sequence and treat segmentation explicitly as a \emph{dialectical boundary optimization}
between two anchored explanation regimes.

\paragraph{A simplified setup.}
We start from the simplest nontrivial case: a single breakpoint that splits the sequence into two contiguous parts.

Let the text be split into atomic blocks
\[
S \;=\; X_1 X_2 \cdots X_T,
\]
where each $X_t$ is, for example, a sentence, a line, a paragraph, a turn in dialogue, or a token.

We assume two \emph{grounds} $\hat A,\hat B$ which pin down two coding regimes.
Concretely, these grounds can be implemented as:
(i) two prompts for the same base LLM, (ii) two adapters (e.g.\ LoRA heads) attached to a shared base model, or
(iii) two separate probabilistic models.
In all cases, the only requirement is that encoder and decoder share the same public protocol for how $\hat A$ or $\hat B$
selects a predictive distribution over next tokens.

\paragraph{A breakpoint as a determination.}
For any $\tau\in\{0,1,\dots,T\}$, the triple $(S, X_{1:\tau}, X_{\tau+1:T})$ is one determination node inside a larger grounded diagram.
\Cref{fig:competitive-segmentation-two-regimes} makes explicit (i) dependence on the two grounds $\hat A,\hat B$ that pin the two coding regimes, and (ii) the sequential dependence of later blocks on earlier blocks across the boundary (the right span is evaluated conditional on the decoded left span).

\begin{figure}[h]
\centering
\begin{tikzpicture}
    \node[box] (Top) at (0, 2.2) {};
    \node[box] (MidR) at (0.8, 0.4) {};
    \node[box] (BotL) at (-0.8, -1.2) {};

    \draw (Top.north) -- ++(0, 0.6) node[above] {$S$};

    \draw (Top.south west) -- (-0.8, 1.2) coordinate (elbowA) -- (BotL.north);
    \draw (BotL.south) -- ++(0, -0.6);
    \draw (MidR.south) -- ++(0, -0.6);
    \path (Top.south west) -- (elbowA) node[midway, above left, inner sep=1pt] {$X_{1:\tau}$};

    \draw (Top.south east) -- (0.8, 1.2) coordinate (elbowB) -- (MidR.north);
    \path (Top.south east) -- (elbowB) node[midway, above right, inner sep=1pt] {$X_{\tau+1:T}$};

    \draw[arc] (MidR.west) -- (-0.8, 0.4);
    \fill (-0.8, 0.4) circle (1.5pt);

    \draw[arc] (MidR.east) -- ++(0.6, 0) node[right] {$\hat{B}$};
    \draw[arc] (BotL.east) -- ++(0.6, 0) node[right] {$\hat{A}$};
\end{tikzpicture}
\caption{Competitive encoding with a single boundary: two grounded coding regimes $\hat A,\hat B$ compete to explain different contiguous spans of the same sequence $S$.}
\label{fig:competitive-segmentation-two-regimes}
\end{figure}
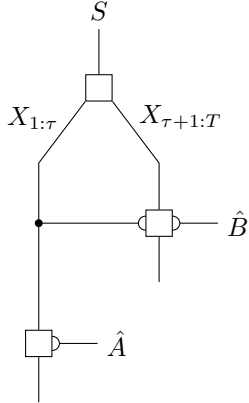

\paragraph{A computable proxy.}
We use grounded autoregressive code lengths $L_{\hat A}(\cdot)$ and $L_{\hat B}(\cdot)$ to define
\begin{equation}
\label{eq:two-regime-codelen}
\mathcal{L}(\tau)
\;:=\;
L_{\hat A}\!\bigl(X_{1:\tau}\bigr)
\;+\;
L_{\hat B}\!\bigl(X_{\tau+1:T}\mid X_{1:\tau}\bigr)
\end{equation}
and take
\begin{equation}
\label{eq:tau-star}
\tau^\star \;\in\;\arg\min_{\tau\in\{0,\dots,T\}} \mathcal{L}(\tau).
\end{equation}

The same template can be extended to multiple boundaries (change-point coding) and to learned adapters/heads (e.g.\ LoRA).
 \section{Speculative Future Directions}
\label{sec:next-steps}

So far we framed \emph{AI dialectics} as an optimization protocol for finding low-excess determinations and
stable boundaries via competitive coding.
In this section we sketch a few \emph{speculative} directions suggested by that framing.

\subsection{Features as Computed Side Information for Compression}
\label{sec:features-as-sideinfo}

One of the central open questions in deep learning is \emph{why feature learning works so well}.
Empirically, in modern \emph{unsupervised} pipelines, we train a powerful compressor/predictor on a dataset $X$
(e.g.\ by maximizing likelihood / minimizing prequential log-loss), and \emph{useful features} often \emph{emerge} as a byproduct:
the learned internal representations transfer to many downstream tasks.

Here we consider a reverse viewpoint:
Instead of hoping that good compression \emph{incidentally} yields good features,
can we \emph{design} features (as computable functions) whose only purpose is to \emph{improve compression}?

\paragraph{The reverse problem: synthesize a feature and then compress.}
Let $x$ be the dataset that need to be compressed, and let $f$ be a computable \emph{feature function}.
We generate auxiliary information
\[
y \;=\; f(x),
\]
and then compress the \emph{augmented} data $(x,y)$ rather than $x$ alone.
At first glance this seems wasteful and would lead to longer code length: encoding $y$ should increase code length because $y$ is additional data.

However, at the level of ideal Kolmogorov complexity, adding a \emph{deterministic} feature does not change
the information content \emph{given} the feature program.
Indeed, since $y$ is computable from $(x,f)$, and $x$ is trivially computable from $(x,y,f)$, we have
\begin{equation}
K(x \mid f)\ \stackrel{+}{=}\ K(x,y \mid f)\big|_{y=f(x)}.
\label{eq:feature-does-not-add-ideal-info}
\end{equation}
So this construction does \emph{not} beat information conservation: adding $y$ does not increase or reduce the code length for compressing $x$.

\paragraph{Why it can still help: practical compressors are not universal.}
The key is that real-world compressors are restricted code families.
Fix a lossless compressor (or coding protocol) $\mathsf{C}$, and write $\ell_{\mathsf{C}}(x,y)$ for the achieved code length
when encoding the pair $(x,y)$.\footnote{Concretely, $\mathsf{C}$ may be an entropy coder with a chosen probabilistic model,
a prequential predictor, an MDL two-part code, or any agreed-upon description method that is effectively decodable.}
Then $(x,y)$ is an \emph{effective} description of $x$ given $f$ and $\mathsf{C}$ (decode $(x,y)$, discard $y$), hence
\begin{equation}
K(x\mid f,\mathsf{C})\ \stackrel{+}{\le}\ \ell_{\mathsf{C}}(x,y)\big|_{y=f(x)}.
\label{eq:feature-upper-bound}
\end{equation}
Equation \eqref{eq:feature-upper-bound} turns feature design into a principled compression problem:
\emph{choose $f$ so that a fixed practical compressor can describe $(x,f(x))$ with a short code.}
Even though $y$ must also be encoded, a well-chosen $y$ can expose structure that $\mathsf{C}$ fails to exploit from $x$ alone,
making the joint description shorter (and sometimes even shorter than $\ell_{\mathsf{C}}(x)$ under the same protocol).

Therefore, the corresponding objective for finding good feature is
\[
\min_{f}\ \ \ell_{\mathsf{C}}(x, f(x)).
\]

\paragraph{Where dialectics enters: low-complexity determinations as feature generators.}
Our hypothesis is that \emph{dialectics} is an effective search protocol for discovering such $f$'s.
Informally, a low-complexity determination, when instantiated on data, can be materialized as a computable mapping that produces auxiliary variables
(boundaries, assignments, dictionaries, or other structured side information) that are:
\begin{enumerate}
\item \textbf{cheap to specify} (requires little or no human annotation),
\item \textbf{scalable across the dataset} (from sparse seeds, the concept can ``grow'' to cover the dataset),
\item \textbf{compression-relevant} (built by optimizing compression, yields determinations with low excess information).
\end{enumerate}
In this sense, a determination is not merely a \emph{segmentation output}, it is a candidate \emph{feature mechanism} $f$.

\paragraph{A falsifiable evaluation protocol.}
We sharpen the above hypothesis into a concrete, falsifiable protocol by constraining ourselves to a deliberately
\emph{weak} compressor $\mathsf{C}_0$ (a limited code family / model class) and an iterative dialectical procedure:
\begin{enumerate}
\item use $\mathsf{C}_0$ to score dialectical moves on raw data and produce an initial feature/side-information $y_1=f_1(x)$,
\item re-run the same weak-compressor scoring on the augmented object $(x,y_1)$ to produce a higher-level feature $y_2=f_2(x,y_1)$,
\item iterate, accepting only features that reduce a tracked code length $\ell_{\mathsf{C}_0}(x,y_{\le t})$.
\end{enumerate}
The hypothesis is that for some modest number of rounds $T$, the resulting representation $y_{\le T}$, with only shallow
downstream learning, can match (or approach) the performance of current pipelines that rely on non-dialectical feature emergence.
If true, this would upgrade ``compression as intelligence'' from a passive observation (features emerge) into an
active design principle (features are synthesized to improve compression). Future work will determine whether this program
succeeds in practice, and under which code families and move sets it becomes competitive.

\subsection{Scale Compute for Dialectics}
\label{sec:dialectics-compute}

The current dominant paradigm for scaling capability is to spend compute on gradient-based optimization of large parametric models.
Dialectics suggests a complementary axis: spend compute on \emph{searching for low-excess determinations}.

Three properties make dialectical search attractive as a scaling target:
\begin{enumerate}
\item \textbf{Parallelism by design.} Competitions are local (a patch $X$ is contested by a few concepts),
      so many contests can run in parallel across a dataset, an image, or a document collection.
\item \textbf{Cacheability and reuse.} Once a concept has paid the compute cost to compress a certain kind of patch,
      that ``argument'' (a learned code/model or a cached sufficient statistic) can be reused across future patches.
\item \textbf{A native communication-computation trade-off.} Concepts can be transmitted as small grounds/seeds and reconstructed by running dialectical growth.
      With sparse grounds, more compute is needed to rebuild a stable boundary from scratch. With richer grounds, less compute is needed, and in the limit one could transmit the full boundary and only spend compute to verify it under the shared protocol.
\end{enumerate}

\subsection{AI Dialectics as a Bridge between Neural Models and Symbols}
Neural models provide a computable proxy for description length (e.g., log-probabilities or reconstruction codes), and dialectical competition crystallizes these numbers into discrete and reusable objects, like boundaries and clusters, that form a symbolic layer. This layer can then be lifted into an explicit symbol system (e.g., predicates, rules, graphs), enabling neural-symbolic coordination between neural network learning and discrete reasoning.

\subsection{Cross-Modality Ontologies for World Models}
A ``world model'' in our framing is not a single monolithic predictor, but an evolving \emph{ontology of concepts} that consistently links multiple modalities. Concretely, suppose an underlying event or situation generates a bundle of observations
\[
o = (o^{\text{img}}, o^{\text{text}}, o^{\text{audio}}, \dots).
\]
Each modality offers its own effective coding scheme (a vision codec, an ASR model, an LLM, \dots), hence its own computable surrogate for description length. The cross-modality problem is then: \emph{find concepts that are simultaneously compressive and mutually identifiable across these views}.

Determinations are very suitable here because the dialectical/compression view does not privilege any modality in the first place. We treat every observation bundle, which could include images, text, audio, trajectories, tables, etc., as a single information object $I$ (a binary string). A ``segmentation'' is then just applying some split operators to $I$ to produce multiple parts:
spatial cropping of an image, temporal windowing of a stream, cutting a text sequence into spans, partitioning a dataset into subsets, or even additive decompositions of vectors (e.g.\ separating an audio mixture into components). These operators can be freely composed and mixed: one may first slice by time, then carve regions in each frame, then select subsets of samples, and so on.

A (cross-modal) concept is simply one of such parts $(P_1,\dots,P_m)$ that connects to one determination node together with $I$. One concept could simultaneously live in different ``modalities'', if only it satisfies recovery constraint imposed by determination, i.e.\ $\forall i,\ K(P_i\mid P_{-i}, I)\stackrel{+}{=}0$ and $K(I\mid P_1,\dots,P_m)\stackrel{+}{=}0$.
For example, let $I$ be the entire Internet: one possible split is $P_1$ collecting all cat-related information (cat regions in images, cat clips in videos, textual descriptions of cats, and cat meows in audio), and $P_2$ collecting everything that is not about cats.

Dialectics is then the optimization mechanism that discovers and stabilizes such low-excess configurations by letting candidate parts compete to compress contested patches, without ever needing to treat ``modality'' as a special primitive.
In particular, a concept can ``expand'' by mining cross-modal predictive links, 
e.g.\ textual descriptions of cat habits or breeds can be aligned with visual cat features, 
and any coding scheme that can exploit such links will help concept of ``cat'' to naturally extend from text only into a cross-modal one.

\subsection{A Common Ontology Server as Shared Infrastructure}
\label{sec:common-ontology-server}

In order to make dialectics function beyond a single intelligent being setting, we need to tackle the problem of communicating ``the same concept'' across independent participants.

Consider a set of intelligent participants:
humans, model instances, multi-agent systems, organizations, or other computational actors.
Assume each participant has sufficient compute to run a dialectical protocol, but no privileged trust relationship
with others. The key friction is not only that participants may disagree about the \emph{claims} about a concept,
but that they may fail to even \emph{point to the same concept}.

To this end, we need shared
infrastructure that makes \emph{concepts} communicable, comparable, and reproducible. This subsection proposes such an infrastructure: a \emph{common ontology server}.
The server does not ``decide truth'', rather, it provides a public,
verifiable substrate for (i) naming determinations, (ii) reproducing them under a shared protocol,
and (iii) challenging and improving them via dialectical competition.
In short: it is a registry and execution layer for \emph{concepts as computed objects}.

\paragraph{Core operations: read, create, reproduce, challenge.}
The server exposes a minimal set of operations that are intentionally symmetric across participants.

\begin{enumerate}
\item \textbf{Read.} Fetch a concept record:
its determination specification, its declared objectives, and any public evidence (grounds, cached scores, provenance log).
\item \textbf{Create.} Register a new candidate concept:
upload its specification and a minimal ground set (e.g.\ a few typical samples, seeds, or constraints) sufficient to reconstruct it.
\item \textbf{Reproduce.} Given an identifier, rerun the dialectical protocol locally or via a server-executed job
to reconstruct the concept, and verify the stated contracts.
\item \textbf{Challenge.} Propose a dialectical move that leads to alternative determination;
the protocol adjudicates the challenge by re-scoring under the shared $\mathsf{C}$ and verification rules.
If the challenger improves the score, the concept is updated.
\end{enumerate}

Crucially, the server's role is not to ``declare winners by authority'' but to \emph{anchor reproducibility}.
Any participant can verify outcomes by re-executing the same protocol on the same inputs.

\paragraph{Verifiability: what can be checked and what cannot.}
Two kinds of claims appear in this framework:

\begin{itemize}
\item \textbf{Correctness of a code.} A claimed code length under a specific decodable protocol $\mathsf{C}$ is verifiable:
one can decode and check equality with the target object (or check a prescribed reconstruction error tolerance).
This makes it hard to cheat about \emph{achieved} description length, as long as the reference machine and allowed side-information are fixed.
\item \textbf{Optimality of a concept.} Claims of global optimality are not verifiable in general
(because Kolmogorov-optimality is only upper semi-computable).
The server therefore treats every concept as an \emph{upper bound} and every improvement as a \emph{better upper bound}.
Stability is empirical and procedural: a concept is ``current best'' if it has survived challenges and no known move improves it.
\end{itemize}

This is an important philosophical and engineering boundary:
the ontology server supplies a \emph{public game board} and a \emph{comparable score},
not an oracle for metaphysical certainty.

\paragraph{Preventing ``hidden information'': standardizing the reference machine.}
The main way to ``cheat'' compression-based evaluation is to smuggle information through an unshared library,
a proprietary model checkpoint, or an implicit assumption about the interpreter.
To keep comparisons meaningful, the server must define a \emph{strict} evaluation environment:

\begin{itemize}
\item a prefix-free universal machine $U$ (or a small set of sanctioned machines),
\item a versioned standard library $\mathcal{L}$ (including any foundation models if they are allowed at all \citep{BommasaniEtAl2021FoundationModels}),
\item explicit accounting of all auxiliary resources (model parameters, retrieval indices, external databases),
\item deterministic build rules (so that ``the same concept id'' implies the same executable semantics).
\end{itemize}

Under this discipline, description length becomes ``uncheatable'' in the limited but practically crucial sense that:
\emph{any claimed improvement must be realizable as an executable, decodable artifact under the shared environment.}
Participants can still disagree on whether the environment is the ``right'' one, but the server forces that disagreement to be explicit.

\paragraph{Caching as shared compute: the server as a concept compute substrate.}
Dialectics can be compute-hungry: repeated contests over patches, repeated scoring under $\mathsf{C}$,
and repeated reconstruction checks.
A core infrastructural advantage of a server is that it can host \emph{reusable caches}:
conditional code-length estimates, learned local models, sufficient statistics, partial segmentations, or verified intermediate artifacts.

We emphasize two design constraints:
\begin{enumerate}
\item \textbf{Caches must be optional.} A participant should be able to reproduce a concept from first principles
using only the specification and grounds, and caches merely accelerate.
\item \textbf{Caches must be auditable.} The server should attach verification metadata:
hashes of inputs, deterministic seeds, and a replay recipe so that caches can be recomputed or spot-checked.
\end{enumerate}

\paragraph{Dialectics as a public adversarial process.}
A common ontology server is not only a registry; it is a \emph{public dialectical arena}.
Any participant can submit counterexamples, propose boundary flips, or introduce alternative determinations.
This creates a built-in adversarial pressure that is aligned with the protocol:
unstable or spurious concepts will be challenged and either refined or discarded.

In particular, the server supports a procedural notion of \emph{robustness}:
a determination is robust if its compressive advantage persists under (i) new data, (ii) adversarially selected patches.
This is the infrastructural analogue of scientific replication:
concepts are not accepted because an authority endorses them, but because they remain compressive under repeated attacks.

\paragraph{Scope and limitations.}
The proposed infrastructure is deliberately limited in what it claims.

\begin{itemize}
\item It does \emph{not} solve the uncomputability of $K(\cdot)$, and it does not certify global optimality.
\item It does \emph{not} eliminate pluralism: different communities may adopt different $\mathsf{C}$ or different reference machines.
Instead it makes those choices explicit and versioned.
\item It raises real deployment concerns (privacy of $I$, governance of the reference environment, concentration of compute),
which must be addressed if such a server is to become legitimate shared infrastructure.
\end{itemize}

Nevertheless, if successful, a common ontology server would operationalize the slogan
\emph{``dialectics is a concept engine''}:
it turns compute into reproducible concepts, makes concepts addressable and debatable, and reduces the cost for
distributed agents to coordinate on ``what they are talking about''.
This could serve as a missing infrastructural layer between modern neural compressors
(which provide scalable, computable proxies for description length)
and the symbolic objects (boundaries, clusters) that enable stable reasoning and communication.

\subsection{Dialectics at World Scale}
\label{sec:dialectics-world-scale}

Our core premise here is that \emph{concepts are built on experience}.
In particular, a concept is not an arbitrary isolated string: it only ``counts'' as a concept if it is
\emph{tied to the total experience} $I$ by determination constraints, for instance, when there exists some complementary remainder $R$ such that
$(I,C,R)$ forms a $3$-way determination (ideally with low excess).
This enforces a strong structural restriction: concepts must connect to the whole, because recoverability is defined relative
to the whole.

\paragraph{Richer experience enables broader, finer concepts.}
As experience expands, the global object $I$ contains more repeated structure, longer-range regularities, and cross-context invariants.
Only then do certain \emph{macro} concepts become viable: they are not supported by any single patch, but by many dispersed instances whose common pattern becomes visible at scale.
In short, richer $I$ can support a concept system with wider scope and higher-level abstractions, because those abstractions become reusable compressive handles across the enlarged experience.

For example, a child's experience is local and sparse, so their stable concepts are mostly concrete: ``toy'', ``home'', ``dog'', ``hungry''.
As experience accumulates through school, work, relationships, and repeated exposure to social and economic situations, more complex macro concepts begin to emerge: ``fairness'', ``reputation'', ``career'', ``inflation'', ``social norms''.
None of these live in a single episode, they crystallize only when many episodes can be compressed and navigated more effectively by reusing the same high-level handle.

\paragraph{How ``world scale'' changes the picture.}
At Internet scale the experience budget dwarfs any individual lifetime.
As a crude accounting example, a corpus of $T\approx 10^{13}$ tokens has $\log_2 T \approx 43$ bits of positional
address information per token (just to specify an index).
Averaged over $8\times 10^9$ people, this corresponds to only about $10^3$ tokens per person, roughly the amount of text on a single page of a typical conference paper.

Many existing concept-mining methods already work well at roughly this \emph{local} scale: given a single short document (or a small context window),
they can often answer ``what is the key concept in this span?'' or ``which feature explains these words/pixels?''.
In computer vision, promptable segmentation foundation models such as Segment Anything (SAM) and its variants make it easy to propose boundaries around an object in one image \citep{KirillovEtAl2023SegmentAnything,ZhangEtAl2023MobileSAM,KeEtAl2023SegmentAnythingHQ}.
In interpretability, circuit-style analyses can sometimes isolate a mechanism that explains a narrow behavior of a model on a specific input.

However, at Internet scale, there are cross-domain determinations that are too diffuse for local methods to notice:
patterns that only become compressible when one can connect distant instances (across subcultures, disciplines, languages, and modalities).
It is therefore plausible that an Internet-scale foundation model, trained as a massive compressor of $I$, has already
internalized such cross-context concepts, simply because reusing them reduces description length.

The awkward part is that these concepts need not align with any familiar human category.
A model may exploit a latent concepts that has no widely agreed name, definition, or social convention.
In other words, the system can \emph{use} a concept without humans being able to \emph{point to} it, communicate it, or audit it.

This motivates our proposal: a world-scale dialectical optimizer that \emph{surfaces} previously unnamed concepts and \emph{produces} compact seeds/grounds that pin them down and make them re-identifiable across contexts, enabling human interpretation and audit.
In AI dialectics, ``new concepts'' are not mystical: they are simply \emph{new low-excess determinations} made possible by a large $I$
and enough compute to search for and stabilize them. 
\bibliography{content/references}

\begin{thebibliography}{44}
\providecommand{\natexlab}[1]{#1}
\providecommand{\url}[1]{\texttt{#1}}
\expandafter\ifx\csname urlstyle\endcsname\relax
  \providecommand{\doi}[1]{doi: #1}\else
  \providecommand{\doi}{doi: \begingroup \urlstyle{rm}\Url}\fi

\bibitem[Abdali et~al.(2025)Abdali, Goksen, Amizadeh, and Koishida]{SelfReflectingLLM2025}
Sara Abdali, Can Goksen, Saeed Amizadeh, and Kazuhito Koishida.
\newblock Self-reflecting large language models: A hegelian dialectical approach, 2025.

\bibitem[Adipoetra and Martin(2025)]{AdipoetraMartin2025}
Michael Adipoetra and S{\'e}gol{\`e}ne Martin.
\newblock Deep generative clustering with {VAEs} and expectation-maximization, 2025.

\bibitem[Anghel et~al.(2025)Anghel, Anghel, Pecheanu, Susnea, Cocu, and Istrate]{MultiModelDialecticalEval2025}
Catalin Anghel, Andreea~Alexandra Anghel, Emilia Pecheanu, Ioan Susnea, Adina Cocu, and Adrian Istrate.
\newblock Multi-model dialectical evaluation of {LLM} reasoning chains: A structured framework with dual scoring agents.
\newblock \emph{Informatics}, 12\penalty0 (3):\penalty0 76, 2025.
\newblock \doi{10.3390/informatics12030076}.

\bibitem[Baxter and Oliver(1996)]{BaxterOliver1996KindestCut}
R.~A. Baxter and J.~J. Oliver.
\newblock The kindest cut: Minimum message length segmentation.
\newblock In \emph{Algorithmic Learning Theory: 7th International Workshop, {ALT} '96, Sydney, Australia, October 23--25, 1996. Proceedings}, volume 1160 of \emph{Lecture Notes in Computer Science}, pages 83--90. Springer, 1996.
\newblock \doi{10.1007/3-540-61863-5_36}.

\bibitem[Bennett et~al.(1998)Bennett, G{\'a}cs, Li, Vit{\'a}nyi, and Zurek]{Bennett1998}
Charles~H. Bennett, P{\'e}ter G{\'a}cs, Ming Li, Paul M.~B. Vit{\'a}nyi, and Wojciech~H. Zurek.
\newblock Information distance.
\newblock \emph{IEEE Transactions on Information Theory}, 44\penalty0 (4):\penalty0 1407--1423, 1998.
\newblock \doi{10.1109/18.681318}.

\bibitem[Blei et~al.(2003)Blei, Ng, and Jordan]{BleiNgJordan2003}
David~M. Blei, Andrew~Y. Ng, and Michael~I. Jordan.
\newblock Latent dirichlet allocation.
\newblock \emph{Journal of Machine Learning Research}, 3:\penalty0 993--1022, 2003.

\bibitem[Bohlin et~al.(2014)Bohlin, Edler, Lancichinetti, and Rosvall]{BohlinEtAl2014MapEquation}
Ludvig Bohlin, Daniel Edler, Andrea Lancichinetti, and Martin Rosvall.
\newblock Community detection and visualization of networks with the map equation framework.
\newblock In Ying Ding, Ronald Rousseau, and Dietmar Wolfram, editors, \emph{Measuring Scholarly Impact: Theory and Practice}, pages 3--34. Springer International Publishing, 2014.
\newblock \doi{10.1007/978-3-319-10377-8_1}.

\bibitem[Bommasani et~al.(2021)]{BommasaniEtAl2021FoundationModels}
Rishi Bommasani et~al.
\newblock On the opportunities and risks of foundation models, 2021.

\bibitem[Caillat-Grenier et~al.(2024)Caillat-Grenier, Romashchenko, and Zyavgarov]{caillat-grenier-2024-well-mixing}
Geoffroy Caillat-Grenier, Andrei Romashchenko, and Rustam Zyavgarov.
\newblock Common information in well-mixing graphs and applications to information-theoretic cryptography.
\newblock In \emph{2024 IEEE Information Theory Workshop (ITW)}, pages 181--186, 2024.
\newblock \doi{10.1109/ITW61385.2024.10806994}.

\bibitem[Cilibrasi and Vit{\'a}nyi(2005)]{CilibrasiVitanyi2005}
Rudi Cilibrasi and Paul M.~B. Vit{\'a}nyi.
\newblock Clustering by compression.
\newblock \emph{IEEE Transactions on Information Theory}, 51\penalty0 (4):\penalty0 1523--1545, 2005.
\newblock \doi{10.1109/TIT.2005.844059}.

\bibitem[Cover and Thomas(2006)]{CoverThomas2006}
Thomas~M. Cover and Joy~A. Thomas.
\newblock \emph{Elements of Information Theory}.
\newblock Wiley, 2 edition, 2006.

\bibitem[Dawid(1984)]{Dawid1984}
A.~P. Dawid.
\newblock Present position and potential developments: some personal views: statistical theory: the prequential approach.
\newblock \emph{Journal of the Royal Statistical Society: Series A (General)}, 147\penalty0 (2):\penalty0 278--292, 1984.
\newblock \doi{10.2307/2981683}.

\bibitem[Dempster et~al.(1977)Dempster, Laird, and Rubin]{Dempster1977}
A.~P. Dempster, N.~M. Laird, and D.~B. Rubin.
\newblock Maximum likelihood from incomplete data via the {EM} algorithm.
\newblock \emph{Journal of the Royal Statistical Society: Series B (Methodological)}, 39\penalty0 (1):\penalty0 1--38, 1977.
\newblock \doi{10.1111/j.2517-6161.1977.tb01600.x}.

\bibitem[G{\'a}cs and K{\"o}rner(1973)]{gacs-korner-1973-commoninfo}
P{\'e}ter G{\'a}cs and J{\'a}nos K{\"o}rner.
\newblock Common information is far less than mutual information.
\newblock \emph{Problems of Control and Information Theory}, 2:\penalty0 149--162, 1973.

\bibitem[Goodfellow et~al.(2014)Goodfellow, Pouget-Abadie, Mirza, Xu, Warde-Farley, Ozair, Courville, and Bengio]{GoodfellowEtAl2014}
Ian~J. Goodfellow, Jean Pouget-Abadie, Mehdi Mirza, Bing Xu, David Warde-Farley, Sherjil Ozair, Aaron Courville, and Yoshua Bengio.
\newblock Generative adversarial nets.
\newblock In \emph{Advances in Neural Information Processing Systems}, volume~27, pages 2672--2680, 2014.

\bibitem[Greff et~al.(2015)Greff, Srivastava, and Schmidhuber]{BindingViaReconstructionClustering}
Klaus Greff, Rupesh~Kumar Srivastava, and J{\"u}rgen Schmidhuber.
\newblock Binding via reconstruction clustering, 2015.

\bibitem[Gr{\"u}nwald(2007)]{Grunwald2007}
Peter~D. Gr{\"u}nwald.
\newblock \emph{The Minimum Description Length Principle}.
\newblock MIT Press, 2007.

\bibitem[Hoory et~al.(2006)Hoory, Linial, and Wigderson]{HooryLinialWigderson2006}
Shlomo Hoory, Nathan Linial, and Avi Wigderson.
\newblock Expander graphs and their applications.
\newblock \emph{Bulletin of the American Mathematical Society}, 43\penalty0 (4):\penalty0 439--561, 2006.
\newblock \doi{10.1090/S0273-0979-06-01126-8}.

\bibitem[Hu et~al.(2022)Hu, Shen, Wallis, Allen-Zhu, Li, Wang, Wang, and Chen]{HuEtAl2022LoRA}
Edward~J. Hu, Yelong Shen, Phillip Wallis, Zeyuan Allen-Zhu, Yuanzhi Li, Shean Wang, Lu~Wang, and Weizhu Chen.
\newblock {LoRA}: Low-rank adaptation of large language models.
\newblock In \emph{International Conference on Learning Representations (ICLR)}, 2022.
\newblock arXiv:2106.09685.

\bibitem[Iwaniec and Kowalski(2004)]{IwaniecKowalski2004}
Henryk Iwaniec and Emmanuel Kowalski.
\newblock \emph{Analytic Number Theory}, volume~53 of \emph{American Mathematical Society Colloquium Publications}.
\newblock American Mathematical Society, 2004.

\bibitem[Ke et~al.(2023)Ke, Ye, Danelljan, Liu, Tai, Tang, and Yu]{KeEtAl2023SegmentAnythingHQ}
Lei Ke, Mingqiao Ye, Martin Danelljan, Yifan Liu, Yu-Wing Tai, Chi-Keung Tang, and Fisher Yu.
\newblock Segment anything in high quality.
\newblock In \emph{Advances in Neural Information Processing Systems 36 (NeurIPS 2023)}, pages 29914--29934, 2023.

\bibitem[Kirillov et~al.(2023)Kirillov, Mintun, Ravi, Mao, Rolland, Gustafson, Xiao, Whitehead, Berg, Lo, Doll{\'a}r, and Girshick]{KirillovEtAl2023SegmentAnything}
Alexander Kirillov, Eric Mintun, Nikhila Ravi, Hanzi Mao, Chloe Rolland, Laura Gustafson, Tete Xiao, Spencer Whitehead, Alexander~C. Berg, Wan-Yen Lo, Piotr Doll{\'a}r, and Ross Girshick.
\newblock Segment anything.
\newblock In \emph{Proceedings of the IEEE/CVF International Conference on Computer Vision (ICCV)}, pages 4015--4026, October 2023.
\newblock \doi{10.1109/ICCV51070.2023.00371}.

\bibitem[Kolmogorov(1965)]{Kolmogorov1965}
A.~N. Kolmogorov.
\newblock Three approaches to the quantitative definition of information.
\newblock \emph{Problems of Information Transmission}, 1\penalty0 (1):\penalty0 1--7, 1965.

\bibitem[Levin(1974)]{Levin1974}
Leonid~A. Levin.
\newblock Laws of information conservation (non-growth) and aspects of the foundation of probability theory.
\newblock \emph{Problems of Information Transmission}, 10\penalty0 (3):\penalty0 206--210, 1974.

\bibitem[Li and Vit{\'a}nyi(2019)]{LiVitanyi2019}
Ming Li and Paul M.~B. Vit{\'a}nyi.
\newblock \emph{An Introduction to Kolmogorov Complexity and Its Applications}.
\newblock Springer, 4 edition, 2019.

\bibitem[Lloyd(1982)]{Lloyd1982}
Stuart Lloyd.
\newblock Least squares quantization in {PCM}.
\newblock \emph{IEEE Transactions on Information Theory}, 28\penalty0 (2):\penalty0 129--137, 1982.
\newblock \doi{10.1109/TIT.1982.1056489}.

\bibitem[Locatello et~al.(2019)Locatello, Bauer, Lucic, R{\"a}tsch, Gelly, Sch{\"o}lkopf, and Bachem]{LocatelloEtAl2019}
Francesco Locatello, Stefan Bauer, Mario Lucic, Gunnar R{\"a}tsch, Sylvain Gelly, Bernhard Sch{\"o}lkopf, and Olivier Bachem.
\newblock Challenging common assumptions in the unsupervised learning of disentangled representations.
\newblock In \emph{Proceedings of the 36th International Conference on Machine Learning (ICML)}, volume~97 of \emph{Proceedings of Machine Learning Research}, pages 4114--4124, 2019.

\bibitem[Ma et~al.(2007)Ma, Derksen, Hong, and Wright]{MaDerksenHongWright2007}
Yi~Ma, Harm Derksen, Wei Hong, and John Wright.
\newblock Segmentation of multivariate mixed data via lossy data coding and compression.
\newblock \emph{IEEE Transactions on Pattern Analysis and Machine Intelligence}, 29\penalty0 (9):\penalty0 1546--1562, 2007.
\newblock \doi{10.1109/TPAMI.2007.1085}.

\bibitem[MacQueen(1967)]{MacQueen1967}
J.~MacQueen.
\newblock Some methods for classification and analysis of multivariate observations.
\newblock In L.~M. Le~Cam and J.~Neyman, editors, \emph{Proceedings of the Fifth Berkeley Symposium on Mathematical Statistics and Probability}, volume~1, pages 281--297. University of California Press, 1967.

\bibitem[McLachlan and Peel(2000)]{McLachlanPeel2000}
Geoffrey McLachlan and David Peel.
\newblock \emph{Finite Mixture Models}.
\newblock Wiley, 2000.
\newblock ISBN 978-0-471-00626-8.

\bibitem[Muchnik(2002)]{Muchnik2002}
Andrej~A. Muchnik.
\newblock Conditional complexity and codes.
\newblock \emph{Theoretical Computer Science}, 271\penalty0 (1--2):\penalty0 97--109, 2002.
\newblock \doi{10.1016/S0304-3975(01)00033-0}.

\bibitem[Oliver et~al.(1998)Oliver, Baxter, and Wallace]{OliverBaxterWallace1998MMLSegmentation}
J.~J. Oliver, R.~A. Baxter, and C.~S. Wallace.
\newblock Minimum message length segmentation.
\newblock In Xindong Wu, Ramamohanarao Kotagiri, and Kevin~B. Korb, editors, \emph{Advances in Knowledge Discovery and Data Mining: Second Pacific-Asia Conference, {PAKDD} '98, Melbourne, Australia, April 15--17, 1998. Proceedings}, volume 1394 of \emph{Lecture Notes in Computer Science}, pages 222--233. Springer, 1998.
\newblock \doi{10.1007/3-540-64383-4_19}.

\bibitem[Rao et~al.(2009)Rao, Mobahi, Yang, Sastry, and Ma]{MobahiNaturalImageSegmentation}
Shankar~R. Rao, Hossein Mobahi, Allen~Y. Yang, S.~Shankar Sastry, and Yi~Ma.
\newblock Natural image segmentation with adaptive texture and boundary encoding.
\newblock In \emph{Asian Conference on Computer Vision (ACCV)}, pages 135--146. Springer, 2009.

\bibitem[Rissanen(1978)]{Rissanen1978}
Jorma Rissanen.
\newblock Modeling by shortest data description.
\newblock \emph{Automatica}, 14\penalty0 (5):\penalty0 465--471, 1978.
\newblock \doi{10.1016/0005-1098(78)90005-5}.

\bibitem[Romashchenko(2000)]{romashchenko-2000-nonmaterializable}
Andrei Romashchenko.
\newblock Pairs of words with nonmaterializable mutual information.
\newblock \emph{Problems of Information Transmission}, 36\penalty0 (1), 2000.

\bibitem[Savarese et~al.(2021)Savarese, Kim, Maire, Shakhnarovich, and McAllester]{SavareseInpaintingErrorMaximization}
Pedro Savarese, Sunnie S.~Y. Kim, Michael Maire, Greg Shakhnarovich, and David McAllester.
\newblock Information-theoretic segmentation by inpainting error maximization.
\newblock In \emph{Proceedings of the IEEE/CVF Conference on Computer Vision and Pattern Recognition (CVPR)}, pages 4029--4039, 2021.
\newblock \doi{10.1109/CVPR46437.2021.00402}.

\bibitem[Schmidhuber(1992)]{Schmidhuber1992}
J{\"u}rgen Schmidhuber.
\newblock Learning factorial codes by predictability minimization.
\newblock \emph{Neural Computation}, 4\penalty0 (6):\penalty0 863--879, 1992.
\newblock \doi{10.1162/neco.1992.4.6.863}.

\bibitem[Shannon(1948)]{Shannon1948}
Claude~E. Shannon.
\newblock A mathematical theory of communication.
\newblock \emph{Bell System Technical Journal}, 27\penalty0 (3--4):\penalty0 379--423, 623--656, 1948.
\newblock \doi{10.1002/j.1538-7305.1948.tb01338.x}.

\bibitem[Solomonoff(1964)]{Solomonoff1964}
Ray~J. Solomonoff.
\newblock A formal theory of inductive inference. {Part I}.
\newblock \emph{Information and Control}, 7\penalty0 (1):\penalty0 1--22, 1964.
\newblock \doi{10.1016/S0019-9958(64)90223-2}.

\bibitem[Vit{\'a}nyi(2011)]{Vitanyi2011}
Paul M.~B. Vit{\'a}nyi.
\newblock Information distance in multiples.
\newblock \emph{IEEE Transactions on Information Theory}, 57\penalty0 (4):\penalty0 2451--2456, 2011.
\newblock \doi{10.1109/TIT.2011.2110130}.

\bibitem[Wallace and Boulton(1968)]{WallaceBoulton1968}
C.~S. Wallace and D.~M. Boulton.
\newblock An information measure for classification.
\newblock \emph{The Computer Journal}, 11\penalty0 (2):\penalty0 185--194, 1968.
\newblock \doi{10.1093/comjnl/11.2.185}.

\bibitem[Yang et~al.(2019)Yang, Loquercio, Scaramuzza, and Soatto]{YangContextualInformationSeparation}
Yanchao Yang, Antonio Loquercio, Davide Scaramuzza, and Stefano Soatto.
\newblock Unsupervised moving object detection via contextual information separation.
\newblock In \emph{Proceedings of the IEEE/CVF Conference on Computer Vision and Pattern Recognition (CVPR)}, pages 879--888, 2019.
\newblock arXiv:1901.03360.

\bibitem[Zhang et~al.(2023)Zhang, Han, Qiao, Kim, Bae, Lee, and Hong]{ZhangEtAl2023MobileSAM}
Chaoning Zhang, Dongshen Han, Yu~Qiao, Jung~Uk Kim, Sung-Ho Bae, Seungkyu Lee, and Choong~Seon Hong.
\newblock Faster segment anything: Towards lightweight {SAM} for mobile applications, 2023.

\bibitem[Zhu and Yuille(1996)]{ZhuYuilleRegionCompetition}
Song~Chun Zhu and Alan~L. Yuille.
\newblock Region competition: Unifying snakes, region growing, and bayes/{MDL} for multiband image segmentation.
\newblock \emph{IEEE Transactions on Pattern Analysis and Machine Intelligence}, 18\penalty0 (9):\penalty0 884--900, 1996.
\newblock \doi{10.1109/34.537343}.

\end{thebibliography}
\bibliographystyle{plainnat}

\appendix
\section{Proofs}\label{sec:proofs}

\begin{proof}[Proof of \Cref{prop:one-var-completion}]
Let
\[
l \;:=\; \max_{i\in[n]} K(x_i \mid x_{-i}).
\]
We construct a string $x_{n+1}$ (depending on $x_{1:n}$ and the integer $l$) that satisfies \eqref{eq:one-var-feasibility} and has length no larger than $l$, up to logarithmic slack, so also satisfies \eqref{eq:one-var-ub}.  Since $l$ is an integer of magnitude at most the relevant complexities, encoding $l$ costs only $K(l)=O(\log l)$ bits, which is absorbed by our $\stackrel{+}{=}$ notation.

\paragraph{Step 1: An enumerable family of ``$l$-consistent'' tuples.}
Fix $l$. For each index $i\in[n]$, consider the following effective enumeration procedure $\mathsf{Enum}_i(l)$:

\begin{itemize}
\item Enumerate all prefix-free programs $p$ with $|p|\le l$.
\item Dovetail over all $(n\!-\!1)$-tuples $t\in(\{0,1\}^\ast)^{n-1}$.
\item Run $U(p,t)$; whenever it halts with output $y$, output the \emph{candidate} $n$-tuple
\[
\mathsf{ins}_i(t,y)\;:=\;(t_1,\dots,t_{i-1},y,t_i,\dots,t_{n-1}),
\]
together with the tag $i$.
\end{itemize}

Intuitively, $\mathsf{Enum}_i(l)$ enumerates all $n$-tuples $z_{1:n}$ for which the $i$-th coordinate $z_i$ is computable from the other coordinates $z_{-i}$ by some program of length at most $l$.

Now run all $\mathsf{Enum}_i(l)$ for $i=1,\dots,n$ in parallel, and maintain a table that, for every $n$-tuple $z_{1:n}$ ever produced by any $\mathsf{Enum}_i(l)$, stores the set of indices
\[
S(z)\;\subseteq\;[n]
\quad\text{for which $z$ has been produced by $\mathsf{Enum}_i(l)$.}
\]
Whenever a tuple $z$ first satisfies $S(z)=[n]$ (i.e.\ it has been seen from \emph{every} position $i$), we declare it \emph{$l$-consistent} and output it exactly once.

Let $\mathcal{E}_l$ denote the resulting enumerable set of $l$-consistent tuples.

\paragraph{We have $x_{1:n}\in\mathcal{E}_l$.}
By definition of $l$, for each $i$ there exists a program $p_i$ with $|p_i|\le l$ such that $U(p_i,x_{-i})=x_i$. Hence, when $\mathsf{Enum}_i(l)$ reaches $(p_i,x_{-i})$ in its dovetailing, it outputs exactly the tuple $x_{1:n}$. Therefore $x_{1:n}$ will eventually be marked $l$-consistent, so indeed $x_{1:n}\in\mathcal{E}_l$.

\paragraph{Step 2: A bounded-degree $n$-partite hypergraph.}
Define an $n$-partite $n$-uniform hypergraph $H_l=(V,E)$ as follows. For each $i\in[n]$, let the $i$-th part be
\[
V_i \;:=\; \bigl\{\, (i,t)\ :\ t\in(\{0,1\}^\ast)^{n-1}\,\bigr\},
\qquad
V \;:=\; \bigsqcup_{i=1}^n V_i.
\]
For each $z=(z_1,\dots,z_n)\in\mathcal{E}_l$, add a hyperedge
\[
e(z)\;:=\;\bigl\{\, (1,z_{-1}),\, (2,z_{-2}),\,\dots,\,(n,z_{-n}) \,\bigr\},
\]
i.e.\ the edge is incident to the projection $z_{-i}$ in part $i$ for every $i$.
Let $E:=\{e(z):z\in\mathcal{E}_l\}$.

\paragraph{Key degree bound.}
Fix any vertex $(i,t)\in V_i$. We claim that the number of edges incident to $(i,t)$ is at most $2^l$. Indeed, if an edge $e(z)$ is incident to $(i,t)$, then $z_{-i}=t$ and, since $z\in\mathcal{E}_l$, there exists a program $p$ with $|p|\le l$ such that $U(p,t)=z_i$. There are at most $2^l$ binary strings of length at most $l$, hence at most $2^l$ such programs $p$. Each fixed program $p$ on fixed input $t$ produces at most one output $z_i$. Therefore the number of \emph{distinct} possible values of $z_i$, and hence the number of distinct edges incident to $(i,t)$, is at most $2^l$. So the maximum degree of $H_l$ satisfies
\[
\Delta(H_l)\;\le\;2^l.
\]

\paragraph{Step 3: Greedy edge-coloring with a finite palette.}
We now define a computable coloring of edges of $H_l$ as they are enumerated. Let
\[
c \;:=\; n(2^l-1)+1.
\]
Process edges in the enumeration order induced by $\mathcal{E}_l$. When a new edge $e$ arrives, look at all previously colored edges that conflict with $e$, meaning they share at least one vertex with $e$. For each of the $n$ vertices of $e$, there are at most $2^l-1$ previously colored incident edges (by the degree bound above), hence the total number of conflicting colored edges is at most $n(2^l-1)=c-1$. Therefore there exists at least one color in $\{1,2,\dots,c\}$ not used by any conflicting edge. Assign to $e$ the smallest available such color.

Denote the resulting edge-color by $\chi_l(e)\in[c]$. By construction, $\chi_l$ satisfies the crucial uniqueness property: for every $i\in[n]$ and every $(i,t)\in V_i$, the colors $\chi_l(e)$ are pairwise distinct over all edges $e$ incident to $(i,t)$.

\paragraph{Step 4: Define the completion string $x_{n+1}$.}
Let $e^\star := e(x_{1:n})\in E$ be the edge corresponding to our given tuple.
Define
\[
x_{n+1} \;:=\; \bigl\langle\, l,\ \chi_l(e^\star)\,\bigr\rangle,
\]
where $\langle\cdot,\cdot\rangle$ is the fixed pairing function from \Cref{sec:kolmogorov-background}.

\paragraph{Size bound $K(x_{n+1})\stackrel{+}{\le} l$.}
Encoding the color $\chi_l(e^\star)\in[c]$ takes $\log c$ bits, and
\[
\log c \;=\; \log\bigl(n(2^l-1)+1\bigr) \;=\; l + O(\log n).
\]
Encoding the integer $l$ itself costs $K(l)=O(\log l)$ bits. Thus
\[
K(x_{n+1})
\;\stackrel{+}{\le}\;
\log c + K(l)
\;=\;
l + O(\log n)+O(\log l)
\;\stackrel{+}{\le}\; l,
\]
where the last step uses the fact that $O(\log n)+O(\log l)$ is within the allowed logarithmic slack.

\paragraph{Recover $x_{n+1}$ from $x_{1:n}$ (the constraint $K(x_{n+1}\mid x_{1:n})\stackrel{+}{=}0$).}
Given $x_{1:n}$, a program can (with $l$ hardcoded) simulate the enumeration of $\mathcal{E}_l$ and the greedy coloring procedure above until the tuple $x_{1:n}$ is declared $l$-consistent and its edge $e^\star$ is colored. At that moment the program outputs $\langle l,\chi_l(e^\star)\rangle=x_{n+1}$. Since the program only needs to encode $l$ (cost $K(l)=O(\log l)$), we get
\[
K(x_{n+1}\mid x_{1:n}) \;\le\; K(l)+O(1) \;=\; O(\log l)\;\stackrel{+}{=}0.
\]

\paragraph{Recover each missing component $x_i$ (the constraints $K(x_i\mid x_{-i},x_{n+1})\stackrel{+}{=}0$).}
Fix $i\in[n]$. Given $(x_{-i},x_{n+1})$, decode $x_{n+1}$ to obtain the pair $(l,\gamma)$ where $\gamma=\chi_l(e^\star)\in[c]$. Now simulate the same enumeration of $\mathcal{E}_l$ and greedy coloring, and watch for edges $e(z)$ that are incident to the vertex $(i,x_{-i})$, i.e.\ tuples $z$ satisfying $z_{-i}=x_{-i}$. Whenever such an edge is colored, check its color; if it equals $\gamma$, output its $i$-th component $z_i$ and halt.

This procedure halts and is correct because: (i) the target edge $e^\star=e(x_{1:n})$ is incident to $(i,x_{-i})$ and has color $\gamma$, so eventually it will be found; (ii) by the uniqueness property of $\chi_l$, among edges incident to $(i,x_{-i})$ there is \emph{at most one} edge of color $\gamma$, so the first match must be $e^\star$ and the output is $x_i$. The decoding program is constant-size (it only uses the universally fixed simulation code; $l$ and $\gamma$ are read from $x_{n+1}$), hence
\[
K(x_i\mid x_{-i},x_{n+1}) = O(1)\;\stackrel{+}{=}0.
\]

Combining the three verified constraints establishes \eqref{eq:one-var-feasibility}.
\end{proof} \begin{proof}[Proof of \Cref{prop:pivot-not-always}]
We argue by contradiction. Throughout, all $O(1)$ and $O(\log n)$ terms are uniform in $n$.

Fix $n$ large and let $p=p_n$ be the first prime in $[2^n,2^{n+1})$.
As in the statement, we represent elements of $\mathbb{F}_p$ as binary strings of length $\lceil \log p\rceil+O(1)$,
so that $n$ (hence $p$ and field arithmetic in $\mathbb{F}_p$) is computable from the encoding length by a constant-size program.
In particular $K(p)=O(\log n)$.

\paragraph{A line graph and a sampler bound.}
Let
\[
L=R=\mathbb{F}_p^\times\times \mathbb{F}_p,
\]
and define the bipartite graph $G_p=(L\sqcup R,E)$ by connecting $(a,c)\in L$ to $(b,d)\in R$
iff there exists $z\in\mathbb{F}_p^\times$ such that $b=za$ and $d=z+c$.
For each $(a,c)\in L$, its neighborhood is the $(p-1)$-point ``line''
\[
N(a,c)=L_{a,c}:=\{(za,\ z+c): z\in\mathbb{F}_p^\times\}\subseteq R.
\]
This graph is $(p-1)$-regular on both sides and $|L|=|R|=p(p-1)$. 

\begin{lemma}[Spectral mixing for $G_p$~\citep{HooryLinialWigderson2006}]
\label{lem:mixing}
For all $U\subseteq L$ and $V\subseteq R$,
\[
\Bigl|e(U,V)-\frac{|U||V|}{p}\Bigr|\ \le\ \sqrt p\ \sqrt{|U||V|}.
\]
\end{lemma}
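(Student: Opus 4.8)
The plan is to read \Cref{lem:mixing} as the bipartite \emph{expander mixing lemma} applied to $G_p$, so the substantive work is a single spectral computation: the biadjacency matrix of $G_p$ has second-largest singular value exactly $\sqrt{p}$. First I would put the adjacency relation in closed form. For $(a,c)\in L$ and $(b,d)\in R$ the defining condition ``$\exists z\in\mathbb{F}_p^\times:\ b=za,\ d=z+c$'' forces $z=ba^{-1}$, so an edge exists iff $d-c=ba^{-1}$, i.e.\ iff $b=a(d-c)$ (and since $b\neq 0$ this automatically forces $d\neq c$). Thus the $L\times R$ biadjacency matrix $M$ has entries $M_{(a,c),(b,d)}=\mathbf{1}[\,b=a(d-c)\,]$, the graph is $(p-1)$-regular on each side, and $|L|=|R|=p(p-1)=:N$.

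Next I would compute $MM^{\mathsf{T}}$ by counting common neighbours: for fixed $(a,c),(a',c')$ one counts $(b,d)\in R$ with $b\neq 0$ and $a(d-c)=a'(d-c')=b$. A short case split on whether $a=a'$ shows the count equals $p-1$ when $(a,c)=(a',c')$; equals $1$ when $a\neq a'$ \emph{and} $c\neq c'$ (the unique $d=(ac-a'c')(a-a')^{-1}$ then satisfies $d\neq c$, so $b\neq 0$); and equals $0$ otherwise. Writing the index set as $\mathbb{F}_p^\times\times\mathbb{F}_p$ and letting $I_m,J_m$ be the $m\times m$ identity and all-ones matrices, this says
\[
MM^{\mathsf{T}}=(p-1)\,I_{p-1}\otimes I_{p}\;+\;(J_{p-1}-I_{p-1})\otimes(J_{p}-I_{p}).
\]
Here the only delicate point is tracking the ``$b\neq 0$'' restriction, which is exactly what kills the $a\neq a',\ c=c'$ entries and yields this clean Kronecker form.

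Then I would diagonalize. Each factor is simultaneously diagonalized by $\mathbf{1}_m$ (eigenvalue $m$ of $J_m$) and its orthogonal complement (eigenvalue $0$ of $J_m$), so $MM^{\mathsf{T}}$ is diagonalized by tensor products, giving four cases: eigenvalue $(p-1)^2$ on $\mathbf{1}_{p-1}\otimes\mathbf{1}_{p}$ with multiplicity $1$, then $p$ with multiplicity $(p-1)(p-2)$, then $1$ with multiplicity $p-1$, then $0$ with multiplicity $p-2$; the trace check $\mathrm{tr}(MM^{\mathsf{T}})=N(p-1)=p(p-1)^2$ confirms the accounting. Hence the singular values of $M$ are $p-1>\sqrt{p}\ge 1\ge 0$, so $\sigma_1(M)=p-1$ (the degree, in the Perron direction $\mathbf{1}$) and $\sigma_2(M)=\sqrt{p}$.

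Finally I would invoke the standard bipartite expander mixing lemma \citep{HooryLinialWigderson2006}: writing $e(U,V)=\mathbf{1}_U^{\mathsf{T}}M\,\mathbf{1}_V$, decomposing $\mathbf{1}_U,\mathbf{1}_V$ into their projections onto $\mathbf{1}$ plus orthogonal parts, using $M\mathbf{1}_R=(p-1)\mathbf{1}_L$ and Cauchy--Schwarz with $\|(\mathbf{1}_U)^{\perp}\|^2\le|U|$, one gets $\bigl|e(U,V)-\frac{p-1}{N}|U||V|\bigr|\le \sigma_2(M)\sqrt{|U||V|}$. Plugging $N=p(p-1)$ and $\sigma_2(M)=\sqrt{p}$ gives $\frac{p-1}{N}=\frac1p$ and precisely the asserted bound. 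The main obstacle is the spectral step — specifically, handling the $b\neq 0$ constraint correctly when evaluating $MM^{\mathsf{T}}$ so that the Kronecker decomposition (and thus the eigenvalue list) comes out clean; once $\sigma_2(M)=\sqrt{p}$ is established, the rest is the textbook mixing argument.
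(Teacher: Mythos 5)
Your proof is correct, but it takes a genuinely different route from the paper's. The paper diagonalizes the adjacency operator in the Fourier basis $f_{\chi,t}(a,c)=\chi(a)\psi_t(c)$ built from multiplicative and additive characters, identifies the eigenvalues as the character sums $\lambda_{\chi,t}=\sum_{z\in\mathbb{F}_p^\times}\chi(z)\psi_t(z)$, and bounds the nontrivial ones by $\sqrt p$ via a self-contained Gauss-sum evaluation; it then cites the standard bipartite expander mixing lemma, exactly as you do. You instead observe that the edge relation collapses to the closed form $b=a(d-c)$, compute $MM^{\mathsf{T}}$ by counting common neighbours, and land on the Kronecker form $(p-1)\,I_{p-1}\otimes I_{p}+(J_{p-1}-I_{p-1})\otimes(J_{p}-I_{p})$, whose elementary diagonalization gives the exact singular spectrum $\{p-1,\sqrt p,1,0\}$ with the right multiplicities (your trace check and the multiplicity count $1+(p-1)(p-2)+(p-1)+(p-2)=p(p-1)$ both confirm it, and your spectrum agrees with the squared moduli of the paper's Gauss-sum eigenvalues, including the $b\neq 0$ bookkeeping that removes the $a\neq a',\,c=c'$ entries). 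Your argument is more elementary and self-contained --- no character theory needed, and you get the full spectrum rather than just a bound --- while the paper's character approach is the one that generalizes: it exhibits explicit eigenfunctions and fits the standard well-mixing-graph paradigm for algebraically defined graphs, where a clean common-neighbour/Kronecker structure is not always available. Both proofs then rely on the same textbook mixing lemma with $\sigma_2=\sqrt p$, $d=p-1$, $N=p(p-1)$, giving the stated bound.
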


A complete proof of \Cref{lem:mixing} is given in \Cref{app:mixing}.

\begin{lemma}[Sampler bound from mixing]
\label{lem:sampler-bound}
Fix $\alpha\in(0,1)$.
For any $S\subseteq R$ with $|S|\le (\alpha/2)p^2$, the set
\[
B(S,\alpha):=\bigl\{u\in L:\ |N(u)\cap S|\ge \alpha p\bigr\}
\]
satisfies $|B(S,\alpha)|\le \frac{4|S|}{\alpha^2\,p}$.
\end{lemma}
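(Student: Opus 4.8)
The plan is to run the standard ``expander mixing $\Rightarrow$ sampler'' argument, using \Cref{lem:mixing} as the only nontrivial input. Set $U:=B(S,\alpha)\subseteq L$; if $U=\emptyset$ the claimed bound is trivial, so assume $|U|>0$. The first step is to get a lower bound on the edge count $e(U,S)$ from the definition of $B(S,\alpha)$. Here I would use that $G_p$ is simple and $(p-1)$-regular: for each $u=(a,c)\in L$ the map $z\mapsto(za,\ z+c)$ is injective on $\mathbb{F}_p^\times$ (if $za=z'a$ then $z=z'$ since $a\neq 0$), so $u$ has exactly $p-1$ distinct neighbors, namely the points of $N(u)=L_{a,c}$, and the number of edges from $u$ into $S$ equals $|N(u)\cap S|$. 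Summing over $u\in U$ and using $|N(u)\cap S|\ge\alpha p$ for every $u\in U$ gives
\[
e(U,S)\ =\ \sum_{u\in U}|N(u)\cap S|\ \ge\ \alpha p\,|U|.
\]

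The second step is to upper bound the same quantity via \Cref{lem:mixing} applied with the pair $(U,S)$:
\[
e(U,S)\ \le\ \frac{|U|\,|S|}{p}\ +\ \sqrt p\,\sqrt{|U|\,|S|}.
\]
Combining the two bounds yields $\alpha p\,|U|\le \frac{|U||S|}{p}+\sqrt p\,\sqrt{|U||S|}$. Now I would use the hypothesis $|S|\le(\alpha/2)p^2$ to absorb the first term on the right into half of the left-hand side: indeed $\frac{|U||S|}{p}\le\frac{|U|}{p}\cdot\frac{\alpha}{2}p^2=\frac{\alpha p}{2}|U|$, hence
\[
\frac{\alpha p}{2}\,|U|\ \le\ \sqrt p\,\sqrt{|U|\,|S|}.
\]

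The final step is purely algebraic: squaring both sides gives $\frac{\alpha^2 p^2}{4}|U|^2\le p\,|U|\,|S|$, and dividing by $|U|>0$ and by $\frac{\alpha^2 p^2}{4}$ produces $|U|\le \frac{4|S|}{\alpha^2 p}$, which is exactly the asserted bound on $|B(S,\alpha)|$. I do not expect a genuine obstacle here; the only points requiring care are (i) confirming that $G_p$ is simple and $(p-1)$-regular so that $e(u,S)=|N(u)\cap S|$ with no multiplicity issues, and (ii) the bookkeeping that lets the hypothesis $|S|\le(\alpha/2)p^2$ swallow the ``expected'' term $|U||S|/p$, leaving a clean inequality to square.
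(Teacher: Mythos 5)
Your argument is correct and is essentially identical to the paper's proof: lower-bound $e(U,S)$ by $\alpha p\,|U|$ from the definition of $B(S,\alpha)$, upper-bound it via \Cref{lem:mixing}, absorb the term $|U||S|/p$ using $|S|\le(\alpha/2)p^2$, and square to conclude $|U|\le 4|S|/(\alpha^2 p)$. The only addition is your explicit check that each $u\in L$ has exactly $p-1$ distinct neighbors so that edge counts equal $|N(u)\cap S|$, which the paper takes for granted; this is a harmless (and correct) extra detail.
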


\begin{proof}
Let $U:=B(S,\alpha)$.
By definition, every $u\in U$ has at least $\alpha p$ neighbors in $S$, hence
\[
e(U,S)=\sum_{u\in U}|N(u)\cap S|\ \ge\ |U|\alpha p.
\]
On the other hand, by \Cref{lem:mixing},
\[
e(U,S)\ \le\ \frac{|U||S|}{p}+\sqrt p\sqrt{|U||S|}.
\]
Using $|S|\le (\alpha/2)p^2$ gives $\frac{|S|}{p}\le (\alpha/2)p$, so
\[
|U|\alpha p\ \le\ |U|\frac{\alpha}{2}p+\sqrt p\sqrt{|U||S|}
\quad\Rightarrow\quad
|U|\frac{\alpha}{2}p\ \le\ \sqrt p\sqrt{|U||S|}.
\]
Squaring and canceling $|U|$ yields $|U|\le \frac{4|S|}{\alpha^2 p}$.
\end{proof}

\paragraph{Ruling out all short programs on almost all lines.}
Take a large enough constant $k$ and set
\[
\widehat\alpha(n):=n^{-k}.
\]
Let $\mathcal{P}_n$ be the set of all programs of length at most $k\log n$.
Then $|\mathcal{P}_n|\le 2^{k\log n}=n^k$.

For a fixed program $Q\in\mathcal{P}_n$, view $Q$ as a partial function
\[
Q: R\to\{0,1\}^\ast,
\qquad (b,d)\mapsto U(Q,b,d).
\]
For each output string $w$, define its (halting) fiber
\[
S_w(Q):=\{(b,d)\in R:\ Q(b,d)\downarrow=w\}.
\]
These fibers form a partition of the halting set, so in particular
\begin{equation}
\sum_w |S_w(Q)|\ \le\ |R|\ =\ p(p-1)\ \le\ p^2.
\label{eq:sum-fibers}
\end{equation}
Define the ``bad line'' set for $Q$ by
\[
E_Q\ :=\ \Bigl\{(a,c)\in L:\ \exists w\ \text{s.t.}\ |S_w(Q)|<\tfrac{\widehat\alpha}{2}p^2\ \ \text{and}\ \ |S_w(Q)\cap L_{a,c}|\ge \widehat\alpha p\Bigr\}.
\]
In words: $(a,c)$ is bad for $Q$ if $Q$ has some output $w$ that is \emph{not} globally heavy
($|S_w(Q)|<(\widehat\alpha/2)p^2$) but is heavy \emph{along the line} $L_{a,c}$.

By \Cref{lem:sampler-bound}, for each fixed $w$ with $|S_w(Q)|<(\widehat\alpha/2)p^2$, the set of $(a,c)$ such that
$|S_w(Q)\cap L_{a,c}|\ge \widehat\alpha p$ has size at most $\frac{4|S_w(Q)|}{\widehat\alpha^2 p}$.
Taking the union over all such $w$ and using \eqref{eq:sum-fibers} gives
\begin{equation}
|E_Q|\ \le\ \sum_w \frac{4|S_w(Q)|}{\widehat\alpha^2 p}\ \le\ \frac{4}{\widehat\alpha^2 p}\cdot p^2\ =\ \frac{4p}{\widehat\alpha^2}.
\label{eq:EQ-bound}
\end{equation}
Now define the total exceptional set
\[
E\ :=\ \bigcup_{Q\in\mathcal{P}_n} E_Q.
\]
Using $|\mathcal{P}_n|\le n^k$ and \eqref{eq:EQ-bound},
\begin{equation}
|E|\ \le\ |\mathcal{P}_n|\cdot \frac{4p}{\widehat\alpha^2}
\ \le\ 4p\cdot n^k\cdot n^{2k}
\ =\ 4p\,n^{3k}
\ =\ o(p^2),
\label{eq:E-small}
\end{equation}
since $p\ge 2^n$ dominates any fixed polynomial in $n$.

\paragraph{Choose an incompressible good line $(A,C)$ and an independent $Z$.}
Because $|L|=p(p-1)=\Theta(p^2)$ and \eqref{eq:E-small} gives $|E|=o(p^2)$, we have $|L\setminus E|=\Theta(p^2)$.
By incompressibility, there exists $(A,C)\in L\setminus E$ such that
\[
K(A,C)\ \ge\ \log|L\setminus E|-O(1)\ =\ 2\log p -O(1)\ =\ 2n\pm O(1).
\]
In particular $K(A)\stackrel{+}{=}K(C)\stackrel{+}{=}n$, and also $A\in\mathbb{F}_p^\times$ by $(A,C)\in L$.

Next choose $Z\in\mathbb{F}_p^\times$ so that
\[
K(Z\mid A,C)\ \ge\ n-O(\log n).
\]
Such a $Z$ exists by counting: for any $t\ge 1$, at most $2^{n-t}$ values of $z\in\mathbb{F}_p^\times$
can satisfy $K(z\mid A,C)<n-t$, so taking $t=\Theta(\log n)$ ensures a choice with the stated bound.
Then $K(Z)\stackrel{+}{=}n$ and
\[
K(A,C,Z)\ \ge\ K(A,C)+K(Z\mid A,C)-O(\log n)\ \ge\ 3n-O(\log n).
\]

Now define
\[
B:=ZA\in\mathbb{F}_p^\times,\qquad D:=Z+C\in\mathbb{F}_p.
\]

\paragraph{The two given determination nodes hold.}
Since $A\neq 0$ and $Z\neq 0$, the map $(A,Z)\mapsto (A,ZA)=(A,B)$ is invertible with inverse
$Z=B/A$ and $A=B/Z$.
Similarly, $(Z,C)\mapsto (Z,Z+C)=(Z,D)$ is invertible with inverse $C=D-Z$ and $Z=D-C$.
Because arithmetic in $\mathbb{F}_p$ is computable by a constant-size program (given $p$),
each component of $(A,B,Z)$ is computable from the other two with $O(1)$ overhead, and likewise for $(Z,C,D)$.
Hence both triples are determinations, proving (i).

\paragraph{Assume for contradiction that a pivot exists.}
Suppose there exists a binary string $X$ such that both $(A,C,X)$ and $(B,D,X)$ are determinations.
In particular,
\[
K(X\mid A,C)\stackrel{+}{=}0,\qquad
K(X\mid B,D)\stackrel{+}{=}0,\qquad
K(B\mid D,X)\stackrel{+}{=}0.
\]

\paragraph{Lower bound: any such $X$ satisfies $K(X)\ge n-O(\log n)$.}
From $K(B\mid D,X)\stackrel{+}{=}0$ and the standard inequality
$K(U\mid V)\stackrel{+}{\le}K(W)+K(U\mid V,W)$, we get
\begin{equation}
K(B\mid D)\ \stackrel{+}{\le}\ K(X).
\label{eq:KB_D_le_KX_again}
\end{equation}
It remains to lower-bound $K(B\mid D)$ from the near-independence of $(A,C,Z)$.

\begin{lemma}
\label{lem:KB-given-D}
With $B=ZA$ and $D=Z+C$, the condition $K(A,C,Z)\ge 3n-O(\log n)$ implies
\[
K(B\mid D)\ \ge\ n-O(\log n).
\]
\end{lemma}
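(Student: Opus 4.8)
The strategy is to transfer the near-maximality of $K(A,C,Z)$ onto the algebraically equivalent tuple $(B,D,Z)$, and then peel off $Z$ and $D$ by the chain rule, so that whatever is left over is forced to sit inside $K(B\mid D)$. Concretely, I would carry out three steps.

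First, record the algebraic reversibility. Since $A\in\mathbb{F}_p^\times$ and $Z\in\mathbb{F}_p^\times$, the map $(A,C,Z)\mapsto(ZA,\ Z+C,\ Z)=(B,D,Z)$ is a bijection on the relevant domain, with inverse $(B,D,Z)\mapsto(B/Z,\ D-Z,\ Z)=(A,C,Z)$; both directions are computable by constant-size programs once field arithmetic (including inversion) in $\mathbb{F}_p$ is available, which it is because $p$ is recoverable from the encoding length of the strings ($K(p)=O(\log n)$). Hence
\[
K(A,C,Z)\ \stackrel{+}{=}\ K(B,D,Z).
\]
Second, bound $K(B,D,Z)$ from above by subadditivity of prefix complexity together with the trivial marginal bounds $K(D)\stackrel{+}{\le}\log p\stackrel{+}{=}n$ and $K(Z)\stackrel{+}{\le}\log p\stackrel{+}{=}n$:
\[
K(B,D,Z)\ \stackrel{+}{\le}\ K(B,D)+K(Z)\ \stackrel{+}{\le}\ K(D)+K(B\mid D)+K(Z)\ \stackrel{+}{\le}\ 2n+K(B\mid D).
\]
Third, combine with the hypothesis $K(A,C,Z)\ge 3n-O(\log n)$:
\[
3n-O(\log n)\ \le\ K(A,C,Z)\ \stackrel{+}{=}\ K(B,D,Z)\ \stackrel{+}{\le}\ 2n+K(B\mid D),
\]
which rearranges to $K(B\mid D)\ge n-O(\log n)$, the desired conclusion.

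The only step carrying real content is the first: the pivot relations $B=ZA$ and $D=Z+C$ must be genuinely invertible at cost $O(1)$ (equivalently $O(\log n)$), which is precisely why the construction insisted that $A$ and $Z$ be units and encoded $\mathbb{F}_p$ so that $p$ is computable from string length. The remaining obstacle is purely bookkeeping the logarithmic slack — in particular checking that conditioning on the pair $(B,D)$ versus on the single string $\langle B,D\rangle$ costs only $O(1)$, and that no hidden $O(\log p)=O(n)$-sized term sneaks into the chain-rule estimate — so that the two marginal budgets genuinely sum to $2n$ rather than something larger; this is routine but is the place where one must be careful, since the whole argument only has a gap of $n\pm O(\log n)$ to play with.
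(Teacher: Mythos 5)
Your proposal is correct and follows essentially the same route as the paper's proof: both transfer the lower bound through the computable bijection $(A,C,Z)\leftrightarrow(B,D,Z)$, then peel off $Z$ and $D$ via the chain rule using $K(Z),K(D)\stackrel{+}{\le}n$, leaving the remaining $n-O(\log n)$ bits in $K(B\mid D)$. The slack bookkeeping you flag is indeed routine and is handled the same way in the paper.
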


\begin{proof}
Consider the computable bijection
\[
(A,C,Z)\ \longleftrightarrow\ (B,D,Z),\qquad (B,D,Z)=(ZA,\ Z+C,\ Z),
\]
with inverse $(A,C,Z)=(B/Z,\ D-Z,\ Z)$.
Therefore $K(B,D,Z)=K(A,C,Z)\pm O(1)\ge 3n-O(\log n)$.
By the chain rule,
\[
K(B,D,Z)\ \stackrel{+}{\le}\ K(B,D)+K(Z\mid B,D),
\]
and trivially $K(Z\mid B,D)\le K(Z)\stackrel{+}{=}n$, hence $K(B,D)\ge 2n-O(\log n)$.
Finally,
\[
K(B\mid D)\ \stackrel{+}{\ge}\ K(B,D)-K(D)-O(\log n)\ \ge\ (2n-O(\log n))-(n+O(1))-O(\log n)=n-O(\log n),
\]
since $D\in\mathbb{F}_p$ has description length at most $\lceil\log p\rceil+O(1)=n+O(1)$.
\end{proof}

Combining \Cref{lem:KB-given-D} with \eqref{eq:KB_D_le_KX_again} yields
\begin{equation}
K(X)\ \ge\ n-O(\log n).
\label{eq:KX-lower-final}
\end{equation}

\paragraph{Upper bound: any such $X$ satisfies $K(X)=O(\log n)$.}
Since $K(X\mid B,D)\stackrel{+}{=}0$, there exists a witness program $Q$ of logarithm length
that outputs $X$ from $(B,D)$; for all sufficiently large $n$ we may assume (by enlarging $k$ if needed) that
\[
|Q|\ \le\ k\log n
\qquad\text{and}\qquad
U(Q,B,D)=X.
\]
Fix such a $Q\in\mathcal{P}_n$.

Define the fiber sets $S_w(Q)$ as above, and let $S_X(Q)$ denote the fiber for the particular output $X$.

\smallskip
\noindent\textbf{(a) Line-heaviness on the true line.}
Consider
\[
T\ :=\ \{z\in\mathbb{F}_p^\times:\ U(Q,zA,z+C)=X\}.
\]
Since $(B,D)=(ZA,Z+C)$, we have $Z\in T$, and moreover $|T|=|S_X(Q)\cap L_{A,C}|$.

Given $(A,C,X,Q)$, we can enumerate $T$ by dovetailing the computations $U(Q,zA,z+C)$ over all $z\in\mathbb{F}_p^\times$
and outputting those $z$ that halt with output $X$.
Therefore, specifying the index of $Z$ in this enumeration allows reconstructing $Z$ from $(A,C,X,Q)$, giving
\begin{equation}
K(Z\mid A,C,X,Q)\ \le\ \log|T|+O(1).
\label{eq:Z-index}
\end{equation}
On the other hand, by the chain rule,
\[
K(Z\mid A,C)\ \stackrel{+}{\le}\ K(Z\mid A,C,X,Q)+K(X,Q\mid A,C)+O(\log n).
\]
Since $K(X\mid A,C)\stackrel{+}{=}0$ and $|Q|\le k\log n$, we have $K(X,Q\mid A,C)=O(\log n)$, hence
\[
K(Z\mid A,C,X,Q)\ \ge\ K(Z\mid A,C)-O(\log n)\ \ge\ n-O(\log n).
\]
Combining with \eqref{eq:Z-index} yields $\log|T|\ge n-O(\log n)$, so for all sufficiently large $n$,
\[
|T|\ \ge\ \widehat\alpha(n)\,p,
\]
absorbing the polynomial slack into $\widehat\alpha(n)=n^{-k}$ by choosing $k$ large enough.
Equivalently,
\begin{equation}
|S_X(Q)\cap L_{A,C}|\ \ge\ \widehat\alpha\,p.
\label{eq:line-heavy}
\end{equation}

\smallskip
\noindent\textbf{(b) Good line $\Rightarrow$ global heaviness.}
Because we chose $(A,C)\in L\setminus E$, in particular $(A,C)\notin E_Q$.
By definition of $E_Q$, \eqref{eq:line-heavy} then forces
\begin{equation}
|S_X(Q)|\ \ge\ \tfrac{\widehat\alpha}{2}\,p^2.
\label{eq:global-heavy}
\end{equation}

\smallskip
\noindent\textbf{(c) Global heaviness $\Rightarrow$ $X$ is $O(\log n)$-describable.}
Define the set of globally heavy outputs of $Q$:
\[
\mathcal{W}_{\mathrm{heavy}}(Q)\ :=\ \Bigl\{w:\ |S_w(Q)|\ge \tfrac{\widehat\alpha}{2}\,p^2\Bigr\}.
\]
Using \eqref{eq:sum-fibers}, we have
\[
|\mathcal{W}_{\mathrm{heavy}}(Q)|\ \le\ \frac{2}{\widehat\alpha}\ =\ 2n^k.
\]
Moreover, $\mathcal{W}_{\mathrm{heavy}}(Q)$ is effectively enumerable given $(p,Q)$:
dovetail all computations $U(Q,b,d)$ over $(b,d)\in R$; maintain counts of discovered halting outputs;
and output $w$ the first time its count reaches $(\widehat\alpha/2)p^2$ (so each heavy $w$ is output exactly once).
Since $X\in\mathcal{W}_{\mathrm{heavy}}(Q)$ by \eqref{eq:global-heavy}, specifying the index of $X$ in this fixed enumeration
allows reconstructing $X$ from $(p,Q)$ and that index. Therefore
\[
K(X)\ \le\ K(p)+|Q|+\log|\mathcal{W}_{\mathrm{heavy}}(Q)|+O(1)\ =\ O(\log n).
\]
This contradicts the lower bound \eqref{eq:KX-lower-final} for all sufficiently large $n$.

\paragraph{Conclusion.}
Hence no such pivot $X$ exists, proving (ii).
\end{proof}
 \begin{proof}[Proof of \Cref{lem:det-difference-identity}]
Rewrite the claimed identity as
\[
\bigl[K(A)-K(B)\bigr]-\bigl[K(C\mid B)-K(C\mid A)\bigr]
\;=\;
\bigl[K(A)+K(C\mid A)\bigr]-\bigl[K(B)+K(C\mid B)\bigr].
\]
By the chain rule, $K(A)+K(C\mid A)\stackrel{+}{=}K(A,C)$ and $K(B)+K(C\mid B)\stackrel{+}{=}K(B,C)$.
Since $(A,B,C)$ is a determination, we have $K(B\mid A,C)\stackrel{+}{=}0$ and $K(A\mid B,C)\stackrel{+}{=}0$,
hence
\[
K(A,C)\ \stackrel{+}{=}\ K(A,B,C)\ \stackrel{+}{=}\ K(B,C).
\]
Therefore the difference is $\stackrel{+}{=}0$, as required.
\end{proof} 
\begin{proof}[Proof of \Cref{thm:ground-bounds-excess}]
First, for any strings $u,v$ we have the standard subadditivity bound
$K(u)\stackrel{+}{\le}K(v)+K(u\mid v)$ (describe $v$, then describe $u$ given $v$).
Applying this to $(u,v)=(A^{\star},\hat A)$ and $(u,v)=(B^{\star},\hat B)$ and adding gives
\[
K(A^{\star})+K(B^{\star})\ \stackrel{+}{\le}\ K(\hat A)+K(\hat B)\ +\ K(A^{\star}\mid \hat A)+K(B^{\star}\mid \hat B).
\]
The last two terms are exactly the optimal objective value of \eqref{eq:grounded-split-opt} (up to $\stackrel{+}{=}$).
Now observe that $(A,B)=(I,\epsilon)$ is feasible for \eqref{eq:grounded-split-opt}, hence optimality implies
\[
K(A^{\star}\mid \hat A)+K(B^{\star}\mid \hat B)\ \stackrel{+}{\le}\ K(I\mid \hat A)+K(\epsilon\mid \hat B)
\ \stackrel{+}{\le}\ K(I),
\]
where $K(\epsilon\mid \hat B)=O(1)$ and conditioning cannot increase complexity by more than $O(1)$.
Combining the two displays yields
$K(A^{\star})+K(B^{\star})\stackrel{+}{\le}K(\hat A)+K(\hat B)+K(I)$, which rearranges to the claim.
\end{proof} \begin{proof}[Proof of \Cref{fig:delta-relations}]
All relations are up to $\stackrel{+}{=}$ / $\stackrel{+}{\le}$.

\smallskip
\noindent\textbf{(0) No assumption: $\Delta^{(7)}\stackrel{+}{=}2\Delta^{(1)}-\Delta^{(5)}$.}
\begin{align*}
\Delta^{(7)}
&=K(I\mid B)+K(B\mid I)-K(P_1\mid P_2)-K(P_2\mid P_1)+K(B)\\
&\stackrel{+}{=}[K(I,B)-K(B)]+[K(I,B)-K(I)]-[K(P_1,P_2)-K(P_2)]-[K(P_1,P_2)-K(P_1)]+K(B)\\
&\stackrel{+}{=}K(P_1)+K(P_2)-2K(P_1,P_2)-K(I)+2K(I,B).
\end{align*}
Also
\begin{align*}
2\Delta^{(1)}-\Delta^{(5)}
&\stackrel{+}{=}2\big[I(P_1;P_2)+K(B\mid I)\big]-\big[K(P_1)+K(P_2)-K(I)\big]\\
&\stackrel{+}{=}2\big[K(P_1)+K(P_2)-K(P_1,P_2)\big]+2\big[K(I,B)-K(I)\big]-K(P_1)-K(P_2)+K(I)\\
&\stackrel{+}{=}K(P_1)+K(P_2)-2K(P_1,P_2)-K(I)+2K(I,B).
\end{align*}

\smallskip
\noindent\textbf{(A1) Assume $K(B\mid P_i)\stackrel{+}{=}0$.}

\emph{(A1$\Rightarrow$ $\Delta^{(1)}\stackrel{+}{=}\Delta^{(2)}$).}
First, since $K(B\mid P_i)\stackrel{+}{=}0$, we have
\[
K(P_i,B)\stackrel{+}{=}K(P_i)
\qquad\text{and}\qquad
K(P_i,B)\stackrel{+}{=}K(B)+K(P_i\mid B),
\]
hence
\[
K(P_i)\stackrel{+}{=}K(B)+K(P_i\mid B),\qquad i=1,2.
\]
Moreover, $K(B\mid P_1,P_2)\stackrel{+}{=}0$ (since $B$ is computable from $P_1$ alone), so
\[
K(P_1,P_2,B)\stackrel{+}{=}K(P_1,P_2)
\qquad\text{and}\qquad
K(P_1,P_2,B)\stackrel{+}{=}K(B)+K(P_1,P_2\mid B),
\]
hence
\[
K(P_1,P_2)\stackrel{+}{=}K(B)+K(P_1,P_2\mid B).
\]
Therefore
\begin{align*}
I(P_1;P_2)
&:=K(P_1)+K(P_2)-K(P_1,P_2)\\
&\stackrel{+}{=}\big[K(B)+K(P_1\mid B)\big]+\big[K(B)+K(P_2\mid B)\big]-\big[K(B)+K(P_1,P_2\mid B)\big]\\
&=K(B)+\big[K(P_1\mid B)+K(P_2\mid B)-K(P_1,P_2\mid B)\big]\\
&=K(B)+I(P_1;P_2\mid B).
\end{align*}
Hence
\[
\Delta^{(1)}=I(P_1;P_2)+K(B\mid I)
\stackrel{+}{=}K(B)+I(P_1;P_2\mid B)+K(B\mid I)=\Delta^{(2)}.
\]

\emph{(A1$\Rightarrow$ $\Delta^{(5)}\stackrel{+}{=}\Delta^{(4)}$).}
For each $i$,
\[
K(P_i,B)\stackrel{+}{\le}K(P_i)+K(B\mid P_i)\stackrel{+}{=}K(P_i),
\qquad
K(P_i)\stackrel{+}{\le}K(P_i,B),
\]
so $K(P_i,B)\stackrel{+}{=}K(P_i)$ and
\[
\Delta^{(4)}=K(P_1,B)+K(P_2,B)-K(I)\stackrel{+}{=}K(P_1)+K(P_2)-K(I)=\Delta^{(5)}.
\]

\emph{(A1$\Rightarrow$ $\Delta^{(6)}\stackrel{+}{\ge}\Delta^{(2)}$).}
We compute
\[
\Delta^{(6)}-\Delta^{(2)}\stackrel{+}{=}I(P_1;P_2\mid I)-K(B\mid I),
\]
so it suffices to show $I(P_1;P_2\mid I)\stackrel{+}{\ge}K(B\mid I)$.

Under A1 we have $K(B\mid P_i,I)\stackrel{+}{=}0$ for $i=1,2$.
Thus, for each $i$,
\[
K(P_i\mid I)\stackrel{+}{=}K(B,P_i\mid I)\stackrel{+}{=}K(B\mid I)+K(P_i\mid B,I).
\]
Also, by an explicit description (first describe $B$ from $I$, then describe $P_1$ and $P_2$ given $(B,I)$),
\[
K(P_1,P_2\mid I)\stackrel{+}{\le}K(B\mid I)+K(P_1\mid B,I)+K(P_2\mid B,I).
\]
Therefore
\begin{align*}
I(P_1;P_2\mid I)
&:=K(P_1\mid I)+K(P_2\mid I)-K(P_1,P_2\mid I)\\
&\stackrel{+}{\ge}\big[K(B\mid I)+K(P_1\mid B,I)\big]+\big[K(B\mid I)+K(P_2\mid B,I)\big]\\
&\quad-\big[K(B\mid I)+K(P_1\mid B,I)+K(P_2\mid B,I)\big]\\
&=K(B\mid I).
\end{align*}
Hence $\Delta^{(6)}-\Delta^{(2)}\stackrel{+}{\ge}0$, i.e. $\Delta^{(6)}\stackrel{+}{\ge}\Delta^{(2)}$.

\smallskip
\noindent\textbf{(A2) Assume $K(P_i\mid I,B)\stackrel{+}{=}0$.}

\emph{(A2$\Rightarrow$ $\Delta^{(2)}\stackrel{+}{=}\Delta^{(3)}$).}
A2 implies
\[
K(P_1,P_2\mid I,B)\stackrel{+}{\le}K(P_1\mid I,B)+K(P_2\mid I,B)\stackrel{+}{=}0,
\]
hence $K(P_1,P_2\mid I,B)\stackrel{+}{=}0$ and
\[
I\big((P_1,P_2);I\mid B\big)\stackrel{+}{=}K(P_1,P_2\mid B)-K(P_1,P_2\mid I,B)\stackrel{+}{=}K(P_1,P_2\mid B).
\]
Therefore
\begin{align*}
\Delta^{(3)}
&=K(B)+K(P_1\mid B)+K(P_2\mid B)+K(B\mid I)-I((P_1,P_2);I\mid B)\\
&\stackrel{+}{=}K(B)+K(P_1\mid B)+K(P_2\mid B)+K(B\mid I)-K(P_1,P_2\mid B)\\
&\stackrel{+}{=}K(B)+I(P_1;P_2\mid B)+K(B\mid I)=\Delta^{(2)}.
\end{align*}

\emph{(A2$\Rightarrow$ $\Delta^{(6)}\stackrel{+}{\le}\Delta^{(2)}$).}
Again
\[
\Delta^{(6)}-\Delta^{(2)}\stackrel{+}{=}I(P_1;P_2\mid I)-K(B\mid I).
\]
But
\[
I(P_1;P_2\mid I)\stackrel{+}{\le}K(P_1\mid I)
\stackrel{+}{\le}K(P_1,B\mid I)\stackrel{+}{=}K(B\mid I)+K(P_1\mid I,B)\stackrel{+}{=}K(B\mid I),
\]
so $\Delta^{(6)}\stackrel{+}{\le}\Delta^{(2)}$.

\smallskip
\noindent\textbf{(A3) Assume $K(I\mid P_1,P_2)\stackrel{+}{=}0$.}

\emph{(A3$\Rightarrow$ $\Delta^{(3)}\stackrel{+}{=}\Delta^{(4)}$).}
Use $K(P_i,B)\stackrel{+}{=}K(B)+K(P_i\mid B)$ to rewrite
\[
\Delta^{(4)}\stackrel{+}{=}2K(B)+K(P_1\mid B)+K(P_2\mid B)-K(I).
\]
Hence
\begin{align*}
\Delta^{(3)}-\Delta^{(4)}
&\stackrel{+}{=}\big[K(B)+K(P_1\mid B)+K(P_2\mid B)+K(B\mid I)-I((P_1,P_2);I\mid B)\big]\\
&\quad-\big[2K(B)+K(P_1\mid B)+K(P_2\mid B)-K(I)\big]\\
&\stackrel{+}{=}K(B\mid I)-K(B)+K(I)-I((P_1,P_2);I\mid B)\\
&\stackrel{+}{=}-I(B;I)+K(I)-\big[K(I\mid B)-K(I\mid B,P_1,P_2)\big]\\
&\stackrel{+}{=}K(I\mid B,P_1,P_2).
\end{align*}
By A3, $K(I\mid B,P_1,P_2)\stackrel{+}{\le}K(I\mid P_1,P_2)\stackrel{+}{=}0$, hence
$\Delta^{(3)}\stackrel{+}{=}\Delta^{(4)}$.

\end{proof}
 \subsection{Proof of the Mixing Lemma}
\label{app:mixing}

In this appendix we prove Lemma~\ref{lem:mixing} in a self-contained way.

\subsubsection{Additive and Multiplicative Characters}
Fix a prime $p$.
For $t\in\mathbb{F}_p$, let the additive character $\psi_t:\mathbb{F}_p\to\mathbb{C}$ be
\[
\psi_t(x):=\exp(2\pi i\,t x/p).
\]
For $s\in\mathbb{F}_p^\times$, let $\chi$ denote a multiplicative character $\chi:\mathbb{F}_p^\times\to\mathbb{C}$ (a group homomorphism);
extend it to $\mathbb{F}_p$ by setting $\chi(0):=0$.
Write $\chi=\chi_0$ for the trivial character.

\subsubsection{Gauss Sum Bound}
\begin{lemma}[Gauss sum bound]
\label{lem:gauss}
Let $\chi$ be a nontrivial multiplicative character of $\mathbb{F}_p^\times$ and let $t\in\mathbb{F}_p^\times$.
Then
\[
\Bigl|\sum_{z\in\mathbb{F}_p^\times}\chi(z)\psi_t(z)\Bigr|\ =\ \sqrt p.
\]
In particular, it is $\le \sqrt p$~\citep{IwaniecKowalski2004}.
\end{lemma}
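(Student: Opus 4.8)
The plan is to follow the classical route for Gauss sums: reduce to the case $t=1$ by a change of variables, then evaluate the squared modulus using orthogonality of the additive and multiplicative characters. Write $g_t:=\sum_{z\in\mathbb{F}_p^\times}\chi(z)\psi_t(z)$ for the sum in the statement. First I would reduce to $t=1$: substituting $z\mapsto z/t$ (a bijection of $\mathbb{F}_p^\times$ for fixed $t\in\mathbb{F}_p^\times$) and using $\psi_t(z/t)=\psi_1(z)$ together with $\chi(z/t)=\chi(z)\overline{\chi(t)}$ (valid since $|\chi(t)|=1$ on $\mathbb{F}_p^\times$), one obtains $g_t=\overline{\chi(t)}\,g_1$, hence $|g_t|=|g_1|$ because $|\overline{\chi(t)}|=1$. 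So it suffices to prove $|g_1|=\sqrt p$.

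Next I would expand the square:
\[
|g_1|^2=\sum_{z,w\in\mathbb{F}_p^\times}\chi(z)\overline{\chi(w)}\,\psi_1(z-w).
\]
For each fixed $w\in\mathbb{F}_p^\times$, substitute $z=wu$ with $u\in\mathbb{F}_p^\times$. Since $\chi(wu)\overline{\chi(w)}=\chi(u)\,|\chi(w)|^2=\chi(u)$ and $z-w=w(u-1)$, this rearranges to
\[
|g_1|^2=\sum_{u\in\mathbb{F}_p^\times}\chi(u)\sum_{w\in\mathbb{F}_p^\times}\psi_1\!\bigl(w(u-1)\bigr).
\]
The inner sum equals $p-1$ when $u=1$; when $u\neq1$, the product $w(u-1)$ ranges over $\mathbb{F}_p^\times$ as $w$ does, so the inner sum equals $\sum_{v\in\mathbb{F}_p^\times}\psi_1(v)=-1$, using that $\sum_{v\in\mathbb{F}_p}\psi_1(v)=0$ because $\psi_1$ is a nontrivial additive character ($p\nmid1$).

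Isolating the $u=1$ term then gives
\[
|g_1|^2=(p-1)\,\chi(1)-\sum_{u\in\mathbb{F}_p^\times,\,u\neq1}\chi(u)=(p-1)-\Bigl(\sum_{u\in\mathbb{F}_p^\times}\chi(u)-1\Bigr)=p,
\]
where the last equality uses $\sum_{u\in\mathbb{F}_p^\times}\chi(u)=0$ because $\chi$ is nontrivial. Taking square roots yields $|g_1|=\sqrt p$, and combined with the first step, $|g_t|=\sqrt p$ for every $t\in\mathbb{F}_p^\times$, which is the claim (in particular it is $\le\sqrt p$, as needed downstream). I do not expect a genuine obstacle here — the argument is entirely standard — so the only care needed is bookkeeping: checking that $z\mapsto z/t$ and (for fixed $w$) $u\mapsto wu$ are bijections of $\mathbb{F}_p^\times$, isolating the diagonal-type term $u=1$ before invoking orthogonality, and recording that both characters in play are nontrivial ($\psi_1$ since $1\not\equiv0\bmod p$, and $\chi$ by hypothesis), since those two facts are precisely what makes the additive and multiplicative character-sum identities applicable.
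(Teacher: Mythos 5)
Your proposal is correct and follows essentially the same route as the paper: reduce to $t=1$ by a change of variables and evaluate $|g_1|^2=p$ via the orthogonality of the nontrivial additive and multiplicative characters. The only difference is bookkeeping — you group the double sum by the multiplicative quotient $u=z/w$ and sum the additive character first, while the paper groups by the additive difference and sums the multiplicative character first — which yields the identical diagonal term $p-1$ and off-diagonal contribution $+1$.
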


\begin{proof}
By the change of variables $z\mapsto tz$ (using that $t\neq 0$), it suffices to treat $t=1$.
Define the Gauss sum
\[
\tau(\chi):=\sum_{z\in\mathbb{F}_p}\chi(z)\psi_1(z)
\ =\ \sum_{z\in\mathbb{F}_p^\times}\chi(z)\psi_1(z),
\]
since $\chi(0)=0$.
We compute $\tau(\chi)\overline{\tau(\chi)}=\tau(\chi)\tau(\overline{\chi})$:
\begin{align*}
\tau(\chi)\tau(\overline{\chi})
&=\sum_{x\in\mathbb{F}_p}\sum_{y\in\mathbb{F}_p}\chi(x)\overline{\chi(y)}\psi_1(x-y)\\
&=\sum_{u\in\mathbb{F}_p}\psi_1(u)\sum_{y\in\mathbb{F}_p}\chi(y+u)\overline{\chi(y)}.
\end{align*}
For $u=0$, the inner sum is $\sum_{y\in\mathbb{F}_p}|\chi(y)|^2=p-1$ (since $\chi(0)=0$ and $|\chi(y)|=1$ for $y\neq 0$).
For $u\neq 0$, the inner sum is independent of $u$:
\[
\sum_{y\in\mathbb{F}_p}\chi(y+u)\overline{\chi(y)}
=\sum_{y\in\mathbb{F}_p^\times}\chi(y+u)\overline{\chi(y)}
=\sum_{y\in\mathbb{F}_p^\times}\chi\!\Bigl(1+\frac{u}{y}\Bigr).
\]
As $y$ ranges over $\mathbb{F}_p^\times$, $t:=u/y$ ranges over $\mathbb{F}_p^\times$, so this equals
\[
\sum_{t\in\mathbb{F}_p^\times}\chi(1+t)
=\sum_{s\in\mathbb{F}_p\setminus\{1\}}\chi(s)
=\sum_{s\in\mathbb{F}_p^\times}\chi(s)-\chi(1)
=0-1
=-1,
\]
using $\sum_{s\in\mathbb{F}_p^\times}\chi(s)=0$ for nontrivial $\chi$ and $\chi(1)=1$.
Therefore
\[
\tau(\chi)\tau(\overline{\chi})
=(p-1)\psi_1(0)+\sum_{u\in\mathbb{F}_p^\times}\psi_1(u)\cdot(-1)
=(p-1)-\Bigl(\sum_{u\in\mathbb{F}_p^\times}\psi_1(u)\Bigr).
\]
Finally $\sum_{u\in\mathbb{F}_p}\psi_1(u)=0$, hence $\sum_{u\in\mathbb{F}_p^\times}\psi_1(u)=-1$, giving
\[
\tau(\chi)\tau(\overline{\chi})=(p-1)-(-1)=p.
\]
Thus $|\tau(\chi)|=\sqrt p$, proving the lemma.
\end{proof}

\subsubsection{Diagonalization of the Adjacency Operator and Expander Mixing}
Let $L=R=\mathbb{F}_p^\times\times\mathbb{F}_p$ and define the bipartite graph $G_p$ as in the main proof.
Let $T:\ell^2(L)\to\ell^2(R)$ be the (unweighted) adjacency operator
\[
(Tf)(b,d):=\sum_{\substack{(a,c)\in L\\(a,c)\sim(b,d)}} f(a,c)
=\sum_{z\in\mathbb{F}_p^\times} f\!\left(\frac{b}{z},\,d-z\right),
\]
and $T^\top:\ell^2(R)\to\ell^2(L)$ be its transpose
\[
(T^\top g)(a,c):=\sum_{z\in\mathbb{F}_p^\times} g(za,z+c).
\]

Consider the Fourier basis on $L$ given by
\[
f_{\chi,t}(a,c):=\chi(a)\psi_t(c),
\qquad \chi \text{ a multiplicative character on }\mathbb{F}_p^\times,\ \ t\in\mathbb{F}_p.
\]
A direct computation gives
\begin{align*}
(T^\top f_{\chi,t})(a,c)
&=\sum_{z\in\mathbb{F}_p^\times} f_{\chi,t}(za,z+c)\\
&=\sum_{z\in\mathbb{F}_p^\times}\chi(za)\psi_t(z+c)\\
&=\chi(a)\psi_t(c)\sum_{z\in\mathbb{F}_p^\times}\chi(z)\psi_t(z).
\end{align*}
Thus each $f_{\chi,t}$ is an eigenfunction of $T^\top$ with eigenvalue
\[
\lambda_{\chi,t}:=\sum_{z\in\mathbb{F}_p^\times}\chi(z)\psi_t(z).
\]
If $\chi=\chi_0$ and $t=0$, then $\lambda_{\chi,t}=p-1$ (the trivial eigenvalue).
Otherwise, either $\chi$ is nontrivial or $t\neq 0$; in all nontrivial cases one has
$|\lambda_{\chi,t}|\le \sqrt p$ by Lemma~\ref{lem:gauss} (or by the orthogonality of $\psi_t$ when $\chi=\chi_0$ and $t\neq 0$).

Therefore, the second largest singular value of the bipartite adjacency operator is at most $\sqrt p$.
The standard expander mixing lemma for $(p-1)$-regular bipartite graphs with $|L|=|R|=p(p-1)$ implies that
for all $U\subseteq L$ and $V\subseteq R$,
\[
\Bigl|e(U,V)-\frac{(p-1)|U||V|}{|R|}\Bigr|
\ \le\
\sqrt p\,\sqrt{|U||V|}
\qquad\Longrightarrow\qquad
\Bigl|e(U,V)-\frac{|U||V|}{p}\Bigr|
\ \le\
\sqrt p\,\sqrt{|U||V|},
\]
which is exactly Lemma~\ref{lem:mixing}.
~\citep{HooryLinialWigderson2006}.
  \end{document}